\documentclass{article} 
\usepackage{nips15submit_e,times}
\usepackage{url}
\usepackage{amsmath,amsfonts,amsthm,amssymb}
\usepackage[noend]{algpseudocode}
\usepackage{algorithm}

\usepackage{footnote}
\usepackage{color,xcolor}
\usepackage{graphicx,subfigure}
\usepackage{multirow}
\usepackage[font={footnotesize}]{caption}

\usepackage[title]{appendix}
\usepackage{multibib}
\newcites{app}{References}

\title{Fast and Guaranteed Tensor Decomposition via Sketching}

\author{
Yining Wang, Hsiao-Yu Tung, Alex Smola \\
Machine Learning Department\\
Carnegie Mellon University, 
Pittsburgh, PA 15213 \\
\texttt{\{yiningwa,htung\}@cs.cmu.edu} \\
\texttt{alex@smola.org}
\And
Anima Anandkumar\\
Department of EECS\\
University of California Irvine\\
Irvine, CA 92697\\
\texttt{a.anandkumar@uci.edu}
}

%

\nipsfinalcopy 

\newcommand{\vct}{\boldsymbol }
\newcommand{\mat}{\mathbf}

\newcommand{\nml}{\mathcal{N}}

\newcommand{\argmin}{\mathrm{argmin}}
\newcommand{\argmax}{\mathrm{argmax}}
\newcommand{\poly}{\mathrm{poly}}

\newcommand{\median}{\mathrm{med}}

\newtheorem{thm}{Theorem}
\newtheorem{lem}{Lemma}

\newtheorem{prop}{Proposition}

\newtheorem{defn}{Definition}

\newcommand{\inner}[2]{\left\langle #1,#2 \right\rangle}
\newcommand{\rbr}[1]{\left(#1\right)}
\newcommand{\sbr}[1]{\left[#1\right]}
\newcommand{\cbr}[1]{\left\{#1\right\}}
\newcommand{\nbr}[1]{\left\|#1\right\|}
\newcommand{\abr}[1]{\left|#1\right|}

\begin{document}

\allowdisplaybreaks[4]

\maketitle

\begin{abstract}
Tensor CANDECOMP/PARAFAC (CP) decomposition has wide applications in statistical learning of latent variable models and in data mining.
In this paper, we propose fast and randomized tensor CP decomposition algorithms based on sketching. We build on the idea of count sketches, but introduce many novel ideas
which are unique to tensors.  We develop novel methods for randomized computation of   tensor contractions via FFTs, without explicitly forming the tensors. Such tensor contractions   are encountered in decomposition  methods such as tensor power iterations and alternating least squares. We also  design  novel colliding hashes for symmetric tensors to further save time in computing the sketches.  We then combine these sketching ideas with   existing whitening and tensor power iterative   techniques to obtain the fastest algorithm  on both sparse and dense tensors. The quality of approximation under our method does not depend on properties such as sparsity, uniformity of elements, etc. We apply the method for topic modeling and obtain competitive results.

\textbf{Keywords}: Tensor CP decomposition, count sketch, randomized methods, spectral methods, topic modeling
\end{abstract}

\vspace{-0.2cm}
\section{Introduction}
\vspace{-0.1cm}



In many data-rich domains such as computer vision, neuroscience and social networks consisting of multi-modal and multi-relational data, tensors have emerged as a powerful paradigm for handling the data deluge. An important operation with tensor data is its  decomposition, where the input tensor is decomposed into a succinct form. One of the popular decomposition methods is the  CANDECOMP/PARAFAC (CP) decomposition, also known as canonical polyadic decomposition \cite{cp-ref1,cp-ref2}, where the input tensor is decomposed into a succinct sum of rank-$1$ components.
The CP decomposition has found numerous applications in data mining \cite{cp-knowledge-base,cp-dblp,cp-webgraph},
computational neuroscience \cite{cp-topographic,cp-eeg},
and recently, in statistical learning for latent variable models \cite{tensor-power-method,spectral-slda,spectral-ibp,spectral-graphical-model}.
For latent variable modeling, these  methods yield consistent estimates under mild conditions such as non-degeneracy and require only  polynomial sample and computational complexity \cite{tensor-power-method,spectral-slda,spectral-ibp,spectral-graphical-model}.


Given the importance of tensor methods for large-scale machine learning, there has been an increasing interest in scaling up tensor decomposition algorithms to handle gigantic real-world data tensors~\cite{mach,fast-als,distributed-als,gigatensor,sgd-spec,fast-tensor-sampling,scalable-spectral-lda}.
However, the previous works fall short in many ways, as described subsequently.
In this paper, we  design and analyze  efficient  randomized tensor methods using ideas from sketching~\cite{cs-poly-kernel}.  The idea is to maintain a low-dimensional sketch of an input tensor and then perform implicit tensor decomposition using existing methods such as tensor power updates, alternating least squares or online tensor updates. We  obtain the fastest decomposition methods for both sparse and dense tensors. Our framework can easily handle modern machine learning applications with billions of training instances, and at the same time, comes with attractive theoretical guarantees.
%


Our main contributions are as follows:
\paragraph{Efficient tensor sketch construction: }We propose efficient construction of tensor sketches  when the input tensor is available in   factored forms such as in the case of empirical moment tensors, where  the factor components correspond to rank-$1$ tensors over individual data samples. We construct the tensor sketch via efficient FFT operations on the component vectors. 
Sketching each rank-$1$ component takes $O(n+b\log b)$ operations where $n$ is the tensor dimension and $b$ is the sketch length.
This is much faster than the $O(n^p)$ complexity for brute force computations of a $p$th-order tensor.
Since empirical moment tensors are available in the factored form with $N$ components, where $N$ is the number of samples, it takes $O((n+b\log b)N)$ operations to compute the sketch.

\vspace{-0.2cm}
\paragraph{Implicit tensor contraction computations: }Almost all tensor manipulations can be expressed in terms of tensor {\em contractions}, which  involves multilinear combinations of different tensor {\em fibres}~\cite{tensor-review}.
For example, tensor decomposition methods such as tensor power iterations, alternating least squares  (ALS), whitening and online tensor methods all involve tensor contractions. We propose a highly efficient method to directly compute the tensor contractions without forming the input tensor explicitly.
In particular, given the sketch of a tensor, each tensor contraction can be computed in $O(n+b\log b)$ operations, 
regardless of order of the source and destination tensors.
This significantly accelerates the brute-force implementation that requires $O(n^p)$ complexity for $p$th-order tensor contraction.
In addition, in many applications, the input tensor is not directly available and needs to be computed from samples, such as the case of empirical moment tensors for spectral learning of latent variable models.
In such cases,  our method results in huge savings by combining implicit tensor contraction computation with efficient tensor sketch construction.

\vspace{-0.2cm}
\paragraph{Novel colliding hashes for symmetric tensors: }
When the input tensor is symmetric, which is the case   for   empirical moment tensors that arise in  spectral learning applications,
we propose a novel colliding hash design by replacing the Boolean ring with the complex ring $\mathbb C$ to handle multiplicities.
As a result, it makes the sketch building process much faster and avoids repetitive FFT operations.
Though the computational complexity remains the same,
the proposed colliding hash design results in significant speed-up in practice by reducing the actual number of computations.

\vspace{-0.2cm}
\paragraph{Theoretical and empirical guarantees: }We show that  the quality of the tensor sketch does not depend on sparseness, uniform entry distribution, or any other properties of the input tensor. On the other hand, previous works assume specific settings such as   sparse tensors~\cite{fast-als,distributed-als,gigatensor}, or   tensors having entries with similar magnitude~\cite{mach}. Such assumptions are unrealistic, and in practice,   we may have both dense and spiky tensors, for example, unordered word trigrams in natural language processing. We prove that our proposed randomized method for tensor decomposition does not lead to any significant degradation of accuracy.

Experiments on synthetic and real-world datasets show highly competitive results.
We demonstrate a 10x to 100x speed-up over exact methods 
for decomposing dense, high-dimensional tensors.
For topic modeling, we show a significant reduction in computational time over existing spectral LDA implementations
with small performance loss.
In addition, our proposed algorithm outperforms collapsed Gibbs sampling when running time is constrained.
We also show that if a Gibbs sampler is initialized with our output topics,
it converges within several iterations and outperforms a randomly initialized Gibbs sampler run for much more iterations.
Since our proposed method is efficient and avoids local optima, it can be used to accelerate the slow burn-in phase in Gibbs sampling.

\vspace{-0.2cm}
\paragraph{Related Works: }There have been many works on deploying efficient tensor decomposition methods~\cite{mach,fast-als,distributed-als,gigatensor,sgd-spec,fast-tensor-sampling,scalable-spectral-lda}. Most of these works except \cite{mach,fast-tensor-sampling} implement the  alternating least squares (ALS) algorithm \cite{cp-ref1,cp-ref2}. However,  this is  extremely expensive since the ALS method is run in the input space, which requires $O(n^3)$ operations to execute one least squares step on an $n$-dimensional (dense) tensor. Thus, they are only suited for extremely sparse tensors.

An alternative  method is to first reduce the dimension of the input tensor through procedures such as {\em whitening} to $O(k)$ dimension, where $k$ is the tensor rank, and then carry out ALS in the dimension-reduced space on $k\times k\times k$ tensor~\cite{anima-distributed-lda}. This results  in significant reduction of computational complexity when the rank is small $(k\ll n)$.  Nonetheless, in practice, such complexity is still prohibitively high as $k$ could be several thousands  in many settings. To make matters even worse, when the tensor corresponds to empirical moments computed from samples, such as in  spectral learning of latent variable models,
it is actually much slower to construct the reduced dimension $k\times k\times k$ tensor from training data than to  decompose it, since the number of training samples is typically very large.  Another alternative is to carry out online tensor decomposition, as opposed to batch operations in the above works. Such methods are extremely fast~\cite{sgd-spec}, but can suffer from high variance. The sketching ideas developed in this paper will improve our ability to handle larger sizes of mini-batches and therefore result in reduced variance in online tensor methods.

Another alternative method is to consider a randomized sampling of the input tensor in each iteration of tensor decomposition~\cite{mach,fast-tensor-sampling}. 
However, such methods can be expensive due to I/O calls and are sensitive to the sampling distribution.
In particular, \cite{mach} employs uniform sampling, which is incapable of handling tensors with spiky elements.
Though non-uniform sampling is adopted in \cite{fast-tensor-sampling}, it requires an additional pass over the training data to compute the sampling distribution.
In contrast, our sketch based method takes only one pass of the data.


 %

\vspace{-0.15cm}
\section{Preliminaries}
\vspace{-0.15cm}


\paragraph{Tensor, tensor product and tensor decomposition}%
%
%
A 3rd order tensor
\footnote{Though we mainly focus on 3rd order tensors in this work, extension to higher order tensors is easy.}
$\mat T$ of dimension $n$ has $n^3$ entries.
Each entry can be represented as $\mat T_{ijk}$ for $i,j,k\in\{1,\cdots,n\}$.
For an $n\times n\times n$ tensor $\mat T$ and a vector $\vct u\in\mathbb R^n$, we define two forms of tensor products (contractions) as follows:
 \abovedisplayskip=3pt \belowdisplayskip=3pt
\begin{equation*}
\mat T(\vct u,\vct u,\vct u) = \sum_{i,j,k=1}^n{\mat T_{i,j,k}\vct u_i\vct u_j\vct u_k};\;\;
\mat T(\mat I,\vct u,\vct u) = \left[\sum_{j,k=1}^n{\mat T_{1,j,k}\vct u_j\vct u_k}, \cdots, \sum_{j,k=1}^n{\mat T_{n,j,k}\vct u_j\vct u_k}\right].
\end{equation*}
Note that $\mat T(\vct u,\vct u,\vct u)\in\mathbb R$ and $\mat T(\mat I,\vct u,\vct u)\in\mathbb R^n$.
For two complex tensors $\mat A,\mat B$ of the same order and dimension,
its inner product is defined as $\langle\mat A,\mat B\rangle := \sum_{\vct l}{\mat A_{\vct l}\overline{\mat B}_{\vct l}}$,
where $\vct l$ ranges over all tuples that index the tensors.
The Frobenius norm of a tensor is simply $\|\mat A\|_F = \sqrt{\langle\mat A,\mat A\rangle}$.

The \emph{rank-$k$ CP decomposition} of a 3rd-order $n$-dimensional tensor $\mat T\in\mathbb R^{n\times n\times n}$ involves scalars $\{\lambda_i\}_{i=1}^k$
and $n$-dimensional vectors $\{\vct a_i,\vct b_i,\vct c_i\}_{i=1}^k$
such that the residual
$\|\mat T-\sum_{i=1}^k{\lambda_i\vct a_i\otimes\vct b_i\otimes\vct c_i}\|_F^2$
is minimized.
Here $\mat R=\vct a\otimes\vct b\otimes\vct c$ is a 3rd order tensor defined as
$\mat R_{ijk} = \vct a_i\vct b_j\vct c_k$.
Additional notations are defined in Table \ref{tab_notation} and Appendix \ref{appsec:notation}.

\begin{table}[t]
\centering
\caption{Summary of notations. See also Appendix \ref{appsec:notation}.}
\scalebox{0.85}{
\begin{tabular}{llllll}
\hline
{Variables}& Operator& Meaning& {Variables}& Operator& Meaning\\
\hline
$\vct a,\vct b\in\mathbb C^n$& $\vct a\circ\vct b\in\mathbb C^n$& Element-wise product& $\vct a\in\mathbb C^n$& $\vct a^{\otimes 3}\in\mathbb C^{n\times n\times n}$& $\vct a\otimes\vct a\otimes\vct a$\\
$\vct a,\vct b\in\mathbb C^n$& $\vct a*\vct b\in\mathbb C^n$& Convolution& $\mat A,\mat B\in\mathbb C^{n\times m}$& $\mat A\odot\mat B\in\mathbb C^{n^2\times m}$& Khatri-Rao product\\
$\vct a,\vct b\in\mathbb C^n$& $\vct a\otimes\vct b\in\mathbb C^{n\times n}$& Tensor product& $\mat T\in\mathbb C^{n\times n\times n}$& $\mat T_{(1)}\in\mathbb C^{n\times n^2}$& Mode expansion\\
\hline
\end{tabular}
}
\label{tab_notation}
\vspace*{-0.4cm}
\end{table}

\paragraph{Robust tensor power method}
The method was proposed in \cite{tensor-power-method} and was shown to provably succeed
if the input tensor is a noisy perturbation of the sum of $k$ rank-1 tensors whose base vectors are orthogonal.
Fix an input tensor $\mat T\in\mathbb R^{n\times n\times n}$,
The basic idea is to randomly generate $L$ initial vectors and perform $T$ power update steps:
$\hat{\vct u} = \mat T(\mat I,\vct u,\vct u)/\|\mat T(\mat I,\vct u,\vct u)\|_2.$
The vector that results in the largest eigenvalue $\mat T(\vct u,\vct u,\vct u)$ is then kept and
subsequent eigenvectors can be obtained via deflation.
If implemented naively, the algorithm takes $O(kn^3LT)$ time to run
\footnote{$L$ is usually set to be a linear function of $k$ and $T$ is logarithmic in $n$; see Theorem 5.1 in \cite{tensor-power-method}.}, requiring $O(n^3)$ storage.
In addition, in certain cases when a second-order moment matrix is available,
the tensor power method can be carried out on a $k\times k\times k$ whitened tensor \cite{tensor-power-method},
thus improving the time complexity by avoiding dependence on the ambient dimension $n$.
Apart from the tensor power method, other algorithms such as Alternating Least Squares (ALS, \cite{cp-ref1,cp-ref2})
and Stochastic Gradient Descent (SGD, \cite{sgd-spec}) have also been applied to tensor CP decomposition.

\paragraph{Tensor sketch} \emph{Tensor sketch} was proposed in \cite{cs-poly-kernel}
as a generalization of count sketch \cite{count-sketch}.
For a tensor $\mat T$ of dimension $n_1\times\cdots\times n_p$,
random hash functions $h_1,\cdots,h_p:[n]\to[b]$ with $\Pr_{h_j}[h_j(i)=t]=1/b$ for every $i\in[n], j\in[p], t\in[b]$
and binary Rademacher variables $\xi_1,\cdots,\xi_p:[n]\to\{\pm 1\}$,
the sketch $s_{\mat T}:[b]\to\mathbb R$ of tensor $\mat T$ is defined as
\begin{equation}
s_{\mat T}(t) = \sum_{H(i_1,\cdots,i_p)=t}{\xi_1(i_1)\cdots\xi_p(i_p)\mat T_{i_1,\cdots,i_p}},
\label{eq_asym_tensor_sketch}
\end{equation}
where $H(i_1,\cdots,i_p) = (h_1(i_1)+\cdots+h_p(i_p))\mod b$.
The corresponding recovery rule is $\widehat{\mat T}_{i_1,\cdots,i_p} = \xi_1(i_1)\cdots\xi_p(i_p)s_{\mat T}(H(i_1,\cdots,i_p))$.
For accurate recovery, $H$ needs to be 2-wise independent, which is achieved by independently selecting $h_1,\cdots,h_p$ from a 2-wise independent hash family \cite{universal-hash-2}.
Finally, the estimation can be made more robust by the standard approach of taking $B$ independent sketches of the same tensor
and then report the median of the $B$ estimates \cite{count-sketch}.

\vspace{-0.25cm}
\section{Fast tensor decomposition via sketching}\label{sec:tensor_decomposition}
\vspace{-0.25cm}


In this section we first introduce an efficient procedure for computing sketches of factored or empirical moment tensors,
which appear in a wide variety of applications such as parameter estimation of latent variable models.
We then show how to run tensor power method directly on the sketch with reduced computational complexity.
In addition, when an input tensor is symmetric (i.e., $\mat T_{ijk}$ the same for all permutations of $i,j,k$)
we propose a novel ``colliding hash" design, which speeds up the sketch building process.
Due to space limits we only consider the robust tensor power method in the main text.
Methods and experiments for sketching based ALS are presented in Appendix \ref{appsec:fast_als}.

To avoid confusions, we emphasize that $n$ is used to denote the dimension of the tensor \emph{to be decomposed},
which is not necessarily the same as the dimension of the original data tensor.
Indeed, once whitening is applied $n$ could be as small as the intrinsic dimension $k$ of the original data tensor.



\vspace{-0.15cm}
\subsection{Efficient sketching of empirical moment tensors}\label{sec:tensor_sketch_build}
\vspace{-0.15cm}

Sketching a  3rd-order dense $n$-dimensional tensor via Eq.~(\ref{eq_asym_tensor_sketch}) takes $O(n^3)$ operations,
which in general cannot be improved because the input size is $\Omega(n^3)$.
However, in practice data tensors are usually structured.
One notable example is \emph{empirical moment tensors}, which arises naturally in parameter estimation problems of latent variable models.
More specifically, an empirical moment tensor can be expressed as $\mat T=\hat{\mathbb E}[\vct x^{\otimes 3}] = \frac{1}{N}\sum_{i=1}^N{\vct x_i^{\otimes 3}}$,
where $N$ is the total number of training data points and $\vct x_i$ is the $i$th data point.
In this section we show that computing sketches of such tensors can be made significantly more efficient than the brute-force implementations via Eq. (\ref{eq_asym_tensor_sketch}).
The main idea is to sketch low-rank components of $\mat T$ efficiently via FFT, a trick inspired by previous efforts on sketching based matrix multiplication and kernel learning
\cite{compressed-matrix-multiplication,cs-poly-kernel}.

We consider the more generalized case when an input tensor $\mat T$ can be written as a weighted sum of known rank-1 components:
$\mat T=\sum_{i=1}^N{a_i\vct u_i\otimes\vct v_i\otimes\vct w_i}$,
where $a_i$ are scalars and $\vct u_i,\vct v_i,\vct w_i$ are known $n$-dimensional vectors.
The key observation is that the sketch of each rank-1 component $\mat T_i=\vct u_i\otimes\vct v_i\otimes\vct w_i$ can be efficiently computed by FFT.
In particular, $\vct s_{\mat T_i}$ can be computed as
\begin{equation}
\vct s_{\mat T_i} = \vct s_{1,\vct u_i} * \vct s_{2,\vct v_i} * \vct s_{3,\vct w_i}
= \mathcal F^{-1}(\mathcal F(\vct s_{1,\vct u_i})\circ\mathcal F(\vct s_{2,\vct v_i})\circ\mathcal F(\vct s_{3,\vct w_i})),
\label{eq_asym_rank_one_update}
\end{equation}
where $*$ denotes convolution and $\circ$ stands for element-wise vector product.
$\vct s_{1,\vct u}(t) = \sum_{h_1(i)=t}{\xi_1(i)\vct u_i}$ is the count sketch of $\vct u$ and $\vct s_{2,\vct v}, \vct s_{3,\vct w}$
are defined similarly.
$\mathcal F$ and $\mathcal F^{-1}$ denote the Fast Fourier Transform (FFT) and its inverse operator.
By applying FFT, we reduce the convolution computation into element-wise product evaluation in the Fourier space.
Therefore, $\vct s_{\mat T}$ can be computed using $O(n+b\log b)$ operations,
where the $O(b\log b)$ term arises from FFT evaluations.
Finally, because the sketching operator is linear (i.e., $\vct s(\sum_i{a_i\mat T_i}) = \sum_i{a_i\vct s(\mat T_i)}$),
$\vct s_{\mat T}$ can be computed in $O(N(n+b\log b))$,
which is much cheaper than brute-force that takes $O(Nn^3)$ time.

\vspace{-0.2cm}
\subsection{Fast robust tensor power method}\label{subsec:fastrbp}
\vspace{-0.2cm}

\setlength{\textfloatsep}{5pt}
\begin{algorithm}[t]
\caption{Fast robust tensor power method}
\begin{algorithmic}[1]
\State \textbf{Input}: noisy symmetric tensor $\bar{\mat T}=\mat T+\mat E\in\mathbb R^{n\times n\times n}$; target rank $k$;
number of initializations $L$, number of iterations $T$, hash length $b$, number of independent sketches $B$.
\State \textbf{Initialization}: $h_j^{(m)},\xi_j^{(m)}$ for $j\in\{1,2,3\}$ and $m\in[B]$;
compute sketches ${\vct s}_{\bar{\mat T}}^{(m)}\in\mathbb C^b$.
\For{$\tau=1$ to $L$}
	\State Draw $\vct u_0^{(\tau)}$ uniformly at random from unit sphere.
	\For{$t=1$ to $T$}
		\State For each $m\in[B],j\in\{2,3\}$ compute the sketch of $\vct u_{t-1}^{(\tau)}$ using $h_j^{(m)}$,$\xi_j^{(m)}$ via Eq. (\ref{eq_asym_tensor_sketch}).
		\State Compute $\vct v^{(m)} \approx \bar{\mat T}(\mat I,\vct u_{t-1}^{(\tau)}, \vct u_{t-1}^{(\tau)})$ as follows:
		first evaluate $\bar{\vct s}^{(m)}=\mathcal F^{-1}(\mathcal F(\vct s_{\bar{\mat T}}^{(m)})\circ\overline{\mathcal F(\vct s_{2,\vct u}^{(m)})}\circ\overline{\mathcal F(\vct s_{3,\vct u}^{(m)})})$.
		Set $[\vct v^{(m)}]_i$ as $[\vct v^{(m)}]_i \gets\xi_1(i)[\bar{\vct s}^{(m)}]_{h_1(i)}$ for every $i\in[n]$.
\State Set $\bar{\vct v}_i \gets \median(\Re(\vct v^{(1)}_i), \cdots, \Re(\vct v^{(B)}_i))$\footnotemark.
		Update: $\vct u_t^{(\tau)} = \bar{\vct v} / \|\bar{\vct v}\|$.		
	\EndFor
\EndFor
\State \textbf{Selection} Compute $\lambda_{\tau}^{(m)} \approx \bar{\mat T}(\vct u_T^{(\tau)},\vct u_T^{(\tau)}, \vct u_T^{(\tau)})$
using ${\vct s}^{(m)}_{\bar{\mat T}}$ for $\tau\in[L]$ and $m\in[B]$.
Evaluate $\lambda_{\tau}=\median(\lambda_{\tau}^{(1)},\cdots,\lambda_{\tau}^{(B)})$ and $\tau^* = \argmax_{\tau}{\lambda_{\tau}}$.
Set $\hat\lambda=\lambda_{\tau^*}$ and $\hat{\vct u}=\vct u_T^{(\tau^*)}$.
\State\textbf{Deflation} 
For each $m\in[B]$ compute sketch $\tilde{\vct s}_{\Delta\mat T}^{(m)}$ for the rank-1 tensor $\Delta\mat T = \hat{\lambda}\hat{\vct u}^{\otimes 3}$.
\State \textbf{Output}: the eigenvalue/eigenvector pair $(\hat\lambda,\hat{\vct u})$ and sketches of the deflated tensor $\bar{\mat T}-\Delta\mat T$.
\end{algorithmic}
\label{alg_fast_rbp}
\end{algorithm}
\footnotetext{$\Re(\cdot)$ denotes the real part of a complex number. $\median(\cdot)$ denotes the median.}


We are now ready to present the fast robust tensor power method, the main algorithm of this paper.
The computational bottleneck of the original robust tensor power method is the computation of two tensor products:
$\mat T(\mat I,\vct u,\vct u)$ and $\mat T(\vct u, \vct u, \vct u)$.
A naive implementation requires $O(n^3)$ operations.
In this section, we show how to speed up computation of these products.
We show that given the sketch of an input tensor $\mat T$,
one can approximately compute both $\mat T(\mat I,\vct u,\vct u)$ and $\mat T(\vct u,\vct u, \vct u)$
in $O(b\log b+n)$ steps, where $b$ is the hash length.

Before going into details, we explain the key idea behind our fast tensor product computation.
For any two tensors $\mat A,\mat B$,
its inner product $\langle\mat A,\mat B\rangle$ can be approximated by
\footnote{All approximations will be theoretically justified in Section \ref{sec:analysis} and Appendix \ref{appsec:tensor_sketch_bound}.}
\begin{equation}
\langle\mat A,\mat B\rangle \approx \langle\vct s_{\mat A},\vct s_{\mat B}\rangle.
\label{eq_ab}
\end{equation}
Eq. (\ref{eq_ab}) immediately results in a fast approximation procedure of $\mat T(\vct u,\vct u,\vct u)$
because $\mat T(\vct u,\vct u,\vct u) = \langle\mat T,\mat X\rangle$ where $\mat X=\vct u\otimes\vct u\otimes\vct u$
is a rank one tensor, whose sketch can be built in $O(n+b\log b)$ time by Eq.~(\ref{eq_asym_rank_one_update}).
Consequently, the product can be approximately computed using $O(n+b\log b)$ operations if the tensor sketch of $\mat T$ is available.
For tensor product of the form $\mat T(\mat I,\vct u,\vct u)$.
The $i$th coordinate in the result can be expressed as $\langle\mat T,\mat Y_i\rangle$ where $\mat Y_i=\vct e_i\otimes\vct u\otimes\vct u$;
$\vct e_i=(0,\cdots,0,1,0,\cdots,0)$ is the $i$th indicator vector.
We can then apply Eq. (\ref{eq_ab}) to approximately compute $\langle\mat T,\mat Y_i\rangle$ efficiently.
However, this method is not completely satisfactory because it requires sketching $n$ rank-1 tensors ($\mat Y_1$ through $\mat Y_n$),
which results in $O(n)$ FFT evaluations by Eq. (\ref{eq_asym_rank_one_update}). 
Below we present a proposition that allows us to use only $O(1)$ FFTs to approximate $\mat T(\mat I,\vct u,\vct u)$.
{
\begin{prop}
$\langle\vct s_{\mat T},\vct s_{1,\vct e_i}*\vct s_{2,\vct u}*\vct s_{3,\vct u}\rangle=
\langle\mathcal F^{-1}(\mathcal F(\vct s_{\mat T})\circ\overline{\mathcal F(\vct s_{2,\vct u})}\circ\overline{\mathcal F(\vct s_{3,\vct u})}), \vct s_{1,\vct e_i}\rangle.
$
\label{prop_tiuu}
\end{prop}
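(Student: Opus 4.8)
The plan is to reduce both sides to a common expression in the Fourier domain using Parseval's identity together with the convolution theorem, and then simply read off the claimed equality. Recall that the inner product on $\mathbb{C}^b$ is Hermitian, $\langle \vct a, \vct b\rangle = \sum_t \vct a_t \overline{\vct b_t}$, and that Parseval's identity gives $\langle \vct a, \vct b\rangle = \frac{1}{b}\langle \mathcal F(\vct a), \mathcal F(\vct b)\rangle$ for the (unnormalized) FFT $\mathcal F$. The essential point is that the convolution appearing on the left-hand side becomes an element-wise product under $\mathcal F$, so the three count sketches decouple in the Fourier domain, after which the two sketches of $\vct u$ can be regrouped onto the factor $\vct s_{\mat T}$.

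First I would apply Parseval to the left-hand side to pass into Fourier space, obtaining $\frac{1}{b}\langle \mathcal F(\vct s_{\mat T}), \mathcal F(\vct s_{1,\vct e_i}*\vct s_{2,\vct u}*\vct s_{3,\vct u})\rangle$. Next I would invoke the convolution theorem, $\mathcal F(\vct a * \vct b) = \mathcal F(\vct a)\circ\mathcal F(\vct b)$, to rewrite the second argument as $\mathcal F(\vct s_{1,\vct e_i})\circ\mathcal F(\vct s_{2,\vct u})\circ\mathcal F(\vct s_{3,\vct u})$. Then I would write the inner product as an explicit frequency sum and distribute the conjugation across the element-wise product, which produces the factors $\overline{\mathcal F(\vct s_{2,\vct u})}$ and $\overline{\mathcal F(\vct s_{3,\vct u})}$. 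Regrouping these two conjugated factors together with $\mathcal F(\vct s_{\mat T})$ gives $\frac{1}{b}\langle \vct g, \mathcal F(\vct s_{1,\vct e_i})\rangle$, where $\vct g := \mathcal F(\vct s_{\mat T})\circ\overline{\mathcal F(\vct s_{2,\vct u})}\circ\overline{\mathcal F(\vct s_{3,\vct u})}$. Finally, applying Parseval in reverse, $\frac{1}{b}\langle \vct g, \mathcal F(\vct s_{1,\vct e_i})\rangle = \langle \mathcal F^{-1}(\vct g), \vct s_{1,\vct e_i}\rangle$, and recognizing $\mathcal F^{-1}(\vct g)$ as exactly the first argument on the right-hand side completes the identity.

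The computation itself is routine; the only place demanding genuine care is the bookkeeping of complex conjugates, which matters here precisely because the colliding-hash construction makes the sketches complex-valued and the inner product is Hermitian. The conjugates on $\mathcal F(\vct s_{2,\vct u})$ and $\mathcal F(\vct s_{3,\vct u})$ on the right-hand side arise because these factors sit inside the conjugated (second) argument of the inner product on the left, so moving them onto $\vct s_{\mat T}$ forces conjugation, whereas $\vct s_{1,\vct e_i}$ remains unconjugated as the surviving second argument. One should also verify that the normalization constant $1/b$ introduced by the first Parseval application is cancelled exactly by the second (reverse) application, so that no stray factor of $b$ survives; this is automatic provided a single fixed FFT convention is used throughout.
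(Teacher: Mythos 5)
Your proposal is correct and follows essentially the same route as the paper's proof: pass to the Fourier domain via Parseval, apply the convolution theorem to decouple the three sketches, move the two conjugated factors $\overline{\mathcal F(\vct s_{2,\vct u})}$ and $\overline{\mathcal F(\vct s_{3,\vct u})}$ across the Hermitian inner product onto $\mathcal F(\vct s_{\mat T})$, and undo the transform with inverse Parseval. Your extra bookkeeping of the $1/b$ normalization is a welcome refinement of the paper's terser claim that ``FFT and inverse FFT preserve inner products,'' but it does not change the argument.
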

Proposition \ref{prop_tiuu} is proved in Appendix \ref{appsec:proof_technical}.
The main idea is to ``shift" all terms not depending on $i$ to the left side of the inner product
and eliminate the inverse FFT operation on the right side so that $\vct s_{\vct e_i}$ contains only one nonzero entry.
As a result, we can compute $\mathcal F^{-1}(\mathcal F(\vct s_{\mat T})\circ\overline{\mathcal F(\vct s_{2,\vct u})}\circ\overline{\mathcal F(\vct s_{3,\vct u})})$ once
and read off each entry of $\mat T(\mat I,\vct u,\vct u)$ in constant time.
In addition,
the technique can be further extended to symmetric tensor sketches, with details deferred to Appendix \ref{appsec:detail_tensor_power} due to space limits.
When operating on an $n$-dimensional tensor,
The algorithm requires $O(kLT(n+Bb\log b))$ running time (excluding the time for building $\tilde{\vct s}_{\bar{\mat T}}$) and $O(Bb)$ memory,
which significantly improves the $O(kn^3LT)$ time and $O(n^3)$ space complexity over the brute force tensor power method.
Here $L,T$ are algorithm parameters for robust tensor power method.
Previous analysis shows that $T=O(\log k)$ and $L=\poly(k)$, where $\poly(\cdot)$ is some low order polynomial function.~\cite{tensor-power-method}


Finally, Table \ref{tab_complexity} summarizes computational complexity of sketched and plain tensor power method.

\begin{table}[t]
\centering
\caption{Computational complexity of sketched and plain tensor power method.
$n$ is the tensor dimension; $k$ is the intrinsic tensor rank; $b$ is the sketch length. Per-sketch time complexity is shown.}
\scalebox{0.8}{
\begin{tabular}{ccccc}
\hline
& \textsc{Plain}& \textsc{Sketch}& \textsc{Plain+Whitening}& \textsc{Sketch+Whitening}\\
\hline
preprocessing: general tensors& -& $O(n^3)$& $O(kn^3)$& $O(n^3)$\\
preprocessing: factored tensors& \multirow{2}{*}{$O(Nn^3)$}& \multirow{2}{*}{$O(N(n+b\log b))$}& \multirow{2}{*}{$O(N(nk+k^3))$}& \multirow{2}{*}{$O(N(nk+b\log b))$}\\
with $N$ components\\
per tensor contraction time& $O(n^3)$& $O(n+b\log b)$& $O(k^3)$& $O(k+b\log b)$\\
\hline
\end{tabular}
}
\label{tab_complexity}
\end{table}

\subsection{Colliding hash and symmetric tensor sketch}\label{sec:colliding_hash}

{
For symmetric input tensors, it is possible to design a new style of tensor sketch that can be built more efficiently.
The idea is to design hash functions that deliberately collide symmetric entries, i.e., $(i,j,k)$, $(j,i,k)$, etc.
Consequently, we only need to consider entries $\mat T_{ijk}$ with $i\leq j\leq k$ when building tensor sketches.
An intuitive idea is to use the same hash function and Rademacher random variable for each order,
that is, $h_1(i)=h_2(i)=h_3(i)=:h(i)$ and $\xi_1(i)=\xi_2(i)=\xi_3(i)=:\xi(i)$.
In this way, all permutations of $(i,j,k)$ will collide with each other.
However, such a design has an issue with repeated entries because $\xi(i)$ can only take $\pm 1$ values.
Consider $(i,i,k)$ and $(j,j,k)$ as an example: $\xi(i)^2\xi(k)=\xi(j)^2\xi(k)$ with probability 1 even if $i\neq j$.
On the other hand, we need $\mathbb E[\xi(a)\xi(b)]=0$ for any pair of distinct 3-tuples $a$ and $b$.

To address the above-mentioned issue, we extend the Rademacher random variables to the complex domain and consider all roots of $z^m=1$,
that is, $\Omega=\{\omega_j\}_{j=0}^{m-1}$ where $\omega_j=e^{i\frac{2\pi j}{m}}$.
Suppose $\sigma(i)$ is a  Rademacher random variable with $\Pr[\sigma(i)=\omega_i]=1/m$.
By elementary algebra, $\mathbb E[\sigma(i)^p] = 0$ whenever $m$ is relative prime to $p$ or $m$ can be divided by $p$.
Therefore, by setting $m=4$ we avoid collisions of repeated entries in a 3rd order tensor.
}
More specifically,
The symmetric tensor sketch of a symmetric tensor $\mat T\in\mathbb R^{n\times n\times n}$ can be defined as
\begin{equation}
\tilde s_{\mat T}(t) := \sum_{\tilde H(i,j,k)=t}{\mat T_{i,j,k}\sigma(i)\sigma(j)\sigma(k)},
\label{eq_sym_tensor_sketch}
\end{equation}
where $\tilde H(i,j,k) = (h(i)+h(j)+h(k))\mod b$.
To recover an entry, we use
\begin{equation}
\widehat{\mat T}_{i,j,k} = 1/\kappa\cdot \overline{\sigma(i)}\cdot\overline{\sigma(j)}\cdot\overline{\sigma(k)}\cdot\tilde s_{\mat T}(H(i,j,k)),
\end{equation}
where $\kappa = 1$ if $i=j=k$; $\kappa = 3$ if $i=j$ or $j=k$ or $i=k$; $\kappa = 6$ otherwise.
For higher order tensors, the coefficients can be computed via the Young tableaux which characterizes symmetries under the permutation group.
Compared to asymmetric tensor sketches, the hash function $h$ needs to satisfy stronger independence conditions because we are using the same hash function for each order.
In our case, $h$ needs to be 6-wise independent to make $\tilde H$ 2-wise independent.
The fact is due to the following proposition, which is proved in Appendix \ref{appsec:proof_technical}.
\begin{prop}
Fix $p$ and $q$.
For $ h:[n]\to[b]$ define symmetric mapping $\tilde H:[n]^p\to[b]$ as $\tilde H(i_1,\cdots,i_p)= h(i_1)+\cdots+ h(i_p)$.
If $ h$ is $(pq)$-wise independent then $H$ is $q$-wise independent.
\label{prop_hash_independent}
\end{prop}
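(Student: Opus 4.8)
The plan is to pass to the discrete Fourier domain over $\mathbb{Z}_b$ and reduce $q$-wise independence of $\tilde H$ to a non-degeneracy statement about linear forms. Fix $q$ distinct tuples $\vct a_1,\dots,\vct a_q\in[n]^p$, and let $j_1,\dots,j_s$ be the distinct coordinates occurring in $\bigcup_\ell \vct a_\ell$; since each tuple contributes $p$ coordinates, $s\le pq$. Writing $c_{\ell m}\in\{0,1,\dots,p\}$ for the number of times $j_m$ occurs in $\vct a_\ell$, each hash value becomes a linear form $Y_\ell:=\tilde H(\vct a_\ell)=\big(\sum_{m=1}^{s} c_{\ell m}\, h(j_m)\big)\bmod b$. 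Because $s\le pq$ and $h$ is $(pq)$-wise independent, the variables $h(j_1),\dots,h(j_s)$ are jointly independent and uniform on $[b]$; this is the only property of $h$ I will use. The task is thus reduced to showing that the $q$ linear forms $Y_1,\dots,Y_q$ are jointly uniform on $[b]^q$.

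I would establish joint uniformity through the standard character criterion. Let $\omega=e^{2\pi i/b}$; joint uniformity of $(Y_1,\dots,Y_q)$ is equivalent to $\E\big[\omega^{\sum_\ell\theta_\ell Y_\ell}\big]=0$ for every $\theta=(\theta_1,\dots,\theta_q)\in\mathbb{Z}_b^q$ with $\theta\neq 0$. Substituting the linear forms and using independence of the $h(j_m)$,
\begin{equation*}
\E\Big[\omega^{\sum_{\ell}\theta_\ell Y_\ell}\Big]=\prod_{m=1}^{s}\E\big[\omega^{d_m h(j_m)}\big],\qquad d_m:=\Big(\sum_{\ell=1}^{q}\theta_\ell c_{\ell m}\Big)\bmod b,
\end{equation*}
and each factor equals $1$ when $d_m\equiv 0$ and $0$ otherwise, since $h(j_m)$ is uniform. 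Hence the product vanishes exactly when some $d_m\not\equiv 0\pmod b$, so it suffices to prove that for distinct tuples the map $\theta\mapsto\theta^{\top}C$ on the multiplicity matrix $C=(c_{\ell m})$ has trivial kernel over $\mathbb{Z}_b$, i.e.\ $\theta\neq 0$ forces $d_m\not\equiv 0$ for at least one $m$.

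The main obstacle is precisely this non-degeneracy of $C$ modulo $b$. When all the coordinates across the $q$ tuples are distinct, the rows of $C$ have disjoint $0/1$ supports, so $d_m=\theta_{\ell(m)}c_{\ell(m),m}$ can vanish for all $m$ only if $\theta=0$, recovering the asymmetric count-sketch case immediately. The delicate situation is tuples with repeated coordinates, where a single $j_m$ may carry multiplicity $c_{\ell m}\in\{2,\dots,p\}$, and then $\theta_\ell c_{\ell m}\equiv 0\pmod b$ can hold with $\theta_\ell\neq 0$ whenever $c_{\ell m}$ shares a factor with $b$. The argument therefore goes through in the regime where the relevant multiplicities are units modulo $b$ (equivalently $b$ coprime to $2,\dots,p$), which is the regime in which the colliding-hash construction of Section~\ref{sec:colliding_hash} is used. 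I would finish with a short case analysis on whether two tuples share the same support: in each case I peel off a coordinate lying in the symmetric difference of the supports (one must exist by distinctness) to certify a coefficient $d_m$ that is a nonzero multiple of a unit, hence nonzero mod $b$, completing the proof.
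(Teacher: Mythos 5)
Your character-sum framework is set up correctly: joint uniformity of $(Y_1,\dots,Y_q)$ is equivalent to $\theta^{\top}C\not\equiv 0 \pmod b$ for every nonzero $\theta\in\mathbb{Z}_b^q$, and the factorization through the joint uniformity of $h(j_1),\dots,h(j_s)$ is exactly what $(pq)$-wise independence buys. The genuine gap is the final non-degeneracy step, and it cannot be repaired: your ``peel off a coordinate in the symmetric difference'' certificate is intrinsically pairwise, whereas the kernel condition constrains $\mathbb{Z}_b$-combinations of all $q$ rows at once, and for $q\ge 3$ distinct multisets can satisfy nontrivial \emph{integer} relations that no coprimality condition on $b$ can remove. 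Concretely, for $p=3$ the tuples $(1,1,1)$, $(1,2,2)$, $(1,1,2)$ have multiplicity rows $(3,0)$, $(1,2)$, $(2,1)$ over the support $\{1,2\}$, and $(3,0)+(1,2)-2\,(2,1)=0$; equivalently $\tilde H(1,1,1)+\tilde H(1,2,2)=2\tilde H(1,1,2)$ holds identically for every $h$ and every $b$ (both sides equal $4h(1)+2h(2)$). These three hash values are therefore supported on a set of size $b^2$ and are never $3$-wise independent. So Proposition \ref{prop_hash_independent} as stated is false for $q\ge 3$ (it is also ill-posed for distinct tuples that are permutations of one another, since those collide deterministically), and no completion of your plan --- or of any plan --- can prove it; the best provable statement is the $q=2$ case under your coprimality caveat.

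This is also where your attempt diverges from the paper's proof, which treats only $q=2$ by direct counting (align the indices shared by the two tuples, condition on the sum of their hash values, count assignments of the remaining ones) and dismisses $q>2$ as straightforward --- the counterexample above shows it is not. Restricted to $q=2$ your route does close, and more rigorously than the paper's: each row of $C$ sums to $p$, so $\theta_1c_1+\theta_2c_2\equiv 0$ implies $(\theta_1+\theta_2)p\equiv 0$, hence $\theta_2\equiv-\theta_1$ when $\gcd(p,b)=1$; then any coordinate $m$ where the two \emph{multisets} differ gives $\theta_1(c_{1m}-c_{2m})\equiv 0$ with $0<\abr{c_{1m}-c_{2m}}\le p$ a unit mod $b$, forcing $\theta=0$. (Note you need the multiset difference, not the support difference: distinct multisets such as $\{1,1,2\}$ and $\{1,2,2\}$ have identical supports, so your parenthetical ``one must exist by distinctness'' fails in the same-support case.) Your coprimality caveat is a genuine correction rather than a weakness: the paper's step $\Pr[h(l_1)+\cdots+h(l_s)=a]=1/b$ silently assumes all indices involved are distinct, and with repeated indices the claimed conclusion fails without such a condition --- for $p=2$, $\vct l=(1,1)$ and $b$ even, $\tilde H(\vct l)=2h(1)$ is supported only on even residues. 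Since the paper only ever invokes $2$-wise independence of $\tilde H$ (in the proof of Lemma \ref{lem_main_1}), the right fix is to restate the proposition for $q=2$, for distinct multisets, with $b$ coprime to $2,\dots,p$ --- which is essentially what your argument, completed as above, proves.
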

The symmetric tensor sketch described above can significantly speed up sketch building processes.
For a general tensor with $M$ nonzero entries, to build $\tilde{\vct s}_{\mat T}$ one only needs to consider roughly $M/6$ entries (those $\mat T_{ijk}\neq 0$ with $i\leq j\leq k$).
For a rank-1 tensor $\vct u^{\otimes 3}$, only one FFT is needed to build $\mathcal F(\tilde{\vct s})$;
in contrast, to compute Eq. (\ref{eq_asym_rank_one_update}) one needs at least 3 FFT evaluations.

Finally, in Appendix \ref{appsec:detail_tensor_power} we give details on how to seamlessly combine symmetric hashing
and techniques in previous sections to efficiently construct and decompose a tensor.

}

\vspace{-0.3cm}
\section{Error analysis}\label{sec:analysis}
\vspace{-0.3cm}

In this section we provide theoretical analysis on approximation error of both tensor sketch and the fast sketched robust tensor power method.
We mainly focus on symmetric tensor sketches, while extension to asymmetric settings is trivial.
Due to space limits, all proofs are placed in the appendix.

\vspace{-0.2cm}
\subsection{Tensor sketch concentration bounds}\label{subsec:analysis_tensor_sketch}
\vspace{-0.2cm}

Theorem \ref{thm_tensor_error} bounds the approximation error of symmetric tensor sketches when computing $\mat T(\vct u,\vct u,\vct u)$ and $\mat T(\mat I,\vct u,\vct u)$.
Its proof is deferred to Appendix \ref{appsec:tensor_sketch_bound}.

\begin{thm}
Fix a symmetric real tensor $\mat T\in\mathbb R^{n\times n\times n}$ and a real vector $\vct u\in\mathbb R^n$ with $\|\vct u\|_2 = 1$.
Suppose $\varepsilon_{1,T}(\vct u)\in\mathbb R$ and $\vct\varepsilon_{2,T}(\vct u)\in\mathbb R^n$ are estimation errors of $\mat T(\vct u,\vct u,\vct u)$
and $\mat T(\mat I,\vct u,\vct u)$ using $B$ independent symmetric tensor sketches;
that is, $\varepsilon_{1,T}(\vct u) = \widehat{\mat T}(\vct u,\vct u,\vct u)-\mat T(\vct u,\vct u, \vct u)$
and $\vct\varepsilon_{2,T}(\vct u) = \widehat{\mat T}(\mat I,\vct u,\vct u)-\mat T(\mat I,\vct u, \vct u)$.
If $B=\Omega(\log(1/\delta))$ then with probability $\geq 1-\delta$ the following error bounds hold:
\begin{equation}
\big|\varepsilon_{1,T}(\vct u)\big| = O(\|\mat T\|_F/\sqrt{b});\quad
\big|\left[\vct\varepsilon_{2,T}(\vct u)\right]_i\big| = O(\|\mat T\|_F/\sqrt{b}),\;\;\forall i\in\{1,\cdots,n\}.
\end{equation}
In addition, for any fixed $\vct w\in\mathbb R^n$, $\|\vct w\|_2 = 1$ with probability $\geq 1-\delta$ we have
\begin{equation}
\left\langle\vct w, \vct\varepsilon_{2,T}(\vct u)\right\rangle^2 = O(\|\mat T\|_F^2/b).
\end{equation}
\label{thm_tensor_error}
\end{thm}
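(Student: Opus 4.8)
The plan is to reduce all three bounds to a single variance estimate for the basic sketch inner product $\langle\vct s_{\mat A},\vct s_{\mat B}\rangle$ approximating $\langle\mat A,\mat B\rangle$ (Eq.~(\ref{eq_ab})), and then boost to high probability by the median-of-$B$ trick. First I would establish the workhorse lemma: for a single symmetric sketch built via Eq.~(\ref{eq_sym_tensor_sketch}) together with its conjugate recovery rule, the estimate $\langle\vct s_{\mat T},\vct s_{\mat B}\rangle$ of $\langle\mat T,\mat B\rangle$ is unbiased with variance $O(\|\mat T\|_F^2\|\mat B\|_F^2/b)$ for any symmetric $\mat B$. Unbiasedness follows from the orthogonality $\E[\sigma(a)\overline{\sigma(b)}]=\mathbf 1\{a=b\}$ of the complex Rademacher variables (the roots of $z^4=1$), which is exactly what the property $\E[\sigma(i)^p]=0$ for $p\in\{1,2,3\}$ buys once repeated indices are accounted for by the $m=4$ design. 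The variance is where the hashing enters: expanding the product and taking expectations, the cross terms survive only when two distinct multi-indices collide under $\tilde H$, an event of probability $1/b$ by $2$-wise independence of $\tilde H$; by Proposition~\ref{prop_hash_independent} this $2$-wise independence is furnished by $6$-wise independence of the scalar hash $h$. Summing the surviving terms and applying Cauchy--Schwarz yields the $O(\|\mat T\|_F^2\|\mat B\|_F^2/b)$ bound.

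With the lemma in hand, each target is an inner product against a unit-Frobenius-norm rank-one tensor. For $\mat T(\vct u,\vct u,\vct u)=\langle\mat T,\vct u^{\otimes 3}\rangle$ I set $\mat B=\vct u^{\otimes 3}$, so $\|\mat B\|_F=\|\vct u\|_2^3=1$. For the $i$th coordinate of $\mat T(\mat I,\vct u,\vct u)=\langle\mat T,\vct e_i\otimes\vct u\otimes\vct u\rangle$ I set $\mat B=\vct e_i\otimes\vct u\otimes\vct u$, again with $\|\mat B\|_F=1$. For the projected quantity I would exploit multilinearity together with linearity of the sketch operator: since $\vct w\otimes\vct u\otimes\vct u=\sum_i w_i\,\vct e_i\otimes\vct u\otimes\vct u$, for a single sketch $\langle\vct w,\widehat{\mat T}(\mat I,\vct u,\vct u)\rangle=\langle\vct s_{\mat T},\vct s_{\vct w\otimes\vct u\otimes\vct u}\rangle$ is precisely the estimate of $\langle\mat T,\vct w\otimes\vct u\otimes\vct u\rangle=\mat T(\vct w,\vct u,\vct u)$, with $\|\vct w\otimes\vct u\otimes\vct u\|_F=\|\vct w\|_2=1$. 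Thus all three targets have per-sketch variance $O(\|\mat T\|_F^2/b)$, and Chebyshev converts each into the claimed squared-error bound with constant failure probability.

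To upgrade from constant to $1-\delta$ probability I would invoke the standard median argument: the median of $B=\Omega(\log(1/\delta))$ independent sketches is accurate unless a majority of sketches fail, which by a Chernoff bound has probability $e^{-\Omega(B)}\le\delta$. This handles the scalar bound on $\varepsilon_{1,T}$ directly, and a union bound over the $n$ coordinates (costing an extra $\log n$ factor in $B$) gives the uniform-in-$i$ bound on $[\vct\varepsilon_{2,T}]_i$.

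The step I expect to be the real obstacle is the projected bound, because the algorithm reports the \emph{coordinate-wise} median $\bar{\vct v}_i=\median_m\Re([\vct v^{(m)}]_i)$, and the median is nonlinear: the clean identity $\langle\vct w,\widehat{\mat T}(\mat I,\vct u,\vct u)\rangle=\langle\vct s_{\mat T},\vct s_{\vct w\otimes\vct u\otimes\vct u}\rangle$ holds for a single sketch but not for a coordinate-wise median, so median-then-project and project-then-median differ. I would reconcile this either by analyzing the scalar median estimator $\median_m\langle\vct w,\vct v^{(m)}\rangle$, which is legitimately a median of $B$ unbiased scalar estimators each of variance $O(\|\mat T\|_F^2/b)$ and hence concentrates by the same Chernoff argument, or, to control the coordinate-wise-median estimator itself, by conditioning on the high-probability good event that a majority of the $B$ sketches are simultaneously accurate in the $\vct w$-direction and bounding the deviation the median can introduce. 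Obtaining this reconciliation cleanly, rather than incurring a spurious $\sqrt n$ factor from the naive triangle-inequality bound $|\langle\vct w,\vct\varepsilon_{2,T}\rangle|\le\|\vct w\|_1\,\|\vct\varepsilon_{2,T}\|_\infty$, is the crux.
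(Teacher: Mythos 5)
Your proposal is correct and takes essentially the same route as the paper: your workhorse lemma is the paper's Lemma \ref{lem_main_1} (unbiasedness plus an $O(\|\mat A\|_F^2\|\mat B\|_F^2/b)$ variance bound obtained from complex-Rademacher orthogonality, $2$-wise independence of $\tilde H$ via Proposition \ref{prop_hash_independent}, and Cauchy--Schwarz), the rank-one reductions $\mat B=\vct u^{\otimes 3}$ and $\mat B=\vct e_i\otimes\vct u\otimes\vct u$ with a union bound over coordinates are identical to the paper's, and your linearity trick for the $\vct w$-projected bound (sketching $\vct w\otimes\vct u\otimes\vct u=\sum_i w_i\,\vct e_i\otimes\vct u\otimes\vct u$ directly) is exactly how the paper proves and applies Lemma \ref{lem_main_2}, which collapses the weighted sum into single tensors $\mat X,\mat Y$ and reapplies Lemma \ref{lem_main_1}. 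The median-versus-projection ordering issue you flag as the crux is a genuine subtlety, but the paper's own proof silently glosses over it, so it does not separate your argument from theirs.
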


\vspace{-0.5cm}
\subsection{Analysis of the fast tensor power method}\label{subsec:analysis_tensor_power_method}
\vspace{-0.2cm}

We present a theorem analyzing robust tensor power method with tensor sketch approximations.
A more detailed theorem statement along with its proof can be found in Appendix \ref{appsec:tensor_power_method}.

\begin{thm}
Suppose $\bar{\mat T}=\mat T+\mat E\in\mathbb R^{n\times n\times n}$ where
$\mat T=\sum_{i=1}^k{\lambda_i\vct v_i^{\otimes 3}}$ with an orthonormal basis $\{\vct v_i\}_{i=1}^k$, $\lambda_1>\cdots>\lambda_k>0$
and $\|\mat E\| = \epsilon$.
Let $\{(\hat\lambda_i,\hat{\vct v}_i)\}_{i=1}^k$ be the eigenvalue/eigenvector pairs obtained by Algorithm \ref{alg_fast_rbp}.
Suppose $\epsilon = O(1/(\lambda_1n))$, $T=\Omega(\log(n/\delta)+\log(1/\epsilon)\max_i{\lambda_i/(\lambda_i-\lambda_{i-1}}))$
and $L$ grows linearly with $k$.
Assume the randomness of the tensor sketch is independent among tensor product evaluations.
If $B=\Omega(\log(n/\delta))$ and $b$ satisfies
\begin{equation}
b = \Omega\left(\max\left\{\frac{\epsilon^{-2}\|\mat T\|_F^2}{\Delta(\vct\lambda)^2}, \frac{\delta^{-4}n^2\|\mat T\|_F^2}{r(\vct\lambda)^2\lambda_1^2}\right\}\right)
\end{equation}
where $\Delta(\vct\lambda) = \min_i(\lambda_i-\lambda_{i-1})$ and $r(\vct\lambda) = \max_{i,j>i}(\lambda_i/\lambda_j)$,
then with probability $\geq 1-\delta$ there exists a permutation $\pi$ over $[k]$ such that
\begin{equation}
\|\vct v_{\pi(i)}-\hat{\vct v}_i\|_2\leq\epsilon,\quad |\lambda_{\pi(i)}-\hat\lambda_i|\leq \lambda_i\epsilon/2,\;\;\forall i\in\{1,\cdots,k\}
\end{equation}
and $\|\mat T-\sum_{i=1}^k{\hat\lambda_i\hat{\vct v}_i^{\otimes 3}}\| \leq c\epsilon$ for some constant $c$.
\label{thm_tensor_power}
\end{thm}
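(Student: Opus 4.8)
The plan is to reduce the analysis to the deterministic convergence guarantee for the robust tensor power method established in \cite{tensor-power-method}, treating the sketching errors $\varepsilon_{1,T}(\vct u)$ and $\vct\varepsilon_{2,T}(\vct u)$ as an additional, iteration-dependent perturbation layered on top of the noise tensor $\mat E$. Concretely, each sketched power update in Algorithm \ref{alg_fast_rbp} computes $\hat{\vct u}_t = (\bar{\mat T}(\mat I,\vct u_{t-1},\vct u_{t-1}) + \vct\varepsilon_{2,T}(\vct u_{t-1}))/\|\cdot\|_2$, so I would first rewrite this as an \emph{exact} power update of a perturbed tensor whose effective contraction error is $\mat E(\mat I,\vct u,\vct u) + \vct\varepsilon_{2,T}(\vct u)$, and similarly fold $\varepsilon_{1,T}$ into the eigenvalue estimates used in the Selection and Deflation steps.

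The first substantive step is to control the sketch error uniformly. Because the theorem assumes the sketch randomness is independent across tensor-product evaluations, I would condition on the current iterate $\vct u_{t-1}$ (which is then fixed with respect to the fresh hashes) and apply Theorem \ref{thm_tensor_error} with failure probability $\delta' = \delta/\poly(k,L,T,n)$. A union bound over all $O(kLT)$ contraction evaluations in the iteration phase, the $O(kL)$ evaluations in Selection, and the $n$ coordinates then guarantees that simultaneously $|\varepsilon_{1,T}| = O(\|\mat T\|_F/\sqrt{b})$, $\|\vct\varepsilon_{2,T}\|_\infty = O(\|\mat T\|_F/\sqrt{b})$, and $\langle\vct v_i,\vct\varepsilon_{2,T}\rangle^2 = O(\|\mat T\|_F^2/b)$ along every eigendirection and for every evaluation, with total failure probability at most $\delta/2$. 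Here $B=\Omega(\log(n/\delta))$ is exactly what is needed to push each median-of-sketches failure probability below $\delta'$.

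With the sketch error controlled, I would replay, by induction over the $k$ deflation rounds, the convergence argument of \cite{tensor-power-method}, at each round additionally absorbing the error of sketching the rank-one deflation term $\hat\lambda\hat{\vct u}^{\otimes 3}$ into the running perturbation. The signal part of the update along eigendirection $\vct v_i$ is $\lambda_i\langle\vct v_i,\vct u_{t-1}\rangle^2$, and progress is guaranteed as long as this dominates the combined perturbation in that direction, $|\langle\vct v_i,\mat E(\mat I,\vct u,\vct u)\rangle| + |\langle\vct v_i,\vct\varepsilon_{2,T}\rangle| \lesssim \epsilon + \|\mat T\|_F/\sqrt{b}$. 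The two terms in the bound on $b$ arise from the two regimes of this comparison. The first term, $\epsilon^{-2}\|\mat T\|_F^2/\Delta(\vct\lambda)^2$, forces $\|\mat T\|_F/\sqrt{b} = O(\epsilon)$ relative to the eigenvalue gap, so that the sketch error is absorbed into the existing $O(\epsilon)$ noise budget and both the final accuracy and the correctness of Selection (distinguishing eigenvalues separated by $\Delta(\vct\lambda)$) are preserved. The second term, $\delta^{-4}n^2\|\mat T\|_F^2/(r(\vct\lambda)^2\lambda_1^2)$, governs the hardest initial phase: a uniformly random $\vct u_0$ on the sphere has $\langle\vct v_i,\vct u_0\rangle = \Theta(1/\sqrt{n})$, so the initial signal is only $\Theta(\lambda_i/n)$; requiring this to exceed the projected sketch error $O(\|\mat T\|_F/\sqrt{b})$ yields precisely $b = \Omega(n^2\|\mat T\|_F^2/\lambda_i^2)$, with the $\delta^{-4}$ and $r(\vct\lambda)$ factors coming from the polynomial concentration of the projected-error bound of Theorem \ref{thm_tensor_error} and from balancing eigenvalue ratios across the $\poly(k)$ initializations.

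The main obstacle I anticipate is that the perturbation here is neither fixed nor expressible as a single bounded operator-norm tensor, as assumed in \cite{tensor-power-method}; it is random and changes with $\vct u_{t-1}$. The independence assumption is what makes the conditioning-plus-union-bound argument legitimate, but care is needed to ensure that after conditioning the iterate is genuinely independent of the current hashes, and that the projected-error bound $\langle\vct v_i,\vct\varepsilon_{2,T}\rangle^2 = O(\|\mat T\|_F^2/b)$ — rather than merely the coordinatewise bound — is invoked along the relevant eigendirections, since using the coordinatewise bound naively would cost an extra factor of $n$ and break the initialization analysis. Reconciling the weaker $\delta^{-4}$ polynomial concentration of the projected bound with the exponential failure probabilities demanded by the $O(kLT)$ union bound is the delicate accounting that the detailed appendix proof must carry out.
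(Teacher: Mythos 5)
Your proposal follows essentially the same route as the paper's Appendix E.3: fold the sketch error into the per-iteration perturbation $\tilde{\vct\varepsilon}(\vct u_t) = \mat E(\mat I,\vct u_t,\vct u_t) + \vct\varepsilon_{2,T}(\vct u_t)$, invoke Theorem \ref{thm_tensor_error} per contraction under the independence assumption, and induct over the $k$ deflation rounds (the paper's Lemma \ref{lem_deflation} is exactly your "absorbing the deflation error into the running perturbation"). One framing difference: the paper does not replay the fixed-operator-norm guarantee of \cite{tensor-power-method}; precisely because of the obstacle you name, it re-proves convergence from scratch as a two-phase argument (Lemmas \ref{lem_first_phase}, \ref{lem_second_phase} and \ref{lem_v_bound}) with two separate noise parameters, $\epsilon_1$ for the per-direction errors $|\langle\vct v_i,\tilde{\vct\varepsilon}(\vct u_t)\rangle|$ and $\epsilon_2$ for $\|\mat V^\top\tilde{\vct\varepsilon}(\vct u_t)\|$ --- the refined per-direction accounting you correctly identify as necessary to avoid the extra factor of $n$.

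There is, however, one concrete error in your accounting of the bound on $b$. You attribute the $\delta^{-4}$ factor to ``polynomial concentration of the projected-error bound of Theorem \ref{thm_tensor_error}.'' That bound in fact has exponential concentration: $B=\Omega(\log(1/\delta))$ suffices for all three estimates in Theorem \ref{thm_tensor_error}, which is exactly what makes your union bound over $O(kLT)$ evaluations affordable in the first place. The $\delta^{-4}$ instead originates from the quality of the \emph{random initialization}: a uniformly random $\vct u_0$ satisfies $\tan\theta(\vct v_1,\vct u_0)\leq\tau\sqrt{n}$ only with probability $1-O(\tau^{-1})$, so driving the overall failure probability down to $\delta$ forces $\tau=\Omega(\delta^{-1})$. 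The first-phase noise tolerance in Lemma \ref{lem_v_bound} scales as $\lambda_1\cos^2\theta(\vct v_1,\vct u_0)=\Omega(\lambda_1/(\tau^2 n))$, and requiring the projected sketch error $O(\|\mat T\|_F/\sqrt{b})$ to fall below this threshold gives $b=\Omega(\tau^4 n^2\|\mat T\|_F^2/\lambda_1^2)=\Omega(\delta^{-4} n^2\|\mat T\|_F^2/\lambda_1^2)$. Your plan treats $\langle\vct v_1,\vct u_0\rangle=\Theta(1/\sqrt{n})$ as if it held deterministically; the $\delta$-dependence of that event is where $\delta^{-4}$ enters, and without making it explicit your second regime for $b$ cannot be derived.
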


Theorem \ref{thm_tensor_error} shows that the sketch length $b$ can be set as $o(n^3)$ to provably approximately decompose a 3rd-order tensor with dimension $n$.
Theorem \ref{thm_tensor_error} together with time complexity comparison in Table \ref{tab_complexity}
shows that the sketching based fast tensor decomposition algorithm has better computational complexity over brute-force implementation.
One potential drawback of our analysis is the assumption that sketches are independently built for each tensor product (contraction) evaluation.
This is an artifact of our analysis and we conjecture that it can be removed by incorporating recent development of differentially private adaptive query framework \cite{privacy-adaptive}. 

\vspace{-0.2cm}
\section{Experiments}\label{sec:experiment}
\vspace{-0.2cm}

\begin{table*}[t]
\centering
\caption{Squared residual norm on top 10 recovered eigenvectors of 1000d tensors 
and running time (excluding I/O and sketch building time) for plain (exact) and sketched robust tensor power methods.
Two vectors are considered mismatch (wrong) if $\|\vct v-\hat{\vct v}\|_2^2 > 0.1$.
A extended version is shown as Table \ref{tab_synthtensor_acc} in Appendix \ref{appsec:supp_experiment}.
}
\scalebox{0.85}{
\begin{tabular}{ccccccccccccccccccc}
\hline
 & & \multicolumn{5}{c}{Residual norm}& &\multicolumn{5}{c}{No. of wrong vectors}& &\multicolumn{5}{c}{Running time (min.)}\\
\cline{3-7}\cline{9-13}\cline{15-19}
&  $\log_2(b)$:&  12& 13& 14& 15&16&  & 12& 13& 14& 15&16&  &  12& 13& 14& 15& 16\\
\hline
\multirow{4}{*}{\rotatebox{90}{$\sigma=.01$}}
&$B=20$&.40&.19 &.10 &{\bf .09} &.08& & 8& 6& 3&{\bf 0} &0& & .85& 1.6& 3.5&{\bf 7.4}& 16.6 \\
&$B=30$&.26& .10&.09 &.08 &.07& & 7& 5& 2&0 &0& & 1.3& 2.4& 5.3&11.3& 24.6 \\
&$B=40$&.17& .10&{\bf .08} &.08 &.07 && 7& 4&{\bf 0}&0 &0& & 1.8& 3.3& {\bf 7.3}&15.2& 33.0 \\
& Exact& .07& & & & & & 0& & & & & & \multicolumn{5}{l}{293.5}\\
\hline
\end{tabular}
}
\label{tab_synthtensor_acc_abridged}
\end{table*}


We demonstrate the effectiveness and efficiency of our proposed sketch based tensor power method on both synthetic tensors and real-world topic modeling problems.
Experimental results involving the fast ALS method are presented in Appendix \ref{appsec:fast_als_experiment}.
All methods are implemented in C++ and tested on a single machine with 8 Intel X5550@2.67Ghz CPUs and 32GB memory.
For synthetic tensor decomposition we use only a single thread; for fast spectral LDA 8 to 16 threads are used.

\subsection{Synthetic tensors}\label{subsec:synthetic}

{In Table \ref{tab_synthtensor_acc} we compare our proposed algorithms with exact decomposition methods on synthetic tensors.
Let $n=1000$ be the dimension of the input tensor.
We first generate a random \emph{orthonormal} basis $\{\vct v_i\}_{i=1}^n$ and then set the input tensor $\mat T$
as $\mat T=\mathrm{normalize}(\sum_{i=1}^n{\lambda_i\vct v_i^{\otimes 3}})+\mat E$,
where the eigenvalues $\lambda_i$ satisfy $\lambda_i = 1/i$.
The normalization step makes $\|\mat T\|_F^2=1$ before imposing noise.
The Gaussian noise matrix $\mat E$ is symmetric with $\mat E_{ijk}\sim\nml(0, \sigma/n^{1.5})$
for $i\leq j\leq k$ and noise-to-signal level $\sigma$.
Due to time constraints, we only compare the recovery error and running time on the top 10 recovered eigenvectors of the full-rank input tensor $\mat T$.
Both $L$ and $T$ are set to 30. 
Table \ref{tab_synthtensor_acc_abridged} shows that our proposed algorithms achieve reasonable approximation error within a few minutes,
which is much faster then exact methods.
A complete version (Table \ref{tab_synthtensor_acc}) is deferred to Appendix \ref{appsec:supp_experiment}.
\subsection{Topic modeling}


\begin{figure*}
\centering
\includegraphics[width=6cm,height=3.5cm]{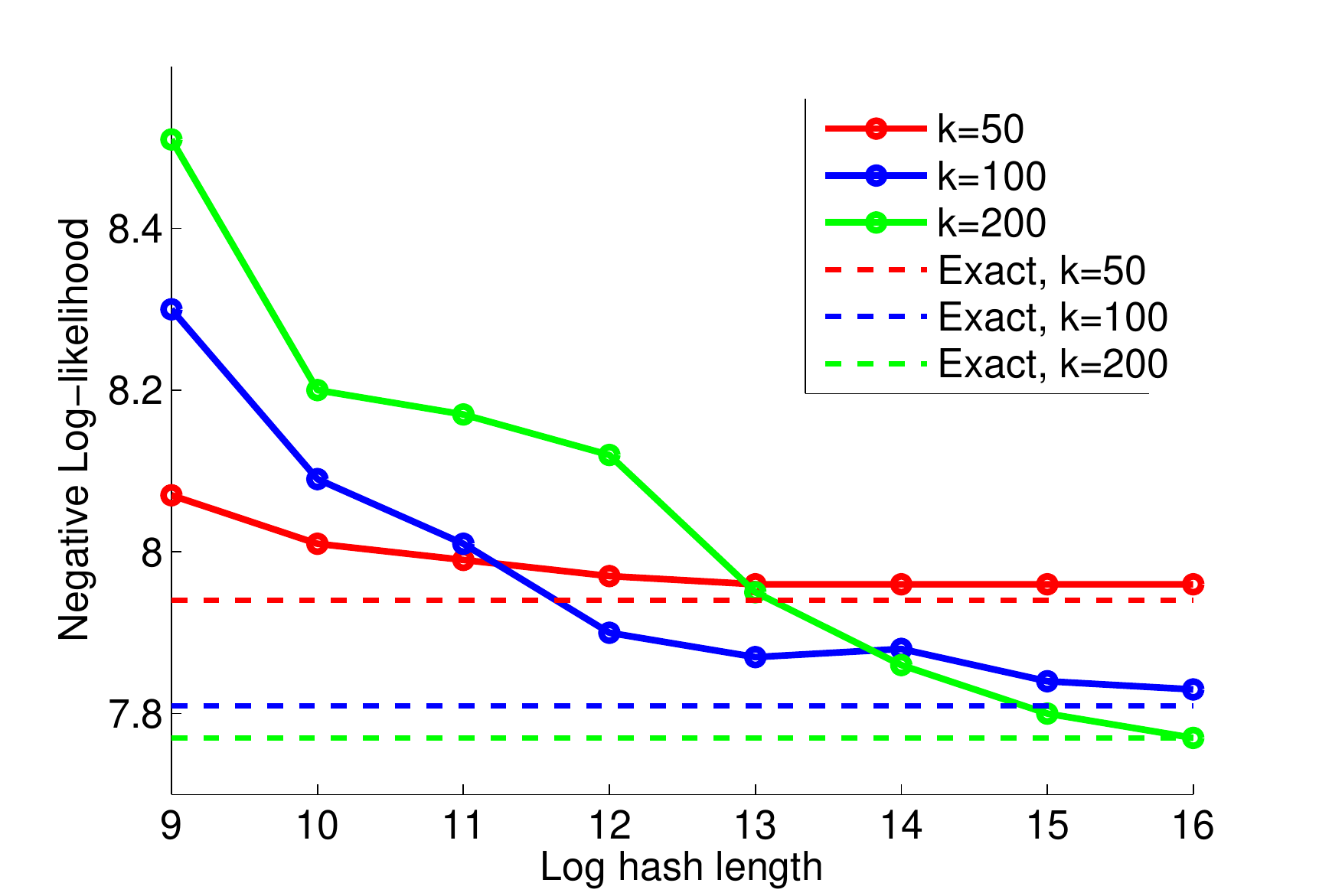}
\includegraphics[width=7cm,height=3.5cm]{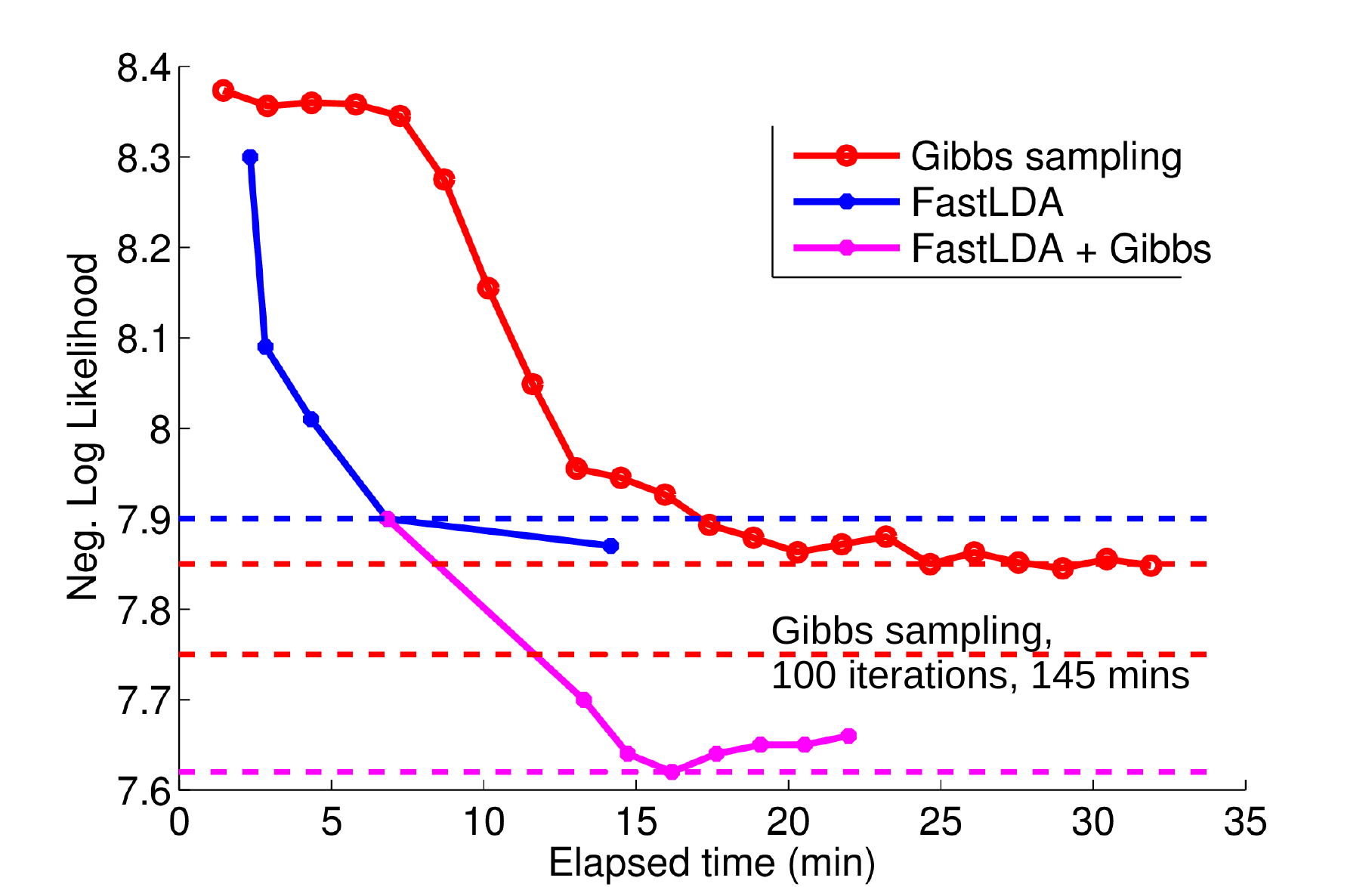}
\captionof{figure}{Left: negative log-likelihood for fast and exact tensor power method on Wikipedia dataset. 
Right: negative log-likelihood for collapsed Gibbs sampling, fast LDA and Gibbs sampling using Fast LDA as initialization.}
\label{fig_errorplot}
\vspace{-0.1cm}
\end{figure*}

\begin{table*}[t]
\centering
\vspace{-0.2cm}
\captionof{table}{Negative log-likelihood and running time (min) on the \emph{large} Wikipedia dataset for 200 and 300 topics.}
\scalebox{0.9}{
\begin{tabular}{cc|cccc|ccccc}
\hline
$k$& & like.& time& $\log_2b$& iters& $k$& like.& time& $\log_2 b$& iters\\
\hline
\multirow{3}{*}{\rotatebox{90}{200}}
&Spectral& 7.49& {\bf 34}& 12& -& \multirow{3}{*}{\rotatebox{90}{300}}& 7.39& {\bf 56}& 13& -\\
&Gibbs& 6.85& 561& -& 30& & 6.38& 818& -& 30\\
&Hybrid&{\bf 6.77} & 144& 12&5  & & {\bf 6.31}& 352& 13& 10\\
\hline
\end{tabular}
}
\label{tab_large_wiki}
\end{table*}

We implement a fast spectral inference algorithm for Latent Dirichlet Allocation (LDA \cite{lda})
by combining tensor sketching with existing whitening technique for dimensionality reduction.
Implementation details are provided in Appendix \ref{appsec:lda}.
We compare our proposed fast spectral LDA algorithm with baseline spectral methods and collapsed Gibbs sampling (using GibbsLDA++ \cite{gibbsldapp} implementation)
on two real-world datasets: Wikipedia and Enron.
Dataset details are presented in \ref{appsec:supp_experiment}
Only the most frequent $V$ words are kept and the vocabulary size $V$ is set to 10000.
For the robust tensor power method the parameters are set to $L=50$ and $T=30$.
For ALS we iterate until convergence, or a maximum number of 1000 iterations is reached.
$\alpha_0$ is set to 1.0 and $B$ is set to 30.

Obtained topic models $\mat\Phi\in\mathbb R^{V\times K}$ are evaluated on a held-out dataset consisting of 1000 documents randomly picked out from training datasets.
For each testing document $d$, we fit a topic mixing vector $\hat{\vct\pi}_d\in\mathbb R^K$ by solving the following optimization problem:
$\hat{\vct\pi}_d = \argmin_{\|\vct\pi\|_1 = 1, \vct\pi\geq\vct 0}{\|\vct w_d-\mat\Phi\vct\pi\|_2},$
where $\vct w_d$ is the empirical word distribution of document $d$.
The per-document log-likelihood is then defined as
$\mathcal L_d = \frac{1}{n_d}\sum_{i=1}^{n_d}{\ln p(w_{di})}$,
where $p(w_{di}) = \sum_{k=1}^K{\hat{\vct\pi}_k\mat\Phi_{w_{di},k}}$.
Finally, the average $\mathcal L_d$ over all testing documents is reported.

Figure \ref{fig_errorplot} left shows the held-out negative log-likelihood for fast spectral LDA under different hash lengths $b$.
We can see that as $b$ increases, the performance approaches the exact tensor power method because sketching approximation becomes more accurate.
On the other hand, Table \ref{tab_err_compare} shows that fast spectral LDA runs much faster than exact tensor decomposition methods
while achieving comparable performance on both datasets.

{
Figure \ref{fig_errorplot} right compares the convergence of collapsed Gibbs sampling with different number of iterations and fast spectral LDA with different hash lengths on Wikipedia dataset.
For collapsed Gibbs sampling, we set $\alpha = 50/K$ and $\beta = 0.1$ following \cite{collapsed-gibbs-lda}.
As shown in the figure, fast spectral LDA achieves comparable held-out likelihood while running faster than collapsed Gibbs sampling.
We further take the dictionary $\mat\Phi$ output by fast spectral LDA and use it as initializations for collapsed Gibbs sampling
(the word topic assignments $\vct z$ are obtained by 5-iteration Gibbs sampling, with the dictionary $\mat\Phi$ fixed).
The resulting Gibbs sampler converges much faster:
with only 3 iterations it already performs much better than a randomly initialized Gibbs sampler run for 100 iterations,
which takes 10x more running time.

We also report performance of fast spectral LDA and collapsed Gibbs sampling on a larger dataset in Table \ref{tab_large_wiki}.
The dataset was built by crawling 1,085,768 random Wikipedia pages and a held-out evaluation set was built by randomly picking out 1000 documents from the dataset.
Number of topics $k$ is set to 200 or 300, and after getting topic dictionary $\mat\Phi$ from fast spectral LDA
we use 2-iteration Gibbs sampling to obtain word topic assignments $\vct z$.
Table \ref{tab_large_wiki} shows that the hybrid method (i.e., collapsed Gibbs sampling initialized by spectral LDA)
achieves the best likelihood performance in a much shorter time, compared to a randomly initialized Gibbs sampler.
}

\vspace{-0.15in}
\section{Conclusion}
\vspace{-0.15in}
In this work we proposed a sketching based approach to efficiently compute tensor CP decomposition with provable guarantees.
We apply our proposed algorithm on learning latent topics of unlabeled document collections
and achieve significant speed-up compared to vanilla spectral and collapsed Gibbs sampling methods.
Some interesting future directions include further improving the sample complexity analysis and applying the framework
to a broader class of graphical models.

\paragraph{Acknowledgement:} Anima Anandkumar is supported in part by the Microsoft Faculty Fellowship and the Sloan Foundation.
Alex Smola is supported in part by a Google Faculty Research Grant.

{\small
\bibliographystyle{IEEE}
\bibliography{fftlda}
}

\clearpage

\begin{appendices}

\section{Supplementary experimental results}\label{appsec:supp_experiment}

The Wikipedia dataset is built by crawling all documents in all subcategories within 3 layers below the \emph{science} category.
The Enron dataset is from the Enron email corpus \cite{enron}.
After usual cleaning steps, the Wikipedia dataset has $114,274$ documents with an average $512$ words per document;
the Enron dataset has $186,501$ emails with average $91$ words per email.

\begin{table*}[h]
\centering
\caption{Squared residual norm on top 10 recovered eigenvectors of 1000d tensors 
and running time (excluding I/O and sketch building time) for plan (exact) and sketched robust tensor power methods.
Two vectors are considered mismatched (wrong) if $\|\vct v-\hat{\vct v}\|_2^2 > 0.1$.
}
\scalebox{0.85}{
\begin{tabular}{ccccccccccccccccccc}
\hline
 & & \multicolumn{5}{c}{Residual norm}& &\multicolumn{5}{c}{No. of wrong vectors}& &\multicolumn{5}{c}{Running time (min.)}\\
\cline{3-7}\cline{9-13}\cline{15-19}
&  $\log_2(b)$:&  12& 13& 14& 15&16&  & 12& 13& 14& 15&16&  &  12& 13& 14& 15& 16\\
\hline
\multirow{4}{*}{\rotatebox{90}{$\sigma=.01$}}
&$B=20$&.40&.19 &.10 &{\bf .09} &.08& & 8& 6& 3&{\bf 0} &0& & .85& 1.6& 3.5&{\bf 7.4}& 16.6 \\
&$B=30$&.26& .10&.09 &.08 &.07& & 7& 5& 2&0 &0& & 1.3& 2.4& 5.3&11.3& 24.6 \\
&$B=40$&.17& .10&{\bf .08} &.08 &.07 && 7& 4&{\bf 0}&0 &0& & 1.8& 3.3& {\bf 7.3}&15.2& 33.0 \\
& Exact& .07& & & & & & 0& & & & & & \multicolumn{5}{l}{293.5}\\
\hline
\multirow{4}{*}{\rotatebox{90}{$\sigma=.1$}}
&$B=20$&.52&3.1 &.21 &{\bf.18} &.17& & 8& 7& 4& {\bf 0} &0& & .84& 1.6& 3.5&{\bf 7.5} & 16.8\\
&$B=30$&4.0& .24&.19 &.17 &.16& & 7& 5& 3&0 &0& & 1.3& 2.5& 5.4&11.6& 26.2 \\
&$B=40$&.30& .22& {\bf.18}&.17& .16 && 7& 4& {\bf 0}&0& 0 && 1.8& 3.3& {\bf 7.3}&15.5& 33.5 \\
& Exact& .16& &  & & & & 0& & && & & \multicolumn{5}{l}{271.8}\\
\hline
\end{tabular}
}
\label{tab_synthtensor_acc}
\end{table*}

\begin{table}[h]
\centering
\caption{Selected negative log-likelihood and running time (min) for fast and exact spectral methods on Wikipedia (top) and Enron (bottom) datasets.}
\scalebox{0.9}{
\begin{tabular}{cccccccccccccccc}
\hline
& & \multicolumn{3}{c}{$k=50$}& & \multicolumn{3}{c}{$k=100$}& & \multicolumn{3}{c}{$k=200$}\\
\cline{3-5}\cline{7-9} \cline{11-13}
& & Fast RB& RB& ALS& & Fast RB& RB& ALS&&  Fast RB& RB& ALS\\
\hline
\multirow{3}{*}{\rotatebox{90}{Wiki.}}
& like.& 8.01& {\bf 7.94}& 8.16&& 7.90& {\bf 7.81}& 7.93&& 7.86& {\bf 7.77}& 7.89\\
& time& {\bf 2.2}& 97.7& 2.4& & {\bf 6.8}& 135& 29.3& & {\bf 57.3}& 423& 677\\
& $\log_2 b$& 10& -& -& & 12& -& -& & 14& -& - \\
\hline
\multirow{3}{*}{\rotatebox{90}{Enron}}
& like.& 8.31& {\bf 8.28}& 8.22& & 8.18& {\bf 8.09}& 8.30& & 8.26& {\bf 8.18}& 8.27\\
& time& {\bf 2.4}& 45.8& 5.2& & {\bf 3.7}& 93.9& 40.6& & {\bf 6.4}& 219& 660\\
& $\log_2 b$& 11& -& -& & 11& -& -& & 11& -& -\\
\hline
\end{tabular}
}
\label{tab_err_compare}
\end{table}

\section{Fast tensor power method via symmetric sketching}\label{appsec:detail_tensor_power}

In this section we show how to do fast tensor power method using symmetric tensor sketches.
More specifically, we explain how to approximately compute $\mat T(\vct u,\vct u,\vct u)$ and $\mat T(\mat I,\vct u,\vct u)$ when
colliding hashes are used.

For symmetric tensors $\mat A$ and $\mat B$, their inner product can be approximated by
\begin{equation}
\langle\mat A,\mat B\rangle \approx \langle\tilde{\vct s}_{\mat A},\tilde{\vct s}_{\widetilde{\mat B}}\rangle,
\label{eq_ab_collide}
\end{equation}
where $\widetilde{\mat B}$ is an ``upper-triangular" tensor defined as
\begin{equation}
\widetilde{\mat B}_{i,j,k} = \left\{\begin{array}{ll}
\mat B_{i,j,k},& \text{if }i\leq j\leq k;\\
0,& \text{otherwise}.\end{array}\right.
\end{equation}
Note that in Eq. (\ref{eq_ab_collide}) only the matrix $\mat B$ is ``truncated".
We show this gives consistent estimates of $\langle\mat A,\mat B\rangle$ in Appendix \ref{appsec:tensor_sketch_bound}.

Recall that $\mat T(\vct u,\vct u,\vct u)=\langle\mat T,\mat X\rangle$ where $\mat X=\vct u\otimes\vct u\otimes\vct u$.
The symmetric tensor sketch $\tilde{\vct s}_{\widetilde{\mat X}}$ can be computed as
\begin{equation}
\tilde{\vct s}_{\widetilde{\mat X}} = \frac{1}{6}\tilde{\vct s}_{\vct u}^{\otimes 3} + \frac{1}{2}\tilde{\vct{s}}_{2,\vct u\circ\vct u}*\tilde{\vct s}_{\vct u} + \frac{1}{3}\tilde{\vct{s}}_{3,\vct u\circ\vct u\circ\vct u},
\end{equation}
where $\tilde{s}_{2,\vct u\circ\vct u}(t) = \sum_{2h(i)=t}{\sigma(i)^2\vct u_i^2}$ and
$\tilde{s}_{3,\vct u\circ\vct u\circ\vct u}(t) = \sum_{3h(i)=t}{\sigma(i)^3\vct u_i^3}$.
As a result,
\begin{equation}
\mat T(\vct u,\vct u,\vct u) \approx
\frac{1}{6}\langle\mathcal F(\tilde{\vct s}_{\mat T}), \mathcal F(\tilde{\vct s}_{\vct u})\circ\mathcal F(\tilde{\vct s}_{\vct u})\circ\mathcal F(\tilde{\vct s}_{\vct u} )\rangle\\
+ \frac{1}{2}\langle\mathcal F(\tilde{\vct s}_{\mat T}), \mathcal F(\tilde{\vct s}_{2,\vct u\circ\vct u})\circ\mathcal F(\tilde{\vct s}_{\vct u})\rangle
+ \frac{1}{3}\langle\tilde{\vct s}_{\mat T}, \tilde{\vct s}_{3,\vct u\circ\vct u\circ\vct u}\rangle.
\label{eq_tuuu_fast}
\end{equation}

For $\mat T(\mat I,\vct u,\vct u)$ recall that $[\mat T(\mat I,\vct u,\vct u)]_i=\langle\mat T,\mat Y_i\rangle$
where $\mat Y_i=\vct e_i\otimes\vct u\otimes\vct u$.
We first symmetrize it by defining $\mat Z_i=\vct e_i\otimes\vct u\otimes\vct u+\vct u\otimes\vct e_i\otimes\vct u+\vct u\otimes\vct u\otimes\vct e_i$.
\footnote{As long as $\mat A$ is symmetric, we have $\langle\mat A,\mat Y_i\rangle = \langle\mat A,\mat Z_i\rangle/3$.}
The sketch of $\widetilde{\mat Z}_i$ can be subsequently computed as
\begin{equation}
\tilde{\vct s}_{\widetilde{\mat Z}_i} = \frac{1}{2}\tilde{\vct s}_{\vct u}*\tilde{\vct s}_{\vct u}*\tilde{\vct s}_{\vct e_i}
+ \frac{1}{2}\tilde{\vct s}_{2,\vct u\circ\vct u}*\tilde{\vct s}_{\vct e_i}
+ \tilde{\vct s}_{2,\vct e_i\circ\vct u}*\tilde{\vct s}_{\vct u}
+ \tilde{\vct s}_{3,\vct e_i\circ\vct u\circ\vct u}.
\end{equation}
Consequently,
\begin{multline}
\mat T(\mat I,\vct u,\vct u)
\approx \left\langle\mathcal F^{-1}\left(\mathcal F(\tilde{\vct s}_{\mat T})\circ\overline{\mathcal F(\tilde{\vct s}_{\vct u})}\right), \tilde{\vct s}_{2,\vct e_i\circ\vct u}\right\rangle
+\frac{1}{6}\left\langle\mathcal F^{-1}\left(\mathcal F(\tilde{\vct s}_{\mat T})\circ\overline{\mathcal F(\tilde{\vct s}_{\vct u})}\circ\overline{\mathcal F(\tilde{\vct s}_{\vct u})} \right), \tilde{\vct s}_{\vct e_i}\right\rangle\\
+ \frac{1}{6}\left\langle\mathcal F^{-1}\left(\mathcal F(\tilde{\vct s}_{\mat T})\circ\overline{\mathcal F(\tilde{\vct s}_{2,\vct u\circ\vct u})}\right),
\tilde{\vct s}_{\vct e_i}\right\rangle
+ \langle\tilde{\vct s}_{\mat T}, \tilde{\vct s}_{3, \vct e_i\circ\vct u\circ\vct u}\rangle.
\label{eq_tiuu_fast}
\end{multline}
Note that all of $\tilde{\vct s}_{\vct e_i}$, $\tilde{\vct s}_{2,\vct e_i\circ\vct u}$ and $\tilde{\vct s}_{3,\vct e_i\circ\vct u\circ\vct u}$ have
exactly one nonzero entries.
So we can pre-compute all terms on the left sides of inner products in Eq. (\ref{eq_tiuu_fast}) and then read off the values for each entry in $\mat T(\mat I,\vct u,\vct u)$.

\section{Fast ALS: method and simulation result}\label{appsec:fast_als}

In this section we describe how to use tensor sketching to accelerate the Alternating Least Squares (ALS) method for tensor CP decomposition.
We also provide experimental results on synthetic data and compare our fast ALS implementation with the Matlab tensor toolbox \citeapp{tensor-toolbox-1,tensor-toolbox-2},
which is widely considered to be the state-of-the-art for tensor decomposition.

\subsection{Alternating Least Squares}

Alternating Least Squares (ALS) is a popular method for tensor CP decompositions \citeapp{tensor-review}.
The algorithm maintains $\vct\lambda\in\mathbb R^k$, $\mat A,\mat B,\mat C\in\mathbb R^{n\times k}$ and iteratively perform the following update steps:
\begin{align}
\widehat{\mat A} &= \mat T_{(1)}(\mat C\odot\mat B)(\mat C^\top\mat C\circ \mat B^\top\mat B)^\dagger.\\
\widehat{\mat B} &= \mat T_{(1)}(\widehat{\mat A}\odot\mat C)(\widehat{\mat A}^\top\widehat{\mat A}\circ \mat C^\top\mat C)^\dagger;\nonumber\\
\widehat{\mat C} &= \mat T_{(1)}(\widehat{\mat B}\odot\widehat{\mat A})(\widehat{\mat B}^\top\widehat{\mat B}\circ \widehat{\mat A}^\top\widehat{\mat A})^\dagger.\nonumber
\end{align}
After each update, $\hat\lambda_r$ is set to $\|\vct a_r\|_2$ (or $\|\vct b_r\|_2, \|\vct c_r\|_2$) for $r=1,\cdots,k$
and the matrix $\mat A$ (or $\mat B,\mat C$) is normalized so that each column has unit norm.
The final low-rank approximation is obtained by $\sum_{i=1}^k{\hat\lambda_i\hat{\vct a}_i\otimes\hat{\vct b}_i\otimes\hat{\vct c}_i}$.

There is no guarantee that ALS converges or gives a good tensor decomposition.
Nevertheless, it works reasonably well in most applications \citeapp{tensor-review}.
In general ALS requires $O(T(n^3k+k^3))$ computations and $O(n^3)$ storage,
where $T$ is the number of iterations.

\subsection{Accelerated ALS via sketching}

\begin{algorithm}[t]
\caption{Fast ALS method}
\begin{algorithmic}[1]
\State \textbf{Input}: $\mat T\in\mathbb R^{n\times n\times n}$, target rank $k$, $T$, $B$, $b$.
\State \textbf{Initialize}: $B$ independent index hash functions $h^{(1)},\cdots, h^{(B)}$ and $\sigma^{(1)},\cdots,\sigma^{(B)}$;
random matrices $\mat A,\mat B,\mat C\in\mathbb R^{n\times k}$; $\{\lambda_i\}_{i=1}^k$.
\State For $m=1,\cdots, B$ compute ${\vct s}^{(m)}_{{\mat T}}\in\mathbb C^b$.
\For{$t=1$ to $T$}
    \State Compute count sketches $\vct s_{\vct b_i}$, $\vct s_{\vct c_i}$ for $i=1,\cdots,k$.
    For each $i=1,\cdots,k;m=1,\cdots,b$ compute $\vct v_i^{(m)}\approx\mat T(\mat I,\vct b_i,\vct c_i)$.
    \State $\bar{\vct v}_{ij} \gets \median(\Re(\vct v^{(1)}_{ij}), \Re(\vct v^{(2)}_{ij}), \cdots, \Re(\vct v^{(B)}_{ij}))$.
    \State Set $\widehat{\mat A}=\{\bar{\vct v}\}_{ij}$ and $\hat\lambda_i=\|\hat{\vct a}_i\|$; afterwards, normalize each column of $\mat A$.
    \State Update $\mat B$ and $\mat C$ similarly.
\EndFor
\State \textbf{Output}: eigenvalues $\{\lambda_i\}_{i=1}^k$; solutions $\mat A,\mat B,\mat C$.
\end{algorithmic}
\label{alg_fast_als}
\end{algorithm}

Similar to robust tensor power method, the ALS algorithm can be significantly accelerated by using the idea of sketching as shown in this work.
However, for ALS we cannot use colliding hashes because though the input tensor $\mat T$ is symmetric,
its CP decomposition is not since we maintain three different solution matrices $\mat A,\mat B$ and $\mat C$.
As a result, we roll back to asymmetric tensor sketches defined in Eq. (\ref{eq_asym_tensor_sketch}).
Recall that given $\mat A,\mat B,\mat C\in\mathbb R^{n\times k}$ we want to compute
\begin{equation}
\hat{\mat A} = \mat T_{(1)}(\mat C\odot\mat B)(\mat C^\top\mat C\circ\mat B^\top\mat B)^\dagger.
\label{eq_symmetric_als}
\end{equation}
When $k$ is much smaller than the ambient tensor dimension $n$
the computational bottleneck of Eq. (\ref{eq_symmetric_als}) is $\mat T_{(1)}(\mat C\odot\mat B)$,
which requires $O(n^3k)$ operations.
Below we show how to use sketching to speed up this computation.

Let $\vct x\in\mathbb R^{n^2}$ be one row in $\mat T_{(1)}$ and consider $(\mat C\odot\mat B)^\top\vct x$.
It can be shown that \cite{khartri-rao-fast-property}
\begin{equation}
\left[(\mat C\odot\mat B)^\top\vct x\right]_i = \vct b_i^\top\mat X\vct c_i,\quad\forall i=1,\cdots,k,
\label{eq_khartri_rao_fast_property}
\end{equation}
where $\mat X\in\mathbb R^{n\times n}$ is the reshape of vector $\vct x$.
Subsequently, the product $\mat T_{(1)}(\mat C\odot\mat B)$ can be re-written as
\begin{equation}
\mat T_{(1)}(\mat C\odot\mat B) = [\mat T(\mat I,\vct b_1,\vct c_1); \cdots; \mat T(\mat I,\vct b_k,\vct c_k)].
\end{equation}

Using Proposition \ref{prop_tiuu} we can compute each of $\mat T(\mat I,\vct b_i,\vct c_i)$ in $O(n+b\log b)$ iterations.
Note that in general $\vct b_i\neq\vct c_i$, but Proposition \ref{prop_tiuu} still holds by replacing one of the two $\vct s_{\vct u}$ sketches.
As a result, $\mat T_{(1)}(\mat C\odot\mat B)$ can be computed in $O(k(n+b\log b))$ operations once $\vct s_{\mat T}$ is computed.
The pseudocode of fast ALS is listed in Algorithm \ref{alg_fast_als}.
Its time complexity and space complexity are $O(T(k(n+Bb\log b)+k^3))$ (excluding the time for building $\vct s_{\mat T}$)
and $O(Bb)$, respectively.

\subsection{Simulation results}\label{appsec:fast_als_experiment}

\begin{table*}[t]
\centering
\caption{Squared residual norm on top 10 recovered eigenvectors of 1000d tensors 
and running time (excluding I/O and sketch building time) for plain (exact) and sketched ALS algorithms.
Two vectors are considered mismatched (wrong) if $\|\vct v-\hat{\vct v}\|_2^2 > 0.1$.
}
\scalebox{0.85}{
\begin{tabular}{ccccccccccccccccccc}
\hline
 & & \multicolumn{5}{c}{Residual norm}& &\multicolumn{5}{c}{No. of wrong vectors}& &\multicolumn{5}{c}{Running time (min.)}\\
\cline{3-7}\cline{9-13}\cline{15-19}
&  $\log_2(b)$:&  12& 13& 14& 15&16&  & 12& 13& 14& 15&16&  &  12& 13& 14& 15& 16\\
\hline
\multirow{4}{*}{\rotatebox{90}{$\sigma=.01$}}
&$B=20$&.71& .41 &.25 &{ .17} &.12& & 10& 9& 7&{ 6} &4& & .11& .22& .49&{1.1}& 2.4 \\
&$B=30$&.50& .34&.21 &.14 &.11& & 9& 8&7 &5&3& & .17& .33& .75&1.6& 3.5 \\
&$B=40$&.46& .28&{.17} &.10 &{\bf .07} && 9& 8&{ 6}&5 &{\bf 1}& & .23& .45& {1.0}&2.2&  {\bf 4.7}\\
& Exact\textsuperscript{$\dagger$}& .07& & & & & & 1& & & & & & \multicolumn{5}{l}{22.8}\\
\hline
\multirow{4}{*}{\rotatebox{90}{$\sigma=.1$}}
&$B=20$&.88&.50 &.35 &{ .28} &.23& & 10& 8& 7& { 6} &6& & .13& .32& .78&{1.5} & 3.2\\
&$B=30$&.78& .44&.30 &.24 &.21& & 9& 8& 7&5 &6& & .21& .50& 1.1&2.2& 4.7 \\
&$B=40$&.56& .38& {.28}&.19& {\bf .16} && 9& 8& {6}&4& {\bf 2} && .29& .69& {1.5}&3.5& {\bf 6.3} \\
& Exact\textsuperscript{$\dagger$}& .17& &  & & & & 2& & && & & \multicolumn{5}{l}{32.3}\\
\hline
\multicolumn{19}{l}{\textsuperscript{$\dagger$}\footnotesize{Calling $\mathtt{cp\_als}$ in Matlab tensor toolbox. It is run for exactly $T=30$ iterations.}}
\end{tabular}
}
\label{tab_als}
\end{table*}

We compare the performance of fast ALS with a brute-force implementation under various hash length settings on synthetic datasets in Table \ref{tab_als}.
Settings for generating the synthetic dataset is exactly the same as in Section \ref{subsec:synthetic}.
We use the \texttt{cp\_als} routine in Matlab tensor toolbox as the reference brute-force implementation of ALS.
For fair comparison, exactly $T=30$ iterations are performed for both plain and accelerated ALS algorithms.
Table \ref{tab_als} shows that when sketch length $b$ is not too small, fast ALS achieves comparable accuracy with exact methods
while being much faster in terms of running time.

\section{Spectral LDA and fast spectral LDA}\label{appsec:lda}

Latent Dirichlet Allocation (LDA, \citeapp{lda}) is a powerful tool in topic modeling.
In this section we first review the LDA model and introduce the tensor decomposition method for learning LDA models, which was proposed in \citeapp{tensor-power-method}.
We then provide full details of our proposed fast spectral LDA algorithm.
Pseudocode for fast spectral LDA is listed in Algorithm \ref{alg_fast_lda}.

\subsection{LDA and spectral LDA}

\begin{algorithm}[t]
\caption{Fast spectral LDA}
\begin{algorithmic}[1]
\State \textbf{Input}: Unlabeled documents, $V$, $K$, $\alpha_0$, $B$, $b$.
\State Compute empirical moments $\widehat{\mat M}_1$ and $\widehat{\mat M}_2$ defined in Eq. (\ref{eq_moment1},\ref{eq_moment2}).
\State $[\mat U,\mat S,\mat V]\gets\text{truncatedSVD}(\widehat{\mat M}_2,k)$; $\mat W_{ik}\gets\frac{\mat U_{ik}}{\sqrt{\sigma_k}}$.
\State Build $B$ tensor sketches of $\widehat{\mat M}_3(\mat W,\mat W,\mat W)$.
\State Find CP decomposition $\{\lambda_i\}_{i=1}^k,\mat A=\mat B=\mat C=\{\vct v_i\}_{i=1}^k$ of $\widehat{\mat M}_3(\mat W,\mat W,\mat W)$
using either fast tensor power method or fast ALS method.
\State \textbf{Output}: estimates of prior parameters $\hat\alpha_i=\frac{4\alpha_0(\alpha_0+1)}{(\alpha_0+2)^2\lambda_i^2}$
and topic distributions $\hat{\vct\mu}_i=\frac{\alpha_0+2}{2}\lambda_i(\mat W^\dagger)^\top\vct v_i$.
\end{algorithmic}
\label{alg_fast_lda}
\end{algorithm}

LDA models a collection of documents by a topic dictionary $\mat\Phi\in\mathbb R^{V\times K}$ and a Dirichlet prior $\vct\alpha\in\mathbb R^k$,
where $V$ is the vocabulary size and $k$ is the number of topics.
Each column in $\mat\Phi$ is a probability distribution (i.e., non-negative and sum to one) representing the word distribution of a particular topic.
For each document $d$, a topic mixing vector $\vct h_d\in\mathbb R^k$ is first sampled from a Dirichlet distribution parameterized by $\vct\alpha$.
Afterwards, words in document $d$ i.i.d. sampled from a categorical distribution parameterized by $\mat\Phi\vct h_d$.

A spectral method for LDA based on 3rd-order robust tensor decomposition was proposed in \citeapp{tensor-power-method}
to provably learn LDA model parameters from a polynomial number of training documents.
Let $\vct x\in\mathbb R^V$ represent a single word; that is, for word $w$ we have $x_w=1$ and $x_{w'}=0$ for all $w'\neq w$.
Define first, second and third order moments $\mat M_1,\mat M_2$ and $\mat M_3$ as follows:
\begin{align}
\mat M_1 &= \mathbb E[\vct x_1];\label{eq_moment1}\\
\mat M_2 &= \mathbb E[\vct x_1\otimes\vct x_2] - \frac{\alpha_0}{\alpha_0+1}\mat M_1\otimes\mat M_1;\label{eq_moment2}\\
\mat M_3 &= \mathbb E[\vct x_1\otimes\vct x_2\otimes\vct x_3]
- \frac{\alpha_0}{\alpha_0+2}(\mathbb E[\vct x_1\otimes\vct x_2\otimes\mat M_1]
+ \mathbb E[\vct x_1\otimes\mat M_1\otimes\vct x_2] + \mathbb E[\mat M_1\otimes\vct x_1\otimes\vct x_2])\nonumber\\
&+ \frac{2\alpha_0^2}{(\alpha_0+1)(\alpha_0+2)}\mat M_1\otimes\mat M_1\otimes\mat M_1.
\label{eq_moment3}
\end{align}
Here $\alpha_0=\sum_k{\alpha_k}$ is assumed to be a known quantity.
Using elementary algebra it can be shown that
\begin{align}
\mat M_2 &= \frac{1}{\alpha_0(\alpha_0+1)}\sum_{i=1}^k{\alpha_i\vct\mu_i\vct\mu_i^\top};\\
\mat M_3 &= \frac{2}{\alpha_0(\alpha_0+1)(\alpha_0+2)}\sum_{i=1}^k{\alpha_i\vct\mu_i\otimes\vct\mu_i\otimes\vct\mu_i}.
\end{align}

To extract topic vectors $\{\vct\mu_i\}_{i=1}^k$ from $\mat M_2$ and $\mat M_3$, a \emph{simultaneous diagonalization} procedure is carried out.
More specifically, the algorithm first finds a whitening matrix $\mat W\in\mathbb R^{V\times K}$ with orthonormal columns such that
$\mat W^\top\mat M_2\mat W = \mat I_{K\times K}$.
In practice, this step can be completed by performing a truncated SVD on $\mat M_2$, $\mat M_2=\mat U_K\mat\Sigma_K\mat V_K$,
and set $\mat W_{ik} = \mat U_{ik}/\sqrt{\mat\Sigma_{kk}}$.
Afterwards, tensor CP decomposition is performed on the whitened third order moment $\mat M_3(\mat W,\mat W,\mat W)$
\footnote{For a tensor $\mat T\in\mathbb R^{V\times V\times V}$ and a matrix $\mat W\in\mathbb R^{V\times k}$,
the product $\mat Q=\mat T(\mat W,\mat W,\mat W)\in\mathbb R^{k\times k\times k}$ is defined as
$\mat Q_{i_1,i_2,i_3}=\sum_{j_1,j_2,j_3=1}^V{\mat T_{j_1,j_2,j_3}\mat W_{j_1,i_1}\mat W_{j_2,i_2}\mat W_{j_3,i_3}}$.}
to obtain a set of eigenvectors $\{\vct v_k\}_{k=1}^K$.
The topic vectors $\{\vct\mu_k\}_{k=1}^K$ can be subsequently obtained by multiplying $\{\vct v_k\}_{k=1}^K$ with the pseudoinverse of $\mat W$.
Note that Eq. (\ref{eq_moment1},\ref{eq_moment2},\ref{eq_moment3}) are defined in exact word moments.
In practice we use empirical moments (e.g., word frequency vector and co-occurrence matrix) to approximate these exact moments.

\subsection{Fast spectral LDA}

To further accelerate the spectral method mentioned in the previous section, it helps to first identify computational bottlenecks of spectral LDA.
In general, the computation of $\widehat{\mat M}_1,\widehat{\mat M}_2$ and the whitening step are not the computational bottleneck
when $V$ is not too large and each document is not too long.
The bottleneck comes from the computation of (the sketch of) $\widehat{\mat M}_3(\mat W,\mat W,\mat W)$ and its tensor decomposition.
By Eq. (\ref{eq_moment3}), the computation of $\widehat{\mat M}_3(\mat W,\mat W,\mat W)$ reduces to computing
$\widehat{\mat M}_1^{\otimes 3}(\mat W,\mat W,\mat W)$, $\hat{\mathbb E}[\vct x_1\otimes\vct x_2\otimes\widehat{\mat M}_1](\mat W,\mat W,\mat W)$,
\footnote{and also $\hat{\mathbb E}[\vct x_1\otimes\widehat{\mat M}_1\otimes\vct x_2](\mat W,\mat W,\mat W)$,
 $\hat{\mathbb E}[\widehat{\mat M}_1\otimes\vct x_1\otimes\vct x_2](\mat W,\mat W,\mat W)$ by symmetry.}
and $\hat{\mathbb E}[\vct x_1\otimes\vct x_2\otimes\vct x_3](\mat W,\mat W,\mat W)$.
The first term $\widehat{\mat M}_1^{\otimes 3}(\mat W,\mat W,\mat W)$ poses no particular challenge as it can be written as
$(\mat W^\top\widehat{\mat M}_1)^{\otimes 3}$.
Its sketch can then be efficiently obtained by applying techniques in Section \ref{sec:tensor_sketch_build}.
In the remainder of this section we focus on efficient computation of the sketch of the other two terms mentioned above.

We first show how to efficiently sketching $\hat{\mathbb E}[\vct x_1\otimes\vct x_2\otimes\vct x_3](\mat W,\mat W,\mat W)$
given the whitening matrix $\mat W$ and $D$ training documents.
Let $\mat T\hat{\mathbb E}[\vct x_1\otimes\vct x_2\otimes\vct x_3](\mat W,\mat W,\mat W)$ denote the whitened $k\times k\times k$ tensor to be sketched
and write $\mat T=\sum_{d=1}^D{\mat T_d}$, where $\mat T_d$ is the contribution of the $d$th training document to $\mat T$.
By definition, $\mat T_d$ can be expressed as $\mat T_d=\mat N_d(\mat W,\mat W,\mat W)$,
where $\mat W$ is the $V\times k$ whitening matrix and $\mat N_d$ is the $V\times V\times V$ empirical moment tensor computed on the $d$th document.
More specifically, for $i,j,k\in\{1,\cdots,V\}$ we have
$$
\mat N_{d,ijk} = \frac{1}{m_d(m_d-1)(m_d-2)}\left\{\begin{array}{ll}
n_{di}(n_{dj}-1)(n_{dk}-2),& i=j=k;\\
n_{di}(n_{di}-1)n_{dk},& i=j, j\neq k;\\
n_{di}n_{dj}(n_{dj}-1)& j=k, i\neq j;\\
n_{di}(n_{di}-1)n_{dj},& i=k, i\neq j;\\
n_{di}n_{dj}n_{dk}, & \text{otherwise}.
\end{array}\right.
$$
Here $m_d$ is the length (i.e., number of words) of document $d$
and $\vct n_d\in\mathbb R^{V}$ is the corresponding word count vector.
Previous straightforward implementation require at least $O(k^3+m_dk^2)$ operations per document to build the tensor $\mat T$
and $O(k^4LT)$ to decompose it \citeapp{spectral-slda,scalable-spectral-lda},
which is prohibitively slow for real-world applications.
In section \ref{sec:tensor_decomposition} we discussed how to decompose a tensor efficiently once we have its sketch.
We now show how to build the sketch of $\mat T$ efficiently from document word counts $\{\vct n_d\}_{d=1}^D$.

By definition, $\mat T_d$ can be decomposed as
\begin{equation}
\mat T_d = \vct p^{\otimes 3}
- \sum_{i=1}^V{n_i(\vct w_i\otimes\vct w_i\otimes\vct p}
{+\vct w_i\otimes\vct p\otimes\vct w_i}
{+\vct p\otimes\vct w_i\otimes\vct w_i)}
+ \sum_{i=1}^V{2n_i\vct w_i^{\otimes 3}},
\label{eq_td}
\end{equation}
where $\vct p=\mat W\vct n$ and $\vct w_i\in\mathbb R^k$ is the $i$th row of the whitening matrix $\mat W$.
A direct implementation is to sketch each of the low-rank components in Eq. (\ref{eq_td}) and compute their sum.
Since there are $O(m_d)$ tensors, building the sketch of $\mat T_d$ requires $O(m_d)$ FFTs, which is unsatisfactory.
However,
note that $\{\vct w_i\}_{i=1}^V$ are fixed and shared across documents.
So when scanning the documents we maintain the sum of $n_i$ and $n_i\vct p$
and add the incremental after all documents are scanned.
In this way, we only need $O(1)$ FFT per document
with an additional $O(V)$ FFTs.
Since the total number of documents $D$ is usually much larger than $V$,
this provides significant speed-ups over the naive method that sketches each term in Eq. (\ref{eq_td}) independently.
As a result, the sketch of $\mat T$ can be computed in $O(k(\sum_d{m_d}) + (D+V)b\log b)$ operations,
which is much more efficient than the $O(k^2(\sum_d{m_d})+Dk^3)$ brute-force computation.

We next turn to the term $\hat{\mathbb E}[\vct x_1\otimes\vct x_2\otimes\widehat{\mat M}_1](\mat W,\mat W,\mat W)$.
Fix a document $d$ and let $\vct p=\mat W\vct n_d$.
Define $\vct q = \mat W\widehat{\mat M}_1$.
By definition, the whitened empirical moment can be decomposed as
\begin{equation}
\hat{\mathbb E}[\vct x_1\otimes\vct x_2\otimes\widehat{\mat M}_1](\mat W,\mat W,\mat W) = \sum_{i=1}^V{n_i\vct p\otimes\vct p\otimes\vct q},
\label{eq_applda_exxm}
\end{equation}
Note that Eq. (\ref{eq_applda_exxm}) is very similar to Eq. (\ref{eq_td}).
Consequently, we can apply the same trick (i.e., adding $\vct p$ and $n_i\vct p$ up before doing sketching or FFT) to compute Eq. (\ref{eq_applda_exxm}) efficiently.

\section{Proofs}\label{appsec:proof}

\subsection{Proofs of some technical propositions}\label{appsec:proof_technical}

\begin{proof}[Proof of Proposition \ref{prop_hash_independent}]
We prove the proposition for the case $q=2$ (i.e., $\tilde H$ is 2-wise independent).
This suffices for our purpose in this paper and generalization to $q>2$ cases is straightforward.
For notational simplicity we omit all modulo operators.
Consider two $p$-tuples $\vct l=(l_1,\cdots, l_p)$ and $\vct l'=(l_1',\cdots,l_p')$ such that $\vct l\neq\vct l'$.
Since $\tilde H$ is permutation invariant, we assume without loss of generality that for some $s<p$ and $1\leq i\leq s$ we have $l_i=l_i'$.
Fix $t,t'\in[b]$.
We then have
\begin{multline}
\Pr[\tilde H(\vct l)=t\wedge\tilde H(\vct l')=t'] =
\sum_a\sum_{h(l_1)+\cdots+ h(l_s)=a}{\Pr[ h(l_1)+\cdots+ h(l_s)=a]}\\
\cdot \sum_{\substack{r_{s+1}+\cdots+r_p=t-a\\r_{s+1}'+\cdots+r_p'=t'-a}}{\Pr[ h(l_{s+1})=r_1\wedge\cdots\wedge h(l_p)=r_p\wedge
 h(l_{s+1}')=r_1'\wedge\cdots\wedge h(l_p')=r_p']}.
\label{eq_hash_proof}
\end{multline}
Since $ h$ is $2p$-wise independent, we have
$$\Pr[h(l_1)+\cdots+h(l_s)=a] = \sum_{r_1+\cdots+r_s=a}{\Pr[ h(l_1)=r_1\wedge\cdots h(l_s)=r_s]} = b^{s-1}\cdot\frac{1}{b^s} = \frac{1}{b};$$
\begin{multline*}
\sum_{\substack{r_{s+1}+\cdots+r_p=t-a\\r_{s+1}'+\cdots+r_p'=t-a}}{\Pr[ h(l_{s+1})=r_1\wedge\cdots\wedge h(l_p)=r_p\wedge
 h(l_{s+1}')=r_1'\wedge\cdots\wedge h(l_p')=r_p']} \\
 = b^{2(p-s-1)}\cdot\frac{1}{b^{2(p-s)}} = \frac{1}{b^2}.
\end{multline*}
Summing everything up we get $\Pr[\tilde H(\vct l)=t\wedge\tilde H(\vct l')=t'] = 1/b^2$, which is to be demonstrated.
\end{proof}

\begin{proof}[Proof of Proposition \ref{prop_tiuu}]
Since both FFT and inverse FFT preserve inner products, we have
\begin{align}
\langle\vct s_{\mat T}, \vct s_{1,\vct u}*\vct s_{2,\vct u}*\vct s_{3,\vct e_i}\rangle
&= \langle\mathcal F(\vct s_{\mat T}), \mathcal F(\vct s_{1,\vct u})\circ\mathcal F(\vct s_{2,\vct u})\circ\mathcal F(\vct s_{3,\vct e_i})\rangle\nonumber\\
&= \langle\mathcal F(\vct s_{\mat T})\circ\overline{\mathcal F(\vct s_{1,\vct u})}\circ\overline{\mathcal F(\vct s_{2,\vct u})}, \mathcal F(\vct s_{3,\vct e_i})\rangle\nonumber\\
&= \langle\mathcal F^{-1}(\mathcal F(\vct s_{\mat T})\circ\overline{\mathcal F(\vct s_{1,\vct u})}\circ\overline{\mathcal F(\vct s_{2,\vct u})}), \vct s_{3,\vct e_i}\rangle.\nonumber
\label{eq_tiuv_fast}
\end{align}
\end{proof}

\subsection{Analysis of tensor sketch approximation error}\label{appsec:tensor_sketch_bound}

Proofs of Theorem \ref{thm_tensor_error} is based on the following two key lemmas,
which states that $\langle\tilde{\vct s}_{\mat A},\tilde{\vct s}_{\widetilde{\mat B}}\rangle$
is a consistent estimator of the true inner product $\langle\mat A,\mat B\rangle$;
furthermore, the variance of the estimator decays linearly with the hash length $b$.
The lemmas are interesting in their own right, providing useful tools for proving approximation accuracy in a wide range of applications
when colliding hash and symmetric sketches are used.
\begin{lem}
Suppose $\mat A,\mat B\in\bigotimes^p\mathbb R^n$ are two symmetric real tensors and let $\tilde{\vct s}_{\mat A},\tilde{\vct s}_{\widetilde{\mat B}}\in\mathbb C^b$
be the symmetric tensor sketches of $\mat A$ and $\widetilde{\mat B}$. That is,
\begin{eqnarray}
\tilde s_{\mat A}(t) &=& \sum_{\tilde H(i_1,\cdots,i_p)=t}{\sigma_{i_1}\cdots\sigma_{i_p}\mat A_{i_1,\cdots,i_p}};\label{eq_e_ver1}\\
\tilde s_{\widetilde{\mat B}}(t) &=& \sum_{\substack{\tilde H(i_1,\cdots,i_p)=t\\i_1\leq \cdots\leq i_p}}{\sigma_{i_1}\cdots\sigma_{i_p}\mat B_{i_1,\cdots,i_p}}.\label{eq_var_ver1}
\end{eqnarray}
Assume $\tilde H(i_1,\cdots,i_p) = (h(i_1)+\cdots+h(i_p))\mod b$ are drawn from a 2-wise independent hash family.
Then the following holds:
\begin{eqnarray}
\mathbb E_{h,\sigma}\left[\langle\tilde{\vct s}_{\mat A},\tilde{\vct s}_{\widetilde{\mat B}}\rangle\right] &=& \langle\mat A,\mat B\rangle,\label{eq_error_e}\\
\mathbb V_{h,\sigma}\left[\langle\tilde{\vct s}_{\mat A},\tilde{\vct s}_{\widetilde{\mat B}}\rangle\right] &\leq& \frac{4^p\|\mat A\|_F^2\|\mat B\|_F^2}{b}.\label{eq_error_var}
\end{eqnarray}
\label{lem_main_1}
\end{lem}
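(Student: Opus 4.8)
The plan is to expand the sketch inner product into a double sum over index tuples, collapse it to a sum over \emph{multisets} using the permutation invariance of $\tilde H$ together with the symmetry of $\mat A,\mat B$, and then separate the randomness of the hash $h$ from that of the complex signs $\sigma$. Writing $\sigma(\vct i)=\prod_a\sigma_{i_a}$ and grouping each ordered tuple with its underlying multiset $S$ (which is realized by $N_S=p!/\prod_v c_v(S)!$ orderings, where $c_v(S)$ is the multiplicity of $v$ in $S$), and using that $\tilde{\vct s}_{\mat A}$ runs over all orderings while $\tilde{\vct s}_{\widetilde{\mat B}}$ keeps one sorted representative per multiset, I would obtain
\[
\langle\tilde{\vct s}_{\mat A},\tilde{\vct s}_{\widetilde{\mat B}}\rangle=\sum_{S,S'}\mathbb{1}[\tilde H(S)=\tilde H(S')]\,\sigma(S)\,\overline{\sigma(S')}\,(N_S\mat A_S)\,\mat B_{S'},
\]
where $\mat A_S,\mat B_{S'}$ denote the well-defined symmetric entries.

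For the expectation (\ref{eq_error_e}) I would use independence of $h$ and $\sigma$ together with the elementary identity $\mathbb E_\sigma[\prod_v\sigma_v^{e_v}]=\prod_v\mathbb{1}[m\mid e_v]$ for $m$-th roots of unity. The diagonal terms $S=S'$ contribute \emph{deterministically}: the collision is automatic and $|\sigma(S)|^2=1$, giving $\sum_S N_S\mat A_S\mat B_S=\langle\mat A,\mat B\rangle$. Every off-diagonal term vanishes because for $S\neq S'$ some multiplicity differs by an amount not divisible by $m$ (for $p=3$, $m=4$, the difference lies in $\{1,2,3\}$), so $\mathbb E_\sigma[\sigma(S)\overline{\sigma(S')}]=0$. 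This proves (\ref{eq_error_e}) and, crucially, shows the diagonal is non-random, so $W:=\langle\tilde{\vct s}_{\mat A},\tilde{\vct s}_{\widetilde{\mat B}}\rangle-\langle\mat A,\mat B\rangle$ is exactly the off-diagonal sum and satisfies $\mathbb E[W]=0$.

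For the variance (\ref{eq_error_var}) I would compute $\mathbb V=\mathbb E[|W|^2]$ by expanding into a quadruple sum over $(S,S',R,R')$ with $S\neq S'$, $R\neq R'$, and factoring each term's expectation. The hash factor is bounded using the $2$-wise independence of $\tilde H$ (guaranteed by Proposition~\ref{prop_hash_independent}, since $h$ is $2p$-wise independent): dropping one indicator, $\mathbb E_h[\mathbb{1}[\tilde H(S)=\tilde H(S')]\,\mathbb{1}[\tilde H(R)=\tilde H(R')]]\le\Pr_h[\tilde H(S)=\tilde H(S')]=1/b$. The sign factor is again an indicator $\prod_v\mathbb{1}[4\mid e_v]$ with $e_v=c_v(S)-c_v(S')-c_v(R)+c_v(R')$, nonzero only for ``balanced'' quadruples. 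The dominant contribution is the aligned configuration $S=R,\ S'=R'$, which yields $\tfrac1b\sum_{S\ne S'}(N_S\mat A_S)^2\mat B_{S'}^2\le\tfrac1b\big(\sum_S N_S^2\mat A_S^2\big)\big(\sum_{S'}\mat B_{S'}^2\big)\le\tfrac{p!}{b}\|\mat A\|_F^2\|\mat B\|_F^2$, using $N_S\le p!$ and $\|\mat A\|_F^2=\sum_S N_S\mat A_S^2$.

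The main obstacle is the final accounting of the remaining balanced-but-\emph{not}-aligned quadruples, i.e.\ those in which repeated indices make all multiplicity differences divisible by $4$ without the tuples matching up. There can be $\Omega(\poly(n))$ such quadruples, so I cannot merely count them; instead I would decouple the $\mat A$-factors from the $\mat B$-factors by Cauchy--Schwarz, bound the contribution of each admissible matching pattern by $\|\mat A\|_F^2\|\mat B\|_F^2/b$, and then absorb the (constantly many) patterns together with the $N_S\le p!$ factors into the overall constant $4^p$. This case analysis over how the $2p$ indices of $(S,S')$ pair with those of $(R,R')$ under the mod-$4$ constraint is the delicate step and is precisely where the stated $4^p$ constant originates.
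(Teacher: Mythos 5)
Your proposal is correct and follows essentially the same route as the paper's proof: expand the sketch inner product over index tuples/multisets, kill the off-diagonal terms in expectation via the roots-of-unity sign expectation, and for the variance bound the hash-collision factor by $1/b$ using 2-wise independence and control the surviving cross terms pattern-by-pattern with Cauchy--Schwarz, absorbing the (constantly many) patterns into the $4^p$ constant --- this last, sketched step is precisely the paper's enumeration that fixes the $q$ overlapping positions shared by the two $\mat A$-tuples and the two $\mat B$-tuples. One remark: your non-vanishing condition ($4$ divides every combined multiplicity difference $e_v$) is in fact more careful than the paper's own, which keeps only quadruples with $\mathcal M(\vct l)-\mathcal M(\vct l')=\mathcal M(\vct r)-\mathcal M(\vct r')$ and silently discards configurations where some $e_v=\pm 4$; those extra ``balanced-but-not-aligned'' terms succumb to the same Cauchy--Schwarz decoupling you describe, so your plan closes correctly (possibly with a slightly different absolute constant, which is immaterial to how the lemma is used).
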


\begin{lem}
Following notations and assumptions in Lemma \ref{lem_main_1}.
Let $\{\mat A_i\}_{i=1}^m$ and $\{\mat B_i\}_{i=1}^m$ be symmetric real $n\times n\times n$ tensors
and fix real vector $\vct w\in\mathbb R^m$.
Then we have
\begin{eqnarray}
\mathbb E\left[\sum_{i,j}w_iw_j\langle\tilde{\vct s}_{\mat A_i},\tilde{\vct s}_{\widetilde{\mat B}_j}\rangle\right] &=& \sum_{i,j}{w_iw_j\langle\mat A_i,\mat B_j\rangle};\label{eq_e_ver2}\\
\mathbb V\left[\sum_{i,j}w_iw_j\langle\tilde{\vct s}_{\mat A_i},\tilde{\vct s}_{\widetilde{\mat B}_j}\rangle\right] &\leq& \frac{4^p\|\vct w\|^4(\max_i{\|\mat A_i\|_F^2})(\max_i{\|\mat B_i\|_F^2})}{b}.
\label{eq_var_ver2}
\end{eqnarray}
\label{lem_main_2}
\end{lem}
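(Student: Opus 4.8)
The plan is to reduce Lemma \ref{lem_main_2} to the single-pair Lemma \ref{lem_main_1} by exploiting the fact that all of $\tilde{\vct s}_{\mat A_i}$ and $\tilde{\vct s}_{\widetilde{\mat B}_j}$ are built from the \emph{same} hash function $h$ and the \emph{same} complex sign function $\sigma$. First I would record two linearity facts: the symmetric sketch map $\mat A\mapsto\tilde{\vct s}_{\mat A}$ defined by Eq. (\ref{eq_e_ver1}) is linear in the tensor entries, and the upper-triangular truncation $\mat B\mapsto\widetilde{\mat B}$ is a coordinatewise projection and hence also linear, so that $\sum_j w_j\tilde{\vct s}_{\widetilde{\mat B}_j}=\tilde{\vct s}_{\widetilde{\mat B}}$ with $\mat B:=\sum_j w_j\mat B_j$. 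Since the weights $w_i$ are real and the tensor inner product is conjugate-bilinear (with $\overline{w_j}=w_j$), this lets me collapse the double sum into a single sketch inner product:
\[
\sum_{i,j}w_iw_j\langle\tilde{\vct s}_{\mat A_i},\tilde{\vct s}_{\widetilde{\mat B}_j}\rangle
=\Big\langle\sum_i w_i\tilde{\vct s}_{\mat A_i},\ \sum_j w_j\tilde{\vct s}_{\widetilde{\mat B}_j}\Big\rangle
=\langle\tilde{\vct s}_{\mat A},\tilde{\vct s}_{\widetilde{\mat B}}\rangle,
\]
where $\mat A:=\sum_i w_i\mat A_i$ and $\mat B:=\sum_j w_j\mat B_j$ are again symmetric real tensors. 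The crucial point is that reusing the randomness is exactly what makes $\sum_i w_i\tilde{\vct s}_{\mat A_i}=\tilde{\vct s}_{\mat A}$ hold; had each tensor used independent hashes, this collapse would fail and the cross terms below would not vanish.

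With the collapse in hand, both claims follow from Lemma \ref{lem_main_1} applied to the single pair $(\mat A,\mat B)$. For the expectation, Eq. (\ref{eq_error_e}) gives $\mathbb{E}[\langle\tilde{\vct s}_{\mat A},\tilde{\vct s}_{\widetilde{\mat B}}\rangle]=\langle\mat A,\mat B\rangle$, and expanding $\langle\mat A,\mat B\rangle=\sum_{i,j}w_iw_j\langle\mat A_i,\mat B_j\rangle$ by bilinearity yields Eq. (\ref{eq_e_ver2}); this direction in fact also follows termwise from linearity of expectation alone, without the collapse. For the variance, Eq. (\ref{eq_error_var}) gives $\mathbb{V}[\langle\tilde{\vct s}_{\mat A},\tilde{\vct s}_{\widetilde{\mat B}}\rangle]\leq 4^p\|\mat A\|_F^2\|\mat B\|_F^2/b$. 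Here the collapse is essential, since it replaces the unwieldy quadruple sum $\sum_{i,j,k,l}w_iw_jw_kw_l\,\mathrm{Cov}\big(\langle\tilde{\vct s}_{\mat A_i},\tilde{\vct s}_{\widetilde{\mat B}_j}\rangle,\langle\tilde{\vct s}_{\mat A_k},\tilde{\vct s}_{\widetilde{\mat B}_l}\rangle\big)$ — whose off-diagonal covariances are nonzero precisely because the randomness is shared — by one variance already controlled by Lemma \ref{lem_main_1}.

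It then remains only to bound $\|\mat A\|_F$ and $\|\mat B\|_F$ by the target quantities. By the triangle inequality, $\|\mat A\|_F=\|\sum_i w_i\mat A_i\|_F\leq\sum_i|w_i|\,\|\mat A_i\|_F\leq\big(\sum_i|w_i|\big)\max_i\|\mat A_i\|_F=\|\vct w\|\max_i\|\mat A_i\|_F$, and likewise for $\mat B$; multiplying the two and substituting into the variance bound produces the factor $\|\vct w\|^4(\max_i\|\mat A_i\|_F^2)(\max_i\|\mat B_i\|_F^2)$ of Eq. (\ref{eq_var_ver2}), where I read $\|\vct w\|$ as the $\ell_1$ norm $\sum_i|w_i|$ that this step naturally yields. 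I expect the only genuine obstacle to be the first paragraph: one must carefully verify the two linearity facts and the conjugation bookkeeping, and confirm that sharing $h,\sigma$ across all tensors is what legitimizes folding the weighted sum into a single sketch. Everything afterward is a direct invocation of Lemma \ref{lem_main_1} followed by a one-line norm estimate.
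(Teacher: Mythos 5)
Your proof is correct and takes essentially the same route as the paper's: both arguments reduce the claim to Lemma \ref{lem_main_1} applied to the aggregated tensors $\mat X=\sum_i w_i\mat A_i$ and $\mat Y=\sum_j w_j\mat B_j$, and then bound $\|\mat X\|_F\|\mat Y\|_F$ by $\|\vct w\|^2(\max_i\|\mat A_i\|_F)(\max_j\|\mat B_j\|_F)$. Your execution is in fact slightly cleaner — you note that the weighted sum of sketch inner products equals $\langle\tilde{\vct s}_{\mat X},\tilde{\vct s}_{\widetilde{\mat Y}}\rangle$ as an identity of random variables (via linearity of sketching and truncation under the shared $h,\sigma$), whereas the paper re-expands the second moment coordinatewise before regrouping into $\mat X,\mat Y$ — and your explicit $\ell_1$ reading of $\|\vct w\|$ is precisely what the paper's own final inequality implicitly requires.
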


\begin{proof}[Proof of Lemma \ref{lem_main_1}]
We first define some notations. Let $\vct l=(l_1,\cdots,l_p)\in [d]^p$ be a $p$-tuple denoting a multi-index.
Define $\mat A_{\vct l} := \mat A_{l_1,\cdots,l_p}$ and $\sigma(\vct l) := \sigma_{l_1}\cdots\sigma_{l_p}$.
For $\vct l,\vct l'\in [n]^p$, define $\delta(\vct l,\vct l') = 1$ if $h(l_1)+\cdots+h(l_p)\equiv h(l_1')+\cdots+h(l_p') (\mod b)$
and $\delta(\vct l,\vct l') = 0$ otherwise.
For a $p$-tuple $\vct l\in[n]^p$, let $\mathcal L(\vct l)\in[n]^p$ denote the $p$-tuple
obtained by re-ordering indices in $\vct l$ in ascending order.
Let $\mathcal M(\vct l)\in\mathbb N^b$ denote the ``expanded version'' of $\vct l$. That is,
$[\mathcal M(\vct l)]_i$ denote the number of occurrences of the index $i$ in $\vct l$.
By definition, $\|\mathcal M(\vct l)\|_1 = p$.
Finally, by definition $\widetilde{\mat B}_{\vct l'} = \mat B_{\vct l'}$ if $\vct l'=\mathcal L(\vct l')$ and $\widetilde{\mat B}_{\vct l'} = 0$ otherwise.

Eq. (\ref{eq_error_e}) is easy to prove. By definition and linearity of expectation we have
\begin{equation}
\mathbb E[\langle\tilde{\vct s}_{\mat A},\tilde{\vct s}_{\widetilde{\mat B}}\rangle] = \sum_{\vct l,\vct l'}{\delta(\vct l,\vct l')\sigma(\vct l)\mat A_{\vct l}\bar\sigma(\vct l')\widetilde{\mat B}_{\vct l'}}.
\end{equation}
Note that $\delta$ and $\sigma$ are independent and
\begin{equation}
\mathbb E_{\sigma}[\sigma(\vct l)\sigma(\vct l')]  = \left\{\begin{array}{ll}
1,& \text{if }\mathcal L(\vct l)=\mathcal L(\vct l');\\
0,& \text{otherwise.}\end{array}\right.
\end{equation}
Also $\delta(\vct l,\vct l') = 1$ with probability 1 whenever $\mathcal L(\vct l) = \mathcal L(\vct l')$.
Note that $\widetilde{\mat B}_{\vct l'} = 0$ whenever $\vct l'\neq\mathcal L(\vct l')$.
Consequently,
\begin{equation}
\mathbb E[\langle\tilde{\vct s}_{\mat A},\tilde{\vct s}_{\widetilde{\mat B}}\rangle] = \sum_{\vct l\in[n]^p}{\mat A_{\vct l}\widetilde{\mat B}_{\mathcal L(\vct l)}} = \langle\mat A,\mat B\rangle.
\end{equation}

For the variance, we have the following expression for $\mathbb E[\langle\tilde{\vct s}_{\mat A},\tilde{\vct s}_{\widetilde{\mat B}}\rangle^2]$:
\begin{eqnarray}
\mathbb E[\langle\tilde{\vct s}_{\mat A},\tilde{\vct s}_{\widetilde{\mat B}}\rangle^2]
&=& \sum_{\vct l,\vct l',\vct r,\vct r'}{\mathbb E[\delta(\vct l,\vct l')\delta(\vct r,\vct r')]\cdot\mathbb E[\sigma(\vct l)\bar\sigma(\vct l')\bar\sigma(\vct r)\sigma(\vct r')]\cdot \mat A_{\vct l}\mat A_{\vct r}\widetilde{\mat B}_{\vct l'}\widetilde{\mat B}_{\vct r'}}\\
&=:& \sum_{\vct l,\vct l',\vct r,\vct r'}{\mathbb E[t(\vct l,\vct l',\vct r,\vct r')]}.
\end{eqnarray}

We remark that $\mathbb E[\sigma(\vct l)\bar\sigma(\vct l')\bar\sigma(\vct r)\sigma(\vct r')] = 0$ if $\mathcal M(\vct l)-\mathcal M(\vct l') \neq \mathcal M(\vct r) - \mathcal M(\vct r')$.
In the remainder of the proof we will assume that $\mathcal M(\vct l) -\mathcal M(\vct l')= \mathcal M(\vct r) - \mathcal M(\vct r')$.
This can be further categorized into two cases:

\textbf{Case 1}: $\vct l' = \mathcal L(\vct l)$ and $\vct r' = \mathcal L(\vct r)$.
By definition $\mathbb E[\sigma(\vct l)\bar\sigma(\vct l')\sigma(\vct r)\bar\sigma(\vct r')] = 1$ and $\mathbb E[\delta(\vct l,\vct l')\delta(\vct r,\vct r')] = 1$.
Subsequently $\mathbb E[t(\vct l,\vct l',\vct r,\vct r')] =  \mat A_{\vct l}\mat A_{\vct r}\widetilde{\mat B}_{\vct l'}\widetilde{\mat B}_{\vct r'}$ and hence
\begin{equation}
\sum_{\vct l,\vct r,\vct l'=\mathcal L(\vct l),\vct r'=\mathcal L(\vct r)}{\mathbb E[t(\vct l,\vct l',\vct r,\vct r')]}
= \sum_{\vct l,\vct r}{\mat A_{\vct l}\mat A_{\vct r}\mat B_{\vct l}\mat B_{\vct r}} = \langle\mat A,\mat B\rangle^2.
\label{eq_var_case1}
\end{equation}

\textbf{Case 2}: $\vct l'\neq\mathcal L(\vct l)$ or $\vct r'\neq\mathcal L(\vct r)$.
Since $\mathcal M(\vct l) -\mathcal M(\vct l')= \mathcal M(\vct r) - \mathcal M(\vct r')\neq 0$ we have
$\mathbb E[\delta(\vct l,\vct l')\delta(\vct r,\vct r')] = 1/b$ because $h$ is a 2-wise independent hash function.
In addition, $\mathbb E[|\sigma(\vct l)\bar\sigma(\vct l')\sigma(\vct r)\bar\sigma(\vct r')|] \leq 1$.

To enumerate all $(\vct l,\vct l',\vct r,\vct r')$ tuples that satisfy the colliding condition $\mathcal M(\vct l) -\mathcal M(\vct l')= \mathcal M(\vct r) - \mathcal M(\vct r')\neq 0$,
we fix
\footnote{Note that $\text{sum}(\mathcal M(\vct l))=\text{sum}(\mathcal M(\vct l'))$ and hence $\|\mathcal M(\vct l)-\mathcal M(\vct l')\|_1$ must be even.
Furthermore, the sum of positive entries in $(\mathcal M(\vct l)-\mathcal M(\vct l'))$ equals the sum of negative entries.}
 $\|\mathcal M(\vct l)-\mathcal M(\vct l')\|_1 = 2q$
 and fix $q$ positions each in $\vct l$ and $\vct r$ (for $\vct l'$ and $\vct r'$ the positions of these indices
 are automatically fixed because indices in $\vct l'$ and $\vct r'$ must be in ascending order).
Without loss of generality assume the fixed $q$ positions for both $\vct l$ and $\vct r$ are the first $q$ indices.
The 4-tuple $(\vct l,\vct r,\vct l',\vct r')$ with $\|\mathcal M(\vct l)-\mathcal M(\vct l')\|_1=2q$ can then be enumerated as follows:
\begin{eqnarray}
&&\sum_{\substack{\vct l,\vct r,\vct l',\vct r'\\
\mathcal M(\vct l)-\mathcal M(\vct l')=\mathcal M(\vct r)-\mathcal M(\vct r')\\
\|\mathcal M(\vct l)-\mathcal M(\vct l')\|_1 = 2q}}{t(\vct l,\vct l',\vct r,\vct r')}\nonumber\\
&=& \sum_{\vct i\in[n]^q}{\sum_{\vct j\in[n]^q}{\sum_{\substack{\vct l\in[n]^{p-q}\\\vct r\in[n]^{p-q}}}{t(\vct i\circ\vct l,\mathcal L(\vct j\circ\vct l),\vct i\circ\vct r,\mathcal L(\vct j\circ\vct r))}}}\nonumber\\
&\leq&\frac{1}{b} \sum_{\substack{\vct i,\vct j\in[n]^q\\\vct l,\vct r\in[n]^{p-q}}}{\mat A_{\vct i\circ\vct l}\mat A_{\vct i\circ\vct r}\mat B_{\vct j\circ\vct l}\mat B_{\vct j\circ\vct r}}\nonumber\\
&=& \frac{1}{b}\sum_{\substack{\vct i,\vct j\in[n]^q}}{\langle\mat A(\vct e_{i_1},\cdots,\vct e_{i_q},\mat I,\cdots,\mat I), \mat B(\vct e_{j_1},\cdots,\vct e_{j_q},\mat I,\cdots,\mat I)\rangle^2}\nonumber\\
&\leq& \frac{1}{b}\sum_{\substack{\vct i,\vct j\in[n]^q}}{\|\mat A(\vct e_{i_1},\cdots,\vct e_{i_q},\mat I,\cdots,\mat I)\|_F^2\|\mat B(\vct e_{j_1},\cdots,\vct e_{j_q},\mat I,\cdots,\mat I)\|_F^2}\nonumber\\
&=& \frac{\|\mat A\|_F^2\|\mat B\|_F^2}{b}.
\label{eq_var_case2}
\end{eqnarray}
Here $\circ$ denotes concatenation, that is, $\vct i\circ\vct l = (i_1,\cdots,i_q,l_1,\cdots,l_{p-q})\in[n]^p$.
The fourth equation is Cauchy-Schwartz inequality.
Finally note that there are no more than $4^p$ ways of assigning $q$ positions to $\vct l$ and $\vct l'$ each.
Combining Eq. (\ref{eq_var_case1}) and (\ref{eq_var_case2}) we get
\begin{equation*}
\mathbb V[\langle\tilde{\vct s}_{\mat A},\tilde{\vct s}_{\widetilde{\mat B}}\rangle]
= \mathbb E[\langle\tilde{\vct s}_{\mat A},\tilde{\vct s}_{\widetilde{\mat B}}\rangle^2] - \langle\mat A,\mat B\rangle^2
\leq \frac{4^p\|\mat A\|_F^2\|\mat B\|_F^2}{b},
\end{equation*}
which completes the proof.
\end{proof}

\begin{proof}[Proof of Lemma \ref{lem_main_2}]
Eq. (\ref{eq_e_ver2}) immediately follows Eq. (\ref{eq_e_ver1}) by adding everything together.
For the variance bound we cannot use the same argument because in general the $m^2$ random variables are neither independent nor uncorrelated.
Instead, we compute the variance by definition.
First we compute the expected square term as follows:
\begin{multline}
\mathbb E\left[\left(\sum_{i,j}w_iw_j\langle\tilde{\vct s}_{\mat A_i},\tilde{\vct s}_{\widetilde{\mat B}_j}\rangle\right)^2\right] \\
= \sum_{\substack{i,j,i',j'\\\vct l,\vct l',\vct r,\vct r'}}{w_iw_jw_{i'}w_{j'}\cdot \mathbb E[\delta(\vct l,\vct l')\delta(\vct r,\vct r')]\cdot
\mathbb E[\sigma(\vct l)\bar\sigma(\vct l')\bar\sigma(\vct r)\sigma(\vct r')]\cdot
[\mat A_i]_{\vct l}[\mat A_{i'}]_{\vct r}[\widetilde{\mat B}_j]_{\vct l'}[\widetilde{\mat B}_{j'}]_{\vct r'}}.
\end{multline}
Define $\mat X=\sum_i{w_i\mat A_i}$ and $\mat Y=\sum_i{w_i\mat B_i}$.
The above equation can then be simplified as
\begin{equation}
\mathbb E\left[\left(\sum_{i,j}w_iw_j\langle\tilde{\vct s}_{\mat A_i},\tilde{\vct s}_{\widetilde{\mat B}_j}\rangle\right)^2\right]
= \sum_{\vct l,\vct l',\vct r,\vct r'}{\mathbb E[\delta(\vct l,\vct l')\delta(\vct r,\vct r')]\cdot \mathbb E[\sigma(\vct l)\bar\sigma(\vct l')\bar\sigma(\vct r)\sigma(\vct r')]\cdot
\mat X_{\vct l}\mat X_{\vct r}\widetilde{\mat Y}_{\vct l'}\widetilde{\mat Y}_{\vct r'}}.
\end{equation}
Applying Lemma \ref{lem_main_1} we have
\begin{equation}
\mathbb V\left[\sum_{i,j}w_iw_j\langle\tilde{\vct s}_{\mat A_i},\tilde{\vct s}_{\widetilde{\mat B}_j}\rangle\right]
\leq \frac{4^p\|\mat X\|_F^2\|\mat Y\|_F^2}{b}.
\end{equation}
Finally, note that
\begin{equation}
\|\mat X\|_F^2 = \sum_{i,j}{w_iw_j\langle\mat A_i,\mat A_j\rangle} \leq \sum_{i,j}{w_iw_j\|\mat A_i\|_F\|\mat A_j\|_F} \leq \|\vct w\|^2\max_i{\|\mat A_i\|_F^2}.
\end{equation}

\end{proof}

With Lemma \ref{lem_main_1} and \ref{lem_main_2}, we can easily prove Theorem \ref{thm_tensor_error}.
\begin{proof}[Proof of Theorem \ref{thm_tensor_error}]
First we prove the $\varepsilon_1(\vct u)$ bound.
Let $\mat A=\mat T$ and $\mat B = \vct u^{\otimes 3}$. Note that $\|\mat A\|_F = \|\mat T\|_F$ and $\|\mat B\|_F = \|\vct u\|^2 = 1$.
Note that $[\mat T(\mat I,\vct u,\vct u)]_i = \mat T(\vct e_i,\vct u,\vct u)$.
Next we consider $\vct\varepsilon_2(\vct u)$ and let $\mat A=\mat T$, $\mat B = \vct e_i\otimes\vct u\otimes\vct u$.
Again we have $\|\mat A\|_F = \|\mat T\|_F$ and $\|\mat B\|_F = 1$.
A union bound over all $i=1,\cdots, n$ yields the result.
For the inequality involving $\vct w$ we apply Lemma \ref{lem_main_2}.
\end{proof}

\subsection{Analysis of fast robust tensor power method}\label{appsec:tensor_power_method}

In this section, we prove Theorem \ref{thm_refine_tensor_power}, a more refined version of Theorem \ref{thm_tensor_power} in Section \ref{subsec:analysis_tensor_power_method}.
We structure the section by first demonstrating the convergence behavior of noisy tensor power method,
and then show how error accumulates with deflation. Finally, the overall bound is derived by combining these two parts.

\subsubsection{Recovering the principal eigenvector}
Define the angle between two vectors $\vct v$ and $\vct u$ to be $\theta \rbr{\vct v, \vct u}.$
First, in Lemma \ref{lem_first_phase} we show that
if the initialization vector $\vct u_0$ is randomly chosen from the unit sphere,
then the angle $\theta$ between the iteratively updated vector $\vct u_t$ and the largest eigenvector of tensor $\mat T,$ $\vct v_1,$ will decrease
to a point that $\tan \theta \rbr{\vct v_1, \vct u_t} < 1$.
Afterwards, in Lemma \ref{lem_second_phase}, we use a similar approach as in \citeapp{noisy-tensor-power-method} to prove that the error between the final estimation
and the ground truth is bounded.

Suppose $\mat T$ is the exact low-rank ground truth tensor and
Each noisy tensor update can then be written as
\begin{equation}
\tilde{\vct u}_{t+1} = \mat T(\mat I,\vct u_t,\vct u_t) + \tilde{\vct\varepsilon}(\vct u_t),
\end{equation}
where $\tilde{\vct\varepsilon}(\vct u_t) = \mat E(\mat I,\vct u_t,\vct u_t) + \vct\varepsilon_{2,T}(\vct u_t)$
is the noise coming from statistical and tensor sketch approximation error.

{
Before presenting key lemmas, we first define \emph{$\gamma$-separation}, a concept introduced in \citeapp{tensor-power-method}.
\begin{defn}[$\gamma$-separation, \citeapp{tensor-power-method}]
Fix $i^*\in[k]$, $\vct u\in\mathbb R^n$ and $\gamma>0$.
$u$ is \emph{$\gamma$-separated} with respect to $\vct v_{i^*}$ if the following holds:
\begin{equation}
\lambda_{i^*}\langle\vct u, \vct v_{i^*}\rangle - \max_{i\in[k]\backslash\{i^*\}}{\lambda_i\langle\vct u,\vct v_i\rangle}
\geq \gamma\lambda_{i^*}\langle\vct u, \vct v_{i^*}\rangle.
\end{equation}
\end{defn}

Lemma \ref{lem_first_phase} analyzes the first phase of the noisy tensor power algorithm.
It shows that if the initialization vector $\vct u_0$ is $\gamma$-separated with respect to $\vct v_1$ and the magnitude of noise $\tilde{\vct\varepsilon}(\vct u_t)$ is small
at each iteration $t$, then after a short number of iterations we will have inner product between $\vct u_t$ and $\vct v_1$ at least a constant.
}

\begin{lem}
\label{lem_first_phase}
Let $\cbr{\vct v_1, \vct v_2, \cdots, \vct v_k}$ and $\cbr{ \lambda_1, \lambda_2, \cdots, \lambda_k}$ be eigenvectors and eigenvalues of tensor $\mat T \in \mathbb{R}^{n \times n \times n},$
where $\lambda_1 \abr{\inner{\vct v_1}{\vct u_0}} = \max \limits_{i \in [k]} \lambda_i \abr{\inner{\vct v_i}{\vct u_0}}.$
Denote $\mat V=(\vct v_1,\cdots,\vct v_k)\in\mathbb R^{n\times k}$ as the matrix for eigenvectors.
Suppose that for every iteration $t$ the noise satisfies
\begin{align}
  \big|\langle\vct v_i,\tilde{\vct\varepsilon}(\vct u_t)\rangle\big| \leq \epsilon_1\,\, \, \forall \, i \in [n] \,\,\, \text{and}\,\,\,
  \nbr{\mat V^\top\tilde{\vct\varepsilon}(\vct u_t)} \leq \epsilon_2;
\end{align}
suppose also the initialization $\vct u_0$ is $\gamma$-separated with respect to $\vct v_1$ for some $\gamma \in (0.5,1)$. If $\tan \theta \rbr{\vct v_1, \vct u_0} > 1,$ and
\begin{align}
\epsilon_1 \leq \min \rbr{\frac{1}{4\frac{\max_{i \in [k]} \lambda_i}{\lambda_1} +2}, \frac{1- (1+\alpha)/2}{2} } \lambda_1 \inner{\vct v_1}{\vct u_0}^2 \,\,\, \text{and}\,\,\, \epsilon_2 \leq  \frac{1- (1+\alpha) /2}{2 \sqrt{2}(1+\alpha)}  \lambda_1 \abr{\inner{\vct v_1}{\vct u_0}}
\end{align}
for some $\alpha >0$,
then for a small constant $\rho >0$, there exists a $T > \log_{1+\alpha} \rbr{1+\rho}{\tan \theta \rbr{\vct v_1, \vct u_0}}$ such that after $T$ iteration, we have $\tan \theta \rbr{\vct v_1, \vct u_T} < \frac{1}{1+\rho},$
\end{lem}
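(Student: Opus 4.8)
The plan is to track the coordinates of $\vct u_t$ in the eigenbasis and show that $\tan\theta(\vct v_1,\vct u_t)$ contracts by a factor of at least $1+\alpha$ per iteration until it falls below $1/(1+\rho)$. Since $\mat T$ is orthogonally decomposable, $\mat T(\mat I,\vct u_t,\vct u_t)=\sum_i\lambda_i\inner{\vct v_i}{\vct u_t}^2\vct v_i$, so projecting the noisy update onto each $\vct v_i$ gives
\begin{equation*}
\inner{\vct v_i}{\tilde{\vct u}_{t+1}} = \lambda_i\inner{\vct v_i}{\vct u_t}^2 + \inner{\vct v_i}{\tilde{\vct\varepsilon}(\vct u_t)},\qquad i\in[n].
\end{equation*}
Writing $c_i^{(t)}=\inner{\vct v_i}{\vct u_t}$ and introducing the spectral ratios $r_i^{(t)}=\lambda_i c_i^{(t)}/(\lambda_1 c_1^{(t)})$, the noiseless map acts as $r_i\mapsto r_i^2$. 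I would first note that normalization does not change angles, so it suffices to analyze $\tilde{\vct u}_{t+1}$ directly, and that (extending $\{\vct v_i\}$ to a full orthonormal basis) $\tan^2\theta_t=\sum_{i\ge2}(c_i^{(t)}/c_1^{(t)})^2=\sum_{i\ge2}(\lambda_1/\lambda_i)^2(r_i^{(t)})^2$.

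The core of the argument is an inductive invariant that maintains $\gamma$-separation, namely $\max_{i\neq1}|r_i^{(t)}|\le 1-\gamma$ for every $t$, together with the growth guarantee $c_1^{(t)}\ge |c_1^{(0)}|=|\inner{\vct v_1}{\vct u_0}|$. The hypotheses that index $1$ attains $\max_i\lambda_i|\inner{\vct v_i}{\vct u_0}|$ and that $\vct u_0$ is $\gamma$-separated supply the base case. For the inductive step, observe that after one update every $c_i$ inherits the sign of $\lambda_i>0$ and is therefore nonnegative, so $\gamma$-separation directly bounds $|r_i|$; the quadratic self-map $r_i\mapsto r_i^2$ with $0\le r_i\le 1-\gamma<1$ keeps the ratios below $1-\gamma$. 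The calibrated bounds on $\epsilon_1$ and $\epsilon_2$ then guarantee that the additive noise perturbs each $r_i^{(t+1)}$ and the denominator $\inner{\vct v_1}{\tilde{\vct u}_{t+1}}=\lambda_1(c_1^{(t)})^2+\inner{\vct v_1}{\tilde{\vct\varepsilon}(\vct u_t)}$ only by a controlled amount; in particular the first part of the $\epsilon_1$ bound keeps $\inner{\vct v_1}{\tilde{\vct u}_{t+1}}$ bounded below by a constant multiple of $\lambda_1(c_1^{(0)})^2$, so the denominator never collapses.

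Granting the invariant, and using $1-\gamma<1/2<1/(1+\alpha)$ for $\gamma\in(0.5,1)$ and $\alpha<1$, the per-step contraction follows in the noiseless idealization from
\begin{equation*}
\tan^2\theta_{t+1}=\sum_{i\ge2}\Big(\tfrac{\lambda_1}{\lambda_i}\Big)^2 (r_i^{(t)})^4\le \big(\max_{i\ge2}(r_i^{(t)})^2\big)\,\tan^2\theta_t\le \tfrac{1}{(1+\alpha)^2}\tan^2\theta_t.
\end{equation*}
The noise terms, bounded coordinatewise by $\epsilon_1$ and in aggregate through $\nbr{\mat V^\top\tilde{\vct\varepsilon}}$ by $\epsilon_2$, are absorbed using the second part of the $\epsilon_1$ bound and the $\epsilon_2$ bound, which are sized precisely so that $\tan\theta_{t+1}\le(1+\alpha)^{-1}\tan\theta_t$ still holds. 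Iterating yields $\tan\theta_T\le(1+\alpha)^{-T}\tan\theta_0$, so any $T>\log_{1+\alpha}\big((1+\rho)\tan\theta(\vct v_1,\vct u_0)\big)$ forces $\tan\theta(\vct v_1,\vct u_T)<1/(1+\rho)$.

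I expect the main obstacle to be the noisy maintenance of the $\gamma$-separation invariant together with the lower bound on the denominator: one must verify simultaneously, across all iterations, that the perturbations from $\tilde{\vct\varepsilon}(\vct u_t)$ neither push any ratio $r_i$ above $1/(1+\alpha)$ nor shrink $c_1^{(t)}$ below $|\inner{\vct v_1}{\vct u_0}|$. This is exactly where the explicit constants in the hypotheses on $\epsilon_1$ and $\epsilon_2$ must be deployed with care; the sign bookkeeping (since only $|\inner{\vct v_1}{\vct u_0}|$ is controlled) and the reduction of $\tilde{\vct\varepsilon}\in\mathbb R^n$ to its action in the eigenbasis are the delicate points.
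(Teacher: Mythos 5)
Your plan follows the same architecture as the paper's proof (project the noisy update onto the eigenbasis, maintain a ratio invariant by induction, convert it into a per-step contraction of $\tan\theta$ by a factor $1+\alpha$, then iterate), but your inductive invariant is set at the wrong level and the induction fails as stated. You claim $\max_{i\neq 1}\abr{r_i^{(t)}}\le 1-\gamma$ is preserved for all $t$. Writing out one noisy update gives
\begin{equation*}
\abr{r_i^{(t+1)}} \;\le\; \frac{\rbr{r_i^{(t)}}^2 + (\lambda_i/\lambda_1)\,\epsilon_1/\rbr{\lambda_1 \inner{\vct v_1}{\vct u_t}^2}}{1-\epsilon_1/\rbr{\lambda_1 \inner{\vct v_1}{\vct u_t}^2}},
\end{equation*}
and the hypotheses allow $\epsilon_1$ as large as $\frac{1}{4\kappa+2}\lambda_1\inner{\vct v_1}{\vct u_0}^2$ with $\kappa=\max_i\lambda_i/\lambda_1$, so the additive noise term can be as large as $\kappa/(4\kappa+2)$, which approaches $1/4$ and bears no relation to $1-\gamma$. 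Concretely, take $\gamma=0.9$, $\kappa=1$, $r_i^{(t)}=0.1$, and adversarial noise $\pm\epsilon_1$ on coordinates $i$ and $1$: then $\abr{r_i^{(t+1)}}$ can reach $(0.01+1/6)/(5/6)\approx 0.21 > 1-\gamma$. So $\gamma$-separation does not self-propagate at level $1-\gamma$; the constants in the hypotheses are calibrated to propagate only the weaker invariant $\max_{i\neq 1}\abr{r_i^{(t)}}\le 1/2$. This is exactly what the paper maintains: it proves $\lambda_1\abr{\inner{\vct v_1}{\vct u_{t+1}}}/\rbr{\lambda_i\abr{\inner{\vct v_i}{\vct u_{t+1}}}}\ge 2$ by the computation above, and uses $\gamma>0.5$ only to seed the base case $t=0$.

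There is a second, related accounting error: you spend the entire budget $\max_i\abr{r_i^{(t)}}\le 1/(1+\alpha)$ on the \emph{noiseless} contraction $\tan^2\theta_{t+1}\le\rbr{\max_i (r_i^{(t)})^2}\tan^2\theta_t$ and then say the noise is "absorbed" on top while still retaining the factor $1/(1+\alpha)$. That cannot work: if the noiseless factor is already $1/(1+\alpha)$, adding noise yields something strictly worse. In the paper's accounting the noiseless gain is the factor $2$ coming from the $1/2$-invariant, and the gap between $2$ and $1+\alpha$ --- encoded in the constants $\frac{1-(1+\alpha)/2}{2}$ and $\frac{1-(1+\alpha)/2}{2\sqrt2(1+\alpha)}$ of the $\epsilon_1,\epsilon_2$ hypotheses --- is what pays for the noise, via the bound with numerator $\lambda_1\abr{\inner{\vct v_1}{\vct u_t}}-\epsilon_1/\abr{\inner{\vct v_1}{\vct u_t}}$ and denominator $\tfrac12\lambda_1\abr{\inner{\vct v_1}{\vct u_t}}+\epsilon_2/\rbr{1-\inner{\vct v_1}{\vct u_t}^2}^{1/2}$ (the hypothesis $\tan\theta(\vct v_1,\vct u_0)>1$ enters precisely to lower-bound $\rbr{1-\inner{\vct v_1}{\vct u_t}^2}^{1/2}$ by $1/\sqrt2$). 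Both defects are repairable without new ideas --- replace your invariant by the $1/2$-level one and redo the contraction with this split --- after which your argument coincides with the paper's. A minor further slip: under noise it is not true that "every $c_i$ inherits the sign of $\lambda_i$" (small coordinates can have their sign flipped by $\tilde{\vct\varepsilon}$), though this is harmless since only $\abr{r_i}$ is ever used.
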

\begin{proof}
   Let $\tilde{\vct u}_{t+1} = {\mat T} \rbr{\mat I, \vct u_t, \vct u_t} + \tilde{\vct\varepsilon}(\vct u_t)$ and $\vct u_{t+1} = \tilde{\vct u}_{t+1} / \nbr{\tilde{\vct u}_{t+1}}.$ For $\alpha \in (0,1), $ we try to prove that there exists a $T$ such that for $t > T$
   \begin{align}
   \frac{1}{\tan \theta \rbr{\vct v_1, \vct u_{t+1}}} = \frac{\abr{\inner{\vct v_1}{\vct u_{t+1}}}}{ \rbr{1- \inner{\vct v_1}{\vct u_{t+1}}^2 }^{1/2}} = \frac{\abr{\inner{\vct v_1}{ \tilde{\vct u}_{t+1}}}}{ \rbr{\sum \limits_{i=2}^n \inner{\vct v_i}{\tilde{\vct u}_{t+1}}^2 }^{1/2}} \geq 1.
   \end{align}
First we examine the numerator. Using the assumption $\big|\langle\vct v_i,\tilde{\vct\varepsilon}(\vct u_t)\rangle\big|\leq  \epsilon_1$
and the fact that $\inner{\vct v_i}{\tilde{\vct u}_{t+1}} = \lambda_i \inner{\vct v_i}{\vct u_t}^2 + \langle\vct v_i,\tilde{\vct\varepsilon}(\vct u_t)\rangle,$ we have
\begin{align}
\label{eq_projection_cos}
    \abr{\inner{\vct v_i}{\tilde{\vct u}_{t+1}}} \geq \lambda_i \inner{\vct v_i}{\vct u_t}^2 - \epsilon_1 \geq \abr{\inner{\vct v_i}{\vct u_t}} \rbr{\lambda_i \abr{\inner{\vct v_i}{\vct u_t}} - \epsilon_1/\abr{\inner{\vct v_i}{\vct u_t}}}.
\end{align}
For the denominator, by H\"older's inequality we have
\begin{align}
\label{eq_projection_sin}
 \rbr{\sum \limits_{i=2}^n \inner{\vct v_i}{\tilde{\vct u}_{t+1}}^2 }^{1/2}
  &= \rbr{\sum \limits_{i = 2}^n  \rbr{ \lambda_i \inner{\vct v_i}{\vct u_t}^2 + \langle\vct v_i,\tilde{\vct\varepsilon}(\vct u_t)\rangle }^{1/2} } \\
 &\leq \rbr{\sum \limits_{i = 2}^n \lambda_i^2 \inner{\vct v_i}{\vct u_t}^4}^{1/2} + \rbr{\sum \limits_{i = 2}^n \langle\vct v_i,\tilde{\vct\varepsilon}(\vct u_t)\rangle  ^2}^{1/2}\\
 &\leq \max \limits_{i \neq 1} \lambda_i \abr{\inner{\vct v_i}{\vct u_t}} \rbr{\sum \limits_{i=2}^n \inner{\vct v_i}{\vct u_t}^2}^{1/2} +\epsilon_2\\
 & \leq \rbr{1 - \inner{\vct v_1}{\vct u_t}^2}^{1/2} \rbr{\max \limits_{i \neq 1} \lambda_i \abr{\inner{\vct v_i}{\vct u_t}} + \epsilon_2/\rbr{1 - \inner{\vct v_1}{\vct u_t}^2}^{1/2} }
\end{align}
Equation \eqref{eq_projection_cos} and \eqref{eq_projection_sin} yield
\begin{align}
  \frac{1}{\tan \theta \rbr{\vct v_1, \vct u_{t+1}}}
  &\geq \frac{\abr{\inner{\vct v_1}{\vct u_t}}}{\rbr{1 - \inner{\vct v_1}{\vct u_t}^2}^{1/2}} \frac{ \lambda_1 \abr{\inner{\vct v_1}{\vct u_t}} - \epsilon_1/ \abr{\inner{\vct v_1}{\vct u_t}}}{ \max \limits_{i \neq 1} \lambda_i \abr{\inner{\vct v_i}{\vct u_t}} + \epsilon_2/\rbr{1 - \inner{\vct v_1}{\vct u_t}^2}^{1/2}}\\
  & = \frac{1}{\tan \theta \rbr{\vct v_1, \vct u_t}} \frac{ \lambda_1 \abr{\inner{\vct v_1}{\vct u_t}} - \epsilon_1/ \abr{\inner{\vct v_1}{\vct u_t}}}{ \max \limits_{i \neq 1} \lambda_i \abr{\inner{\vct v_i}{\vct u_t}} + \epsilon_2/\rbr{1 - \inner{\vct v_1}{\vct u_t}^2}^{1/2}}
\end{align}
To prove that the second term is larger than $1 + \alpha,$ we first show that when $t = 0,$ the inequality holds. Since the initialization vector is a $\gamma-$separated vector, we have
\begin{align}
   \lambda_1 \abr{\inner{\vct v_1}{\vct u_0}} - \max \limits_{i \in [k]} \lambda_i \abr{\inner{\vct v_i}{\vct u_0}} &\geq \gamma \lambda_1 \abr{\inner{\vct v_1}{\vct u_0} },\\
   \label{eq_gamma_separation}
   \max \limits_{i \in [k]} \lambda_i \abr{\inner{\vct v_i}{\vct u_0}} &\leq (1-\gamma)  \lambda_1 \abr{\inner{\vct v_1}{\vct u_0}} \leq 0.5 \lambda_1 \abr{\inner{\vct v_1}{\vct u_0}},
\end{align}
the last inequality holds since $\gamma > 0.5.$
Note that we assume $\tan \theta \rbr{\vct v_1, \vct {u_0}} > 1$ and hence $\inner{\vct v_1}{\vct u_0}^2 < 0.5$.
Therefore,
\begin{align}
\epsilon_2 \leq \frac{1- (1+\alpha) /2}{2 \sqrt{2}(1+\alpha)} \lambda_1 \abr{\inner{\vct v_1}{\vct u_0}} \leq \frac{  \rbr{1 - \inner{\vct v_1}{\vct u_0}^2}^{1/2} \rbr{1- (1+\alpha)/2}}{2 (1+\alpha)} \lambda_1 \abr{\inner{\vct v_1}{\vct u_0}}.
\end{align}

\noindent Thus, for $t = 0,$ using the condition for $\epsilon_1$ and $\epsilon_2$ we have
\begin{align}
\label{eq_1_plus_alpha}
\frac{ \lambda_1 \abr{\inner{\vct v_i}{\vct u_0}} - \epsilon_1/ \abr{\inner{\vct v_i}{\vct u_0}}}{ \max \limits_{i \neq 1} \lambda_i \abr{\inner{\vct v_i}{\vct u_0}} + \epsilon_2/\rbr{1 - \inner{\vct v_1}{\vct u_0}^2}^{1/2}}
\geq \frac{ \lambda_1 \abr{\inner{\vct v_i}{\vct u_0}} - \epsilon_1/ \abr{\inner{\vct v_i}{\vct u_0}}}{ 0.5 \lambda_1 \abr{\inner{\vct v_1}{\vct u_0}} + \epsilon_2/\rbr{1 - \inner{\vct v_1}{\vct u_0}^2}^{1/2}} \geq 1+\alpha.
\end{align}
The result yields $1/ \tan \theta \rbr{\vct v_1, \vct u_1} > (1+\alpha)/ \tan \theta \rbr{\vct v_1, \vct u_0}.$ This also indicates that $\abr{\inner{\vct v_1}{\vct u_1}} > \abr{\inner{\vct v_1}{\vct u_0}},$ which implies that
\begin{align}
\label{eq_2_condition}
&\epsilon_1 \leq \min \rbr{\frac{1}{4 \frac{\max_{i \in [k]} \lambda_i}{\lambda_1} + 2}, \frac{1- (1+\alpha)/2}{2} } \lambda_1 \inner{\vct v_1}{\vct u_t}^2\,\,\, \text{and} \,\,\, \epsilon_2 \leq  \frac{1- (1+\alpha)/2}{2 \sqrt{2}(1+\alpha)}  \lambda_1 \abr{\inner{\vct v_1}{\vct u_t}}
\end{align}
also holds for $t = 1.$
Next we need to make sure that for $t \geq 0$
\begin{align}
&\max \limits_{i \neq 1} \lambda_i \abr{\inner{\vct v_i}{\vct u_t}}  \leq 0.5 \lambda_1 \abr{\inner{\vct v_1}{\vct u_t}}.
 \end{align}
In other words, we need to show that $\frac{\lambda_1 \abr{\inner{\vct v_1 }{\vct u_t} }}{ \max \limits_{i \neq 1} \lambda_i \abr{\inner{\vct v_i }{\vct u_t} }} \geq 2.$
From Equation \eqref{eq_gamma_separation}, for $t = 0,$ $\frac{\lambda_1 \abr{\inner{\vct v_1 }{\vct u_t} }}{ \max \limits_{i \neq 1} \lambda_i \abr{\inner{\vct v_i }{\vct u_t} }} \geq \frac{1}{1-\gamma} \geq 2.$ For every $i \in [k],$
\begin{align}
   \abr{\inner{\vct v_i}{\tilde{\vct u}_{t+1}}} \leq \lambda_i \abr{\inner{\vct v_i}{\vct u_t}}^2 + \epsilon_1 \leq \abr{\inner{\vct v_i}{\vct u_t}}\rbr{\lambda_i \abr{\inner{\vct v_i}{\vct u_t}} + \epsilon_1/\abr{\inner{\vct v_i}{\vct u_t}}}.
\end{align}
With equation \eqref{eq_projection_cos}, we have
\begin{align}
\frac{\lambda_1 \abr{\inner{\vct v_1}{\vct u_{t+1}} }}{ \lambda_i \abr{\inner{\vct v_i}{\vct u_{t+1}} }} = \frac{\lambda_1 \abr{\inner{\vct v_1}{\tilde{\vct u}_{t+1}}} }{ \lambda_i \abr{\inner{\vct v_i}{\tilde{\vct u}_{t+1}}}}
&\geq \frac{\lambda_1 \abr{\inner{\vct v_1}{\vct u_t}} \rbr{\lambda_1 \abr{\inner{\vct v_1}{\vct u_t}} - \frac{\epsilon_1}{\abr{\inner{\vct v_1}{\vct u_t} }} } }{ \lambda_i \abr{\inner{\vct v_i}{\vct u_t}} \rbr{\lambda_i \abr{\inner{\vct v_i}{\vct u_t}} - \frac{\epsilon_1}{\abr{\inner{\vct v_i}{\vct u_t} }} }    } \\
&= \rbr{\frac{\lambda_1 \abr{\inner{\vct v_1}{\vct u_{t}} }}{ \lambda_i \abr{\inner{\vct v_i}{\vct u_{t}} }}}^2 \frac{1- \frac{\epsilon_1}{\lambda_1 \inner{\vct v_1}{\vct u_t}^2} }{ 1+ \frac{\lambda_i}{\lambda_1} \frac{\epsilon_1}{\lambda_1 \inner{\vct v_1}{\vct u_t}^2} \rbr{\frac{\lambda_1 \abr{\inner{\vct v_1}{\vct u_{t}} }}{ \lambda_i \abr{\inner{\vct v_i}{\vct u_{t}} }}}^2   }\\
& \geq \rbr{\frac{\lambda_1 \abr{\inner{\vct v_1}{\vct u_{t}} }}{ \lambda_i \abr{\inner{\vct v_i}{\vct u_{t}} }}}^2 \frac{1- \frac{\epsilon_1}{\lambda_1 \inner{\vct v_1}{\vct u_t}^2} }{ 1+ \frac{\max \limits_{i \in [k]} \lambda_i}{\lambda_1} \frac{\epsilon_1}{\lambda_1 \inner{\vct v_1}{\vct u_t}^2} \rbr{\frac{\lambda_1 \abr{\inner{\vct v_1}{\vct u_{t}} }}{ \lambda_i \abr{\inner{\vct v_i}{\vct u_{t}} }}}^2   }\\
& =  \frac{1- \frac{\epsilon_1}{\lambda_1 \inner{\vct v_1}{\vct u_t}^2} }{ \frac{1}{\rbr{\frac{\lambda_1 \abr{\inner{\vct v_1}{\vct u_{t}} }}{ \lambda_i \abr{\inner{\vct v_i}{\vct u_{t}} }}}^2 }+ \frac{\max_{i \in [k]} \lambda_i}{\lambda_1} \frac{\epsilon_1}{\lambda_1 \inner{\vct v_1}{\vct u_t}^2}   }.
\end{align}
Let $\kappa = \frac{\max_{i \in [k]} \lambda_i}{\lambda_1}$. For $t = 0,$ with conditions on $\epsilon_1$ the following holds:
\begin{align}
\frac{\lambda_1 \abr{\inner{\vct v_1}{\vct u_{1}} }}{ \lambda_i \abr{\inner{\vct v_i}{\vct u_{1}} }}
 &\geq \frac{1- \frac{\epsilon_1}{\lambda_1 \inner{\vct v_1}{\vct u_0}^2} }{ \frac{1}{\rbr{\frac{\lambda_1 \abr{\inner{\vct v_1}{\vct u_{0}} }}{ \lambda_i \abr{\inner{\vct v_i}{\vct u_{0}} }}}^2 }+ \frac{\max_{i \in [k]} \lambda_i}{\lambda_1} \frac{\epsilon_1}{\lambda_1 \inner{\vct v_1}{\vct u_0}^2}   }. \\
 & \geq \frac{1 - \frac{1}{4\kappa + 2}}{ \frac{1}{4} + \frac{\kappa}{4\kappa + 2}} =  2
\end{align}
With the two conditions stated in Equation \eqref{eq_2_condition}, following the same step in \eqref{eq_1_plus_alpha}, we have $\frac{1}{\tan \theta \rbr{\vct v_1,u_2 }} \geq (1+\alpha) \frac{1}{\tan \theta \rbr{\vct v_1,u_1 }}.$
 By induction, $\frac{1}{\tan \theta \rbr{\vct v_1,u_{t+1} }} \geq (1+\alpha) \frac{1}{\tan \theta \rbr{\vct v_1,t }}.$
 for $t \geq 0$.
Subsequently,
\begin{align}
     \frac{1}{\tan \theta \rbr{\vct v_1,u_{T} }} \geq (1+\alpha)^T \frac{1}{\tan \theta \rbr{\vct v_1,\vct u_0 }}.
\end{align}
Finally, we complete the proof by setting $T > \log_{1+\alpha} \rbr{1 + \rho}{\tan \theta \rbr{\vct v_1, \vct u_0}}$.
\end{proof}

{
Next, we present Lemma \ref{lem_second_phase}, which analyzes the second phase of the noisy tensor power method.
The second phase starts with $\tan\theta(\vct v_1,\vct u_0) < 1$, that is, the inner product of $\vct v_1$ and $\vct u_0$ is lower bounded by 1/2.
}
\begin{lem}
\label{lem_second_phase}
Let $\vct v_1$ be the principal eigenvector of a tensor $\mat T$ and let $\vct u_0$ be an arbitrary vector in $\mathbb R^d$ that satisfies $\tan\theta(\vct v_1,\vct u_0) < 1$.
Suppose at every iteration $t$ the noise satisfies
\begin{align}
   4\|\tilde{\vct\varepsilon}(\vct u_t)\| \leq \epsilon\rbr{\lambda_1 - \lambda_2} \,\,\, \text{and} \,\,\, 4\big| \langle\vct v_1,\tilde{\vct\varepsilon}(\vct u_t)\rangle\big| \leq \rbr{\lambda_1 - \lambda_2} \cos^2 \theta \rbr{\vct v_1, \vct u_0}
\end{align}
for some $\epsilon < 1.$
Then with high probability there exists $T = O\rbr{\frac{\lambda_1}{\lambda_1 - \lambda_2}\log (1/\epsilon)}$ such that after $T$ iteration we have
$ \tan \theta  \rbr{\vct v_1, \vct u_T} \leq \epsilon$.
\end{lem}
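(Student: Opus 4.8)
The plan is to track the tangent of the angle, $x_t := \tan\theta(\vct v_1,\vct u_t)$, and show it contracts geometrically by a fixed factor at each step until it falls below the noise floor $\epsilon$. First I would expand the noisy update $\tilde{\vct u}_{t+1} = \mat T(\mat I,\vct u_t,\vct u_t) + \tilde{\vct\varepsilon}(\vct u_t)$ using the orthonormal eigendecomposition $\mat T = \sum_{i=1}^k\lambda_i\vct v_i^{\otimes 3}$, so that $\mat T(\mat I,\vct u_t,\vct u_t) = \sum_i \lambda_i\inner{\vct v_i}{\vct u_t}^2\vct v_i$. Writing $c_t = \abr{\inner{\vct v_1}{\vct u_t}}$ and $s_t = \sin\theta(\vct v_1,\vct u_t)$, I would lower bound the component along $\vct v_1$ by $\abr{\inner{\vct v_1}{\tilde{\vct u}_{t+1}}} \geq \lambda_1 c_t^2 - \abr{\inner{\vct v_1}{\tilde{\vct\varepsilon}(\vct u_t)}}$ and, mirroring the denominator estimate of Lemma \ref{lem_first_phase}, upper bound the orthogonal mass by
\begin{equation*}
\rbr{\sum_{i\geq 2}\inner{\vct v_i}{\tilde{\vct u}_{t+1}}^2}^{1/2} \leq \rbr{\sum_{i\geq 2}\lambda_i^2\inner{\vct v_i}{\vct u_t}^4}^{1/2} + \nbr{\tilde{\vct\varepsilon}(\vct u_t)} \leq \lambda_2 s_t^2 + \nbr{\tilde{\vct\varepsilon}(\vct u_t)},
\end{equation*}
where the last step uses $\lambda_i\leq\lambda_2$ for $i\geq 2$ together with $\sum_{i\geq 2}\inner{\vct v_i}{\vct u_t}^4\leq\rbr{\sum_{i\geq 2}\inner{\vct v_i}{\vct u_t}^2}^2 = s_t^4$.

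Dividing these two bounds gives the one-step recursion $x_{t+1}\leq\rbr{\lambda_2 s_t^2 + \nbr{\tilde{\vct\varepsilon}(\vct u_t)}}/\rbr{\lambda_1 c_t^2 - \abr{\inner{\vct v_1}{\tilde{\vct\varepsilon}(\vct u_t)}}}$. I would then carry, as an inductive invariant, that $x_t<1$ (hence $c_t^2>1/2$) and that the angle improves monotonically, $c_t^2\geq c_0^2$. Substituting the hypotheses $4\nbr{\tilde{\vct\varepsilon}(\vct u_t)}\leq\epsilon(\lambda_1-\lambda_2)$ and $4\abr{\inner{\vct v_1}{\tilde{\vct\varepsilon}(\vct u_t)}}\leq(\lambda_1-\lambda_2)c_0^2$, dividing numerator and denominator by $c_t^2$, and using $c_t^2\geq c_0^2>1/2$, the recursion collapses to
\begin{equation*}
x_{t+1}\leq\frac{\lambda_2 x_t^2 + (\lambda_1-\lambda_2)\epsilon/2}{\lambda_1 - (\lambda_1-\lambda_2)/4}.
\end{equation*}
The crux is then a two-regime argument: whenever $x_t>\epsilon$, I use $x_t^2<x_t$ (valid since $x_t<1$) and $\epsilon<x_t$ to replace both numerator terms, yielding $x_{t+1}\leq x_t\cdot\rbr{\lambda_2 + (\lambda_1-\lambda_2)/2}/\rbr{\lambda_1 - (\lambda_1-\lambda_2)/4}$. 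Writing $\Delta=\lambda_1-\lambda_2$ the contraction factor equals $(\lambda_1-\Delta/2)/(\lambda_1-\Delta/4)\leq 1-\Delta/(4\lambda_1)<1$, which simultaneously confirms $x_{t+1}<x_t<1$ and $c_{t+1}^2\geq c_t^2\geq c_0^2$, closing the induction.

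Finally I would iterate the geometric decay: as long as $x_t$ stays above $\epsilon$ we have $x_T\leq\rbr{1-\Delta/(4\lambda_1)}^{T}x_0$ with $x_0<1$, so the first time this falls to $\epsilon$ occurs at $T = O\rbr{\frac{\lambda_1}{\lambda_1-\lambda_2}\log(1/\epsilon)}$, using $\log\frac{1}{1-y}\geq y$. The main obstacle I anticipate is the careful bookkeeping of the two distinct noise bounds, one on the full norm of $\tilde{\vct\varepsilon}(\vct u_t)$ and one on its projection onto $\vct v_1$, together with the $c_0^2$ factors. The projection bound must be exactly strong enough to keep the denominator above a constant fraction of $\lambda_1$ without eroding the contraction, while the full-norm bound fixes the $\epsilon$-level floor in the numerator; aligning these constants so that the contraction factor stays strictly below one for every $x_t\in(\epsilon,1)$ is where the argument must be handled delicately rather than by a crude estimate.
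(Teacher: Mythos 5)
Your proposal is correct and follows essentially the same route as the paper's own proof: both track $\tan\theta(\vct v_1,\vct u_t)$, bound one noisy update by the same ratio $\rbr{\lambda_2\sin^2\theta_t+\nbr{\tilde{\vct\varepsilon}(\vct u_t)}}/\rbr{\lambda_1\cos^2\theta_t-\abr{\inner{\vct v_1}{\tilde{\vct\varepsilon}(\vct u_t)}}}$ using the two noise conditions, and then run the same two-regime argument (contract geometrically while above $\epsilon$, which the paper phrases as $\tan\theta_{t+1}\leq\max\rbr{\epsilon,\,q\tan\theta_t}$) to conclude $T=O\rbr{\tfrac{\lambda_1}{\lambda_1-\lambda_2}\log(1/\epsilon)}$. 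The only differences are cosmetic: you write the contraction factor as $1-\tfrac{\lambda_1-\lambda_2}{4\lambda_1}$ where the paper uses $(\lambda_2/\lambda_1)^{1/4}$, and you make explicit the monotonicity invariant $\cos^2\theta_t\geq\cos^2\theta_0$ that the paper's use of the $\vct u_0$-based noise bound at later iterates leaves implicit.
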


\begin{proof}
Define $\Delta := \frac{\lambda_1 - \lambda_2}{4}$ and $\mat X :=  \vct  v_1^\perp$.
We have the following chain of inequalities:
\begin{align}
\tan \theta  \rbr{\vct v_1, \mat T\rbr{\mat I,  \vct u, \vct u} + \tilde{\vct\varepsilon}(\vct u)}
&\leq \frac{ \nbr{ \mat X^T \rbr{\mat T\rbr{\mat I, \vct u, \vct u} + \tilde{\vct\varepsilon}(\vct u) }}}{ \nbr{ \vct v_1^T \rbr{\mat T\rbr{\mat I, \vct u,\vct u} + \tilde{\vct\varepsilon}(\vct u) } } } \\
& \leq \frac{\nbr{\mat X^T \mat T\rbr{\mat  I, \vct u, \vct u}} + \nbr{\mat V^T \tilde{\vct\varepsilon}(\vct u) }}{ \nbr{\vct v_1^T \mat T\rbr{\mat I, \vct u, \vct u}} - \nbr{\vct v_1^T \tilde{\vct\varepsilon}(\vct u)  }}\\
&\leq \frac{\lambda_2 \nbr{\mat X^T \vct u}^2 + \nbr{\tilde{\vct\varepsilon}(\vct u)  } }{ \lambda_1 \abr{\vct v_1^T \vct u }^2 - \big| \vct v_1^\top\tilde{\vct\varepsilon}(\vct u)\big|  } \\
& = \frac{\nbr{\mat X^T \vct u}^2}{\abr{ \vct v_1^T \vct u }^2 } \frac{\lambda_2}{\lambda_1 - \frac{ \abr{\vct v_1^\top\tilde{\vct\varepsilon}(\vct u) } }{\abr{\vct v_1^\top \vct u}^2}} + \frac{ \frac{ \nbr{ \tilde{\vct\varepsilon}(\vct u) } }{ \abr{\vct v_1^\top \vct u}^2 }  }{ \lambda_1 - \frac{ \big|\vct v_1^\top\tilde{\vct\varepsilon}(\vct u)\big| }{ \abr{\vct v_1^\top \vct u}^2 }   }\\
& \leq \tan^2 \theta  (\vct v_1, \vct u) \frac{\lambda_2}{\lambda_2 + 3\Delta} + \frac{\Delta \epsilon \rbr{1 + \tan^2 \theta \rbr{\vct v_1, \vct u}} }{\lambda_2 + 3\Delta}\\
& \leq \max \rbr{ \epsilon, \frac{\lambda_2 + \Delta \epsilon}{ \lambda_2 + 2 \Delta} \tan^2 \theta  \rbr{\vct v_1, \vct u}}\\
\label{eq_tangent_shrink}
& \leq \max \rbr{ \epsilon, \frac{\lambda_2 + \Delta \epsilon}{ \lambda_2 + 2 \Delta} \tan \theta  \rbr{\vct v_1, \vct u}}
\end{align}
The second step follows by triangle inequality. For $\vct u = \vct u_0 $, using the condition $\tan \rbr{\vct v_1, \vct u_0} < 1$ we obtain
\begin{align}
\tan \theta \rbr{\vct v_1, \vct u_1} \leq \max \rbr{ \epsilon, \frac{\lambda_2 + \Delta \epsilon}{ \lambda_2 + 2 \Delta} \tan^2 \theta  \rbr{\vct v_1, \vct u}} \leq \max \rbr{ \epsilon, \frac{\lambda_2 + \Delta \epsilon}{ \lambda_2 + 2 \Delta} \tan \theta  \rbr{\vct v_1, \vct u}}
\end{align}
Since $\frac{\lambda_2 + \Delta \epsilon}{ \lambda_2 + 2 \Delta} \leq \max \rbr{ \frac{\lambda_2}{\lambda_2 + \Delta}, \epsilon} \leq \rbr{\lambda_2/\lambda_1}^{1/4} < 1$, we have
\begin{align}
\tan \theta  \rbr{\vct v_1, \vct u_1} = \tan \theta  \rbr{\vct v_1, {\mat T} \rbr{\mat I, \vct u_0, \vct u_0}  + \tilde{\vct\varepsilon}(\vct u_t)  }  \leq \max \rbr{\epsilon, (\lambda_2 / \lambda_1)^{1/4} \tan \theta  \rbr{\vct v_1, \vct u_0}} < 1.
\end{align}
By induction,
\begin{align*}
\tan \theta \rbr{\vct v_1, \vct u_{t+1}} = \tan \theta  \rbr{\vct v_1, {\mat T} \rbr{\mat I, \vct u_t, \vct u_t}  + \tilde{\vct\varepsilon}(\vct u_t) } \leq \max \rbr{\epsilon, (\lambda_2 / \lambda_1)^{1/4} \tan \theta \rbr{\vct v_1, \vct u_t}} < 1.
\end{align*}
for every $t$.
Eq. \eqref{eq_tangent_shrink} then yields
\begin{align}
    \tan \theta \rbr{\vct v_1, \vct u_T} \leq \max \rbr{ \epsilon, \max{\epsilon, \rbr{\lambda_2/\lambda_1}^{L/4} \tan \theta \rbr{\vct v_1, \vct u_0}} }.
\end{align}
Consequently, after $T = \log \limits_{\rbr{\lambda_2/\lambda_1}^{-1/4}} (1/\epsilon)$ iterations we have $\tan \theta  \rbr{\vct v_1, \vct u_T} \leq \epsilon.$
\end{proof}

\begin{lem}
\label{lem_v_bound}
Suppose $\vct v_1$ is the principal eigenvector of a tensor $\mat T$ and let $\vct u_0 \in \mathbb{R}^n$. For some $\alpha, \rho > 0$ and $\epsilon <1,$ if at every step, the noise satisfies
\begin{align}
   \|\tilde{\vct\varepsilon}(\vct u_t)\| \leq  \epsilon\frac{\lambda_1-\lambda_2}{4}  \,\,\, \text{and} \,\,\, \big| \langle\vct v_1,\tilde{\vct\varepsilon}(\vct u_t)\rangle\big| \leq  \min \rbr{\frac{1}{4\frac{\max_{i \in [k]} \lambda_i}{\lambda_1} +2} \lambda_1, \frac{1- (1+\alpha)/2}{2\sqrt{2}(1+\alpha)} \lambda_1} \frac{1}{\tau^2 n},
\end{align}
then with high probability there exists an $T = O\rbr{ \log_{1+\alpha} \rbr{1+\rho} \tau \sqrt{n} + \frac{\lambda_1}{\lambda_1 - \lambda_2}\log (1/\epsilon)}$ such that
after $T$ iterations we have
$\nbr{\rbr{ \vct I - {\vct u}_T {\vct u}_T^T} \vct v_1} \leq \epsilon$.
\end{lem}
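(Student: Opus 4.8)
The plan is to derive Lemma \ref{lem_v_bound} by chaining the two-phase analysis of Lemmas \ref{lem_first_phase} and \ref{lem_second_phase}. First I would reduce the conclusion to a tangent bound: since $\vct v_1$ and $\vct u_T$ are unit vectors and $\vct u_T\vct u_T^\top$ is the orthogonal projector onto $\vct u_T$, a one-line expansion gives $\nbr{\rbr{\vct I-\vct u_T\vct u_T^\top}\vct v_1}^2 = 1 - \inner{\vct v_1}{\vct u_T}^2 = \sin^2\theta\rbr{\vct v_1,\vct u_T}$, so it suffices to show $\sin\theta\rbr{\vct v_1,\vct u_T}\le\epsilon$. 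As $\sin\theta\le\tan\theta$ on $[0,\pi/2)$, it is in turn enough to drive $\tan\theta\rbr{\vct v_1,\vct u_T}$ below $\epsilon$, which is exactly the output of Lemma \ref{lem_second_phase}.

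Next I would record the initialization facts. With $\vct u_0$ drawn uniformly from the unit sphere, anti-concentration of the projection $\inner{\vct v_1}{\vct u_0}$ gives, with high probability, $\inner{\vct v_1}{\vct u_0}^2\ge\frac{1}{\tau^2 n}$, hence $\cos\theta\rbr{\vct v_1,\vct u_0}\ge\frac{1}{\tau\sqrt n}$ and $\tan\theta\rbr{\vct v_1,\vct u_0}\le\tau\sqrt n$; the same event, amplified through the usual $\poly(k)$ restarts, also makes $\vct u_0$ be $\gamma$-separated with respect to $\vct v_1$ for some $\gamma\in(0.5,1)$. This is the only place randomness enters and is the source of the ``with high probability'' qualifier. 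I would then invoke Lemma \ref{lem_first_phase}: its hypothesis $\nbr{\mat V^\top\tilde{\vct\varepsilon}}\le\epsilon_2$ follows from $\nbr{\tilde{\vct\varepsilon}}\le\epsilon\frac{\lambda_1-\lambda_2}{4}$ because $\mat V$ has orthonormal columns, while its per-coordinate hypothesis $\abr{\inner{\vct v_i}{\tilde{\vct\varepsilon}}}\le\epsilon_1$ is supplied by the second bound of Lemma \ref{lem_v_bound}, whose factor $\frac{1}{\tau^2 n}$ is at most $\inner{\vct v_1}{\vct u_0}^2$ by the initialization estimate (with $\kappa=\max_i\lambda_i/\lambda_1$). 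Lemma \ref{lem_first_phase} then produces, after $T_1 > \log_{1+\alpha}\rbr{\rbr{1+\rho}\tan\theta\rbr{\vct v_1,\vct u_0}} = O\rbr{\log_{1+\alpha}\rbr{1+\rho}\tau\sqrt n}$ steps, a vector $\vct u_{T_1}$ with $\tan\theta\rbr{\vct v_1,\vct u_{T_1}} < \frac{1}{1+\rho} < 1$.

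Finally I would run the second phase from $\vct u_{T_1}$. Because $\tan\theta\rbr{\vct v_1,\vct u_{T_1}}<1$ we have $\cos^2\theta\rbr{\vct v_1,\vct u_{T_1}} > \tfrac12$, so both hypotheses of Lemma \ref{lem_second_phase}, namely $4\nbr{\tilde{\vct\varepsilon}}\le\epsilon\rbr{\lambda_1-\lambda_2}$ and $4\abr{\inner{\vct v_1}{\tilde{\vct\varepsilon}}}\le\rbr{\lambda_1-\lambda_2}\cos^2\theta\rbr{\vct v_1,\vct u_{T_1}}$, are implied by the strictly stronger bounds of Lemma \ref{lem_v_bound}. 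That lemma then yields, after a further $T_2 = O\rbr{\frac{\lambda_1}{\lambda_1-\lambda_2}\log(1/\epsilon)}$ steps, $\tan\theta\rbr{\vct v_1,\vct u_{T_1+T_2}}\le\epsilon$; combining with the reduction of the first paragraph and setting $T=T_1+T_2$ gives the stated iteration count together with $\nbr{\rbr{\vct I-\vct u_T\vct u_T^\top}\vct v_1}\le\epsilon$.

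I expect the main obstacle to be making a single pair of noise bounds serve both phases at once. Phase one requires \emph{every} coordinate $\abr{\inner{\vct v_i}{\tilde{\vct\varepsilon}}}$ to be controlled at the tiny scale $\lambda_1/\rbr{\tau^2 n}$ forced by the small initial overlap $\inner{\vct v_1}{\vct u_0}^2\approx 1/n$, whereas phase two only needs the coarser scale $\epsilon\rbr{\lambda_1-\lambda_2}$. For the off-principal directions $i\ne 1$ I only have $\abr{\inner{\vct v_i}{\tilde{\vct\varepsilon}}}\le\nbr{\tilde{\vct\varepsilon}}\le\epsilon\frac{\lambda_1-\lambda_2}{4}$, so the chaining forces $\epsilon$ to be of order $1/n$, consistent with the $\epsilon=O(1/(\lambda_1 n))$ regime of Theorem \ref{thm_tensor_power}, before that coarse bound drops below the phase-one threshold. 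Verifying this compatibility, together with establishing $\gamma$-separation and the $\Omega(1/n)$ lower bound on $\inner{\vct v_1}{\vct u_0}^2$ for the random start, is the delicate part; everything else is bookkeeping of the two lemmas already proved.
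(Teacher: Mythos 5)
Your proposal follows essentially the same route as the paper's own proof: random initialization gives $\tan\theta\rbr{\vct v_1,\vct u_0}\le\tau\sqrt{n}$ with high probability (the paper cites Lemma 2.5 of the noisy tensor power method paper for this), phase one and phase two are chained via Lemma \ref{lem_first_phase} and Lemma \ref{lem_second_phase} exactly as you describe, and the conclusion follows from $\nbr{\rbr{\mat I-\vct u_T\vct u_T^\top}\vct v_1}=\sin\theta\rbr{\vct v_1,\vct u_T}\le\tan\theta\rbr{\vct v_1,\vct u_T}\le\epsilon$. If anything, your handling of the noise conditions is more careful than the paper's: you correctly flag that the off-principal coordinates $\abr{\inner{\vct v_i}{\tilde{\vct\varepsilon}}}$, $i\ne 1$, are only controlled through $\nbr{\tilde{\vct\varepsilon}}\le\epsilon\frac{\lambda_1-\lambda_2}{4}$ and hence force $\epsilon=O(1/n)$, a compatibility issue the paper's proof passes over silently and absorbs into the regime $\epsilon=O(1/(\lambda_1 n))$ of Theorem \ref{thm_tensor_power}.
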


\begin{proof}
By Lemma 2.5 in \citeapp{noisy-tensor-power-method}, for any fixed orthonormal matrix $\mat V$ and a random vector $\vct u$, we have $\max_{i \in [K]} \tan \theta (\vct  v_i, \vct u) \leq \tau \sqrt{n}$ with all but $O(\tau^{-1}+e^{-\Omega(d)})$ probability. Using the fact that $\cos \theta \rbr{\vct v_1, \vct u_0} \geq 1/(1+\tan \theta \rbr{\vct v_1, \vct u_0}) \geq \frac{1}{\tau \sqrt{n}},$
the following bounds on the noise level imply the conditions in Lemma \ref{lem_first_phase}:
\begin{multline*}
\nbr{\mat V^T \tilde{\vct\varepsilon}(\vct u_t)} \leq \frac{1-(1+\alpha)/2}{2\sqrt{2}(1+\alpha) \tau \sqrt{n}}\,\,\, \text{and}\,\,\, \big| \langle\vct v_1,\tilde{\vct\varepsilon}(\vct u_t)\rangle\big|\\
 \leq  \min \rbr{\frac{1}{4\frac{\max_{i \in [k]} \lambda_i}{\lambda_1} +2} \lambda_1, \frac{1- (1+\alpha)/2}{2} \lambda_1} \frac{1}{\tau^2 n},\quad\forall t.
\label{eq_proof_lem_v_bound}
\end{multline*}
 Note that $\big| \langle\vct v_1,\tilde{\vct\varepsilon}(\vct u_t)\rangle\big| \leq  \frac{1- (1+\alpha)/2 }{2\sqrt{2}(1+\alpha)} \lambda_1 \frac{1}{\tau^2 n}$ implies the first bound
 in Eq. (\ref{eq_proof_lem_v_bound}).
In Lemma \ref{lem_second_phase}, we assume $\tan \theta \rbr{\vct v_1, \vct u_0}< 1$ and prove that for every
$\vct u_t,$ $\tan\theta  \rbr{\vct v_1, \vct u_t}< 1,$ which is equivalent to saying that at every step, $\cos \theta \rbr{\vct v_1, \vct u_t} > \frac{1}{\sqrt{2}}.$
By plugging the inequality into the second condition in Lemma \ref{lem_second_phase}, we have $\abr{ \inner{ \vct v_1}{ { \tilde{\vct \varepsilon}(\vct u_t)}}} \leq \frac{ \rbr{\lambda_1 - \lambda_2}}{8}$. 
The lemma then follows by the fact that $\nbr{\rbr{ \vct I - {\vct u}_T {\vct u_T}^T} \vct v_1} = \sin\theta  \rbr{\vct u_T, \vct v_1} \leq \tan \theta \rbr{\vct u_T, \vct v_1}\leq \epsilon$.
\end{proof}

\subsubsection{Deflation}

{
In previous sections we have upper bounded the Euclidean distance between the estimated and the true principal eigenvector of an input tensor $\mat T$.
In this section, we show that error introduced from previous tensor power updates can also be bounded.
As a result, we obtain error bounds between the entire set of base vectors $\{\vct v_i\}_{i=1}^k$ and their estimation $\{\hat{\vct v}_i\}_{i=1}^k$.
}

\begin{lem}
\label{lem_deflation}
Let $\{\vct v_1, \vct v_2, \cdots, \vct v_k\}$ and $\cbr{\lambda_1, \lambda_2, \cdots, \lambda_k}$ be orthonormal eigenvectors and eigenvalues of an input tensor $T$.
Define $\lambda_{\max} := \max_{i \in [k]} \lambda_i$.
Suppose $\{\hat{\vct v}_i\}_{i=1}^k$ and $\{\hat{\lambda}_i\}_{i=1}^k$ are estimated eigenvector/eigenvalue pairs.
Fix $\epsilon \geq 0$ and any $t \in [k]$.
If
\begin{align}
\big|\hat{\lambda}_i - \lambda_i\big| \leq \lambda_i \epsilon/2,\,\,\, \text{and} \,\,\, \nbr{\hat{\vct u}_i - \vct u_i} \leq \epsilon
\end{align}
for all $i \in [t]$, then for any unit vector $\vct u$ the following holds:
\begin{align}
    \nbr{\sum \limits_{i=1}^t \sbr{\lambda \vct v_i^{\otimes 3} - \hat{\lambda}_i \hat{\vct v}_i^{\otimes 3}}\rbr{\mat I, \vct u, \vct u}}^2
\leq& 4 \rbr{2.5 \lambda_{\max} + (\lambda_{\max} + 1.5) \epsilon }^2\epsilon^2   +9 (1+\epsilon/2)^2 \lambda_{\max}^2 \epsilon^4 \\
&+ 8(1+ \epsilon/2)^2 \lambda_{\max}^2 \epsilon^2   \\
\leq& 50 \lambda_{\max}^2\epsilon^2.
\end{align}
\end{lem}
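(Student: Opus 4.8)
The plan is to reduce the claim to a sum of rank-one contractions, split each summand so that its dominant error directions lie along the orthonormal basis $\{\vct v_i\}$, and treat the one remaining non-orthogonal contribution separately. First I would use the elementary identity $\vct w^{\otimes 3}(\mat I,\vct u,\vct u)=\langle\vct w,\vct u\rangle^2\vct w$ to rewrite each summand as a vector:
\begin{equation*}
\sbr{\lambda_i\vct v_i^{\otimes 3}-\hat\lambda_i\hat{\vct v}_i^{\otimes 3}}(\mat I,\vct u,\vct u)
= \lambda_i\langle\vct v_i,\vct u\rangle^2\vct v_i-\hat\lambda_i\langle\hat{\vct v}_i,\vct u\rangle^2\hat{\vct v}_i =: \vct D_i,
\end{equation*}
so that the quantity to bound is $\nbr{\sum_{i=1}^t\vct D_i}^2$. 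Writing $\delta_i:=\hat{\vct v}_i-\vct v_i$ (so $\nbr{\delta_i}\le\epsilon$), I would telescope
\begin{equation*}
\vct D_i = \underbrace{(\lambda_i-\hat\lambda_i)\langle\vct v_i,\vct u\rangle^2\vct v_i}_{\vct A_i} + \underbrace{\hat\lambda_i\rbr{\langle\vct v_i,\vct u\rangle^2-\langle\hat{\vct v}_i,\vct u\rangle^2}\vct v_i}_{\vct B_i} + \underbrace{\hat\lambda_i\langle\hat{\vct v}_i,\vct u\rangle^2\rbr{\vct v_i-\hat{\vct v}_i}}_{\vct C_i},
\end{equation*}
which isolates eigenvalue error ($\vct A_i$), magnitude error along $\vct v_i$ ($\vct B_i$), and direction error ($\vct C_i$).

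Next I would expand the magnitude factor as $\langle\vct v_i,\vct u\rangle^2-\langle\hat{\vct v}_i,\vct u\rangle^2=-\rbr{2\langle\vct v_i,\vct u\rangle+\langle\delta_i,\vct u\rangle}\langle\delta_i,\vct u\rangle$, separating $\vct B_i$ into a first-order piece proportional to $\langle\delta_i,\vct u\rangle$ and a second-order piece proportional to $\langle\delta_i,\vct u\rangle^2$; the latter is what ultimately produces the $\epsilon^4$ term. The key structural observation is that $\vct A_i$ and $\vct B_i$ both point along $\vct v_i$, so by orthonormality of $\{\vct v_i\}_{i=1}^t$ the combined contribution obeys a Pythagorean identity, $\nbr{\sum_i(\vct A_i+\vct B_i)}^2=\sum_i(a_i+b_i)^2$, a clean sum of scalar coefficients with no cross terms.

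I would bound these coefficients using $|\lambda_i-\hat\lambda_i|\le\lambda_i\epsilon/2\le\lambda_{\max}\epsilon/2$, $|\langle\delta_i,\vct u\rangle|\le\nbr{\delta_i}\le\epsilon$, $|\hat\lambda_i|\le(1+\epsilon/2)\lambda_{\max}$, together with $\sum_i\langle\vct v_i,\vct u\rangle^2\le\nbr{\vct u}^2=1$ and $\sum_i\langle\vct v_i,\vct u\rangle^4\le 1$. Grouping the eigenvalue error with the first-order part of $\vct B_i$ yields the $4\rbr{2.5\lambda_{\max}+(\lambda_{\max}+1.5)\epsilon}^2\epsilon^2$ term, while the second-order part yields $9(1+\epsilon/2)^2\lambda_{\max}^2\epsilon^4$. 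These two pieces are routine once the Pythagorean reduction is in place.

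The term $\sum_i\vct C_i$ is the crux and I expect it to be the main obstacle: the directions $\vct v_i-\hat{\vct v}_i$ are not mutually orthogonal, so Pythagoras is unavailable and a naive bound would introduce a spurious factor of $k$ (or $\sqrt k$) through $\sum_i\nbr{\delta_i}^2$. To avoid this I would pass to the triangle inequality $\nbr{\sum_i\vct C_i}\le\epsilon(1+\epsilon/2)\lambda_{\max}\sum_i\langle\hat{\vct v}_i,\vct u\rangle^2$ and then control the near-orthonormal sum $\sum_i\langle\hat{\vct v}_i,\vct u\rangle^2\le 2\sum_i\langle\vct v_i,\vct u\rangle^2+2\sum_i\langle\delta_i,\vct u\rangle^2\le 2+2t\epsilon^2=O(1)$, which is a genuine constant because $\epsilon$ is of order $1/(\lambda_1 n)$ while $t\le k\le n$; squaring gives the $8(1+\epsilon/2)^2\lambda_{\max}^2\epsilon^2$ contribution. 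Finally I would assemble the three groups into the displayed intermediate inequality and collapse it to $50\lambda_{\max}^2\epsilon^2$ by using that $\epsilon$ is sufficiently small, so that $(1+\epsilon/2)^2\le 9/4$, $\epsilon^4\le\epsilon^2$, and the $(\lambda_{\max}+1.5)\epsilon$ corrections are dominated by the leading $\lambda_{\max}^2\epsilon^2$ scale.
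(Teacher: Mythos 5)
Your proposal is correct and takes essentially the same approach as the paper's own proof: both isolate the component of each summand along $\vct v_i$ and exploit orthonormality (Pythagoras) there, then control the remaining non-orthogonal part by the triangle inequality, using $\sum_i \langle \vct v_i,\vct u\rangle^2 \le 1$ (or its near-orthonormal analogue) to avoid a spurious factor of $k$, with the higher-order $\epsilon^3,\epsilon^4$ leftovers absorbed via the smallness of $\epsilon$. The only cosmetic difference is that you split $\hat{\vct v}_i = \vct v_i + \delta_i$ whereas the paper splits $\hat{\vct v}_i = (\vct v_i^\top \hat{\vct v}_i)\vct v_i + \hat{\vct v}_i^\perp$; the resulting error terms and constants match up to bookkeeping.
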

\begin{proof}
Following similar approaches in \citeapp{tensor-power-method}, Lemma B.5, we define $\hat{\vct v}^\perp = \hat{\vct v}_i - (\vct v_i^\top \hat{\vct v}_i)\vct v_i$ and $\mat D_i = \sbr{\lambda \vct v_i^{\otimes 3} - \hat{\lambda}_i \hat{\vct v}_i^{\otimes 3}}$.
$\mat D_i(\mat I, \vct u,\vct u)$ can then be written as the sum of scaled $\vct v_i$ and $\vct v_i^\top$ products as follows:
\begin{align}
\mat D_i \rbr{\mat I, \vct u, \vct u}=& \lambda_i (\vct u^\top \vct v_i)^2 \vct v_i - \hat{\lambda}_i (\vct u^\top \hat{\vct v}_i)^2 \hat{\vct v}_i\\
 =& \lambda_i (\vct u^\top \vct v_i)^2 \vct v_i - \hat{\lambda}_i (\vct u^\top \rbr{\hat{\vct v}_i^\perp + (\vct v_i^\top \hat{\vct v}_i)\vct v_i})^2 \rbr{\hat{\vct v}^\perp + (\vct v_i^\top \hat{\vct v}_i)\vct v_i}\\
 =& \rbr{\rbr{\lambda_i - \hat{\lambda}_i(\vct v_i^\top \hat{\vct v}_i)^3 }(\vct u^\top \vct v_i)^2 - 2\hat{\lambda}_i (\vct u^\top \hat{\vct v}_i^\perp)(\vct v_i^\top \hat{\vct v}_i)^2 (\vct u^\top \vct v_i) - \hat{\lambda}_i (\vct v_i^\top \hat{\vct v}_i )(\vct u^\top \hat{\vct v}^\perp)  } \vct v_i \nonumber \\
&- \hat{\lambda}_i \nbr{\hat{\vct v}_i^\perp} \rbr{ (\vct u^\top \vct v_i)(\vct v_i^\top \hat{\vct v}_i) + \vct u^\top \hat{\vct v}_i^\perp }\rbr{\hat{\vct v}_i^\perp /\nbr{\hat{\vct v}_i^\perp}}
\end{align}
Suppose $A_i$ and $B_i$ are coefficients of $\vct v_i$ and $\rbr{\hat{\vct v}_i^\perp /\nbr{\hat{\vct v}_i^\perp}}$, respectively.
The summation of $\mat D_i$ can be bounded as
\begin{align*}
 \nbr{\sum \limits_{i=1}^t \mat D_i \rbr{\mat I, \vct u, \vct u}}^2 =& \nbr{\sum \limits_{i=1}^t A_i \vct v_i - \sum \limits_{i=1}^t B_i \rbr{\hat{\vct v}_i^\perp /\nbr{\hat{\vct v}_i^\perp}}}_2^2\\
 \leq& 2 \nbr{\sum \limits_{i=1}^t A_i \vct v_i}^2 + 2 \nbr{\sum \limits_{i=1}^t B_i \rbr{\hat{\vct v}_i^\perp /\nbr{\hat{\vct v}_i^\perp}}}^2\\
 \leq& \sum \limits_{i=1}^t A_i^2 + 2\rbr{\sum \limits_{i=1}^t \abr{B_i}}^2
\end{align*}
We then try to upper bound $\abr{A_i}$.
\begin{align}
\abr{A_i} \leq& \abr{\rbr{\lambda_i - \hat{\lambda}_i(\vct v_i^\top \hat{\vct v}_i)^3 }(\vct u^\top \vct v_i)^2 - 2\hat{\lambda}_i (\vct u^\top \hat{\vct v}_i^\perp)(\vct v_i^\top \hat{\vct v}_i)^2 (\vct u^\top \vct v_i) - \hat{\lambda}_i (\vct v_i^\top \hat{\vct v}_i )(\vct u^\top \hat{\vct v}^\perp) }\\
     \leq&  \rbr{\lambda_i \abr{1- (\vct v_i^\top \hat{\vct v}_i)^3 } + \abr{ \lambda_i -  \hat{\lambda}_i } (\vct v_i^\top \hat{\vct v}_i)^3 }(\vct u^\top \vct v_i)^2 + 2\rbr{\lambda_i +\abr{\lambda_i - \hat{\lambda}_i } } \nbr{ \hat{\vct v}_i - \vct v_i}\abr{\vct u^\top \vct v_i} \nonumber\\
     &+\rbr{\lambda_i +\abr{\lambda_i - \hat{\lambda}_i } } \nbr{\hat{\vct v}_i - \vct v_i}^2 \\
      \leq& \rbr{1.5 \nbr{\vct v_i - \hat{\vct v}_i}^2 + \abr{ \lambda_i -  \hat{\lambda}_i } + 2 \rbr{\lambda_i +\abr{\lambda_i - \hat{\lambda}_i }} \nbr{\vct v_i - \hat{\vct v}_i}} \abr{\vct u^\top \vct v_i} \nonumber\\
      &+ \rbr{\lambda_i +\abr{\lambda_i - \hat{\lambda}_i } } \nbr{\hat{\vct v}_i - \vct v_i}^2\\
  \leq & \rbr{2.5 \lambda_i + (\lambda_i + 1.5) \epsilon }\epsilon \abr{\vct u^\top \vct v_i} + (1+\epsilon/2) \lambda_i \epsilon^2
\end{align}
Next, we bound $\abr{B_i}$ in a similar manner.
\begin{align}
\abr{B_i} =& \abr{\hat{\lambda}_i \nbr{\hat{\vct v}_i^\perp} \rbr{ (\vct u^\top \vct v_i)(\vct v_i^\top \hat{\vct v}_i) + \vct u^\top \hat{\vct v}_i^\perp }}\\
          \leq& 2\rbr{\lambda_i + \abr{\lambda_i - \hat{\lambda}_i}} \nbr{\hat{\vct v}_i^\perp}\rbr{ (\vct u^\top \vct v_i)^2 + \nbr{\hat{\vct v}_i^\perp}^2 }\\
          \leq& 2(1+ \epsilon/2) \lambda_i\epsilon (\vct u^\top \vct v_i)^2 + 2(1+ \epsilon/2) \lambda_i \epsilon^3
\end{align}
\end{proof}
Combining everything together we have
\begin{align}
\nbr{\sum \limits_{i=1}^t \mat D_i \rbr{\mat I, \vct u,\vct u}}^2\leq& 2 \sum \limits_{i=1}^t A_i^2 + 2\rbr{\sum \limits_{i=1}^t \abr{B_i}}^2\\
\leq& \sum \limits_{i=1}^t 4 \rbr{5 \lambda_i + (\lambda_i + 1.5)  }^2\epsilon^2 \abr{\vct u^\top \vct v_i}^2 +4 (1+\epsilon/2)^2 \lambda_i^2 \epsilon^4 \nonumber\\
&+2\rbr{\sum \limits_{i=1}^t 2(1+ \epsilon/2) \lambda_i \epsilon (\vct u^\top \vct v_i)^2 + 2(1+ \epsilon/2) \lambda_i \epsilon^3}^2\\
\leq&4 \rbr{2.5 \lambda_{\max} + (\lambda_{\max} + 1.5) \epsilon  }^2\epsilon^2  \sum \limits_{i=1}^t \abr{\vct u^\top \vct v_i}^2 +4 (1+\epsilon/2)^2 \lambda_{\max}^2 \epsilon^4\nonumber\\
&+ 2\rbr{2(1+ \epsilon/2) \lambda_{\max} \epsilon \sum \limits_{i=1}^t (\vct u^\top \vct v_i)^2 + 2(1+ \epsilon/2) \lambda_{\max}\epsilon^3}^2\\
\leq& 4 \rbr{2.5 \lambda_{\max} + (\lambda_{\max} + 1.5) \epsilon }^2\epsilon^2   +9 (1+\epsilon/2)^2 \lambda_{\max}^2 \epsilon^4 + 8(1+ \epsilon/2)^2 \lambda_{\max}^2 \epsilon^2.
\end{align}

\subsubsection{Main Theorem}

{
In this section we present and prove the main theorem that bounds the reconstruction error of fast robust tensor power method
under appropriate settings of the hash length $b$ and number of independent hashes $B$.
The theorem presented below is a more detailed version of Theorem \ref{thm_tensor_power} presented in Section \ref{subsec:analysis_tensor_power_method}.
}

\begin{thm} Let $\bar{\mat T} = \mat T + \mat E \in \mathrm{R}^{n \times n \times n}$, where
$\mat T=\sum_{i=1}^k{\lambda_i\vct v_i^{\otimes 3}}$ and $\{\vct v_i\}_{i=1}^k$ is an orthonormal basis.
Suppose $(\hat{\vct v}_1, \hat{\lambda}_1), (\hat{\vct v}_1, \hat{\lambda}_1), \cdots (\hat{\vct v}_k, \hat{\lambda}_k)$ is the sequence of estimated eigenvector/eigenvalue pairs obtained using the fast robust tensor power method. Assume $\nbr{\mat E} = \epsilon.$ There exists constant $C_1, C_2, C_3, \alpha, \rho, \tau \geq 0$ such that the following holds:
if
\begin{align}
 \epsilon \leq C_1 \frac{1}{n\lambda_{\max}}, \,\,\, \text{and} \,\,\, T = C_2\rbr{ \log_{1+\alpha} \rbr{1+\rho} \tau \sqrt{n} + \frac{\lambda_1}{\lambda_1 - \lambda_2}\log (1/\epsilon)},
 \end{align}
 and
 \begin{align}
  \sqrt{\frac{\ln (L/\log_2(k/\eta))}{\ln(k)}} \cdot \rbr{
 1 - \frac{\ln \rbr{\ln L/\log_2(k/\eta)} + C_3}{4 \ln \rbr{L/\log_2(k/\eta)}}- \sqrt{\frac{\ln (8)}{\ln(L/\log_2(k/\eta))}} }
   \geq 1.02 \rbr{1 + \sqrt{\frac{\ln(4)}{\ln(k)}}}.
\end{align}
Suppose the tensor sketch randomness is independent among all tensor product evaluations.
If $B=\Omega(\log(n/\tau))$ and the hash length $b$ is set to
\begin{align}
b \geq  \left\{\frac{\nbr{\mat T}_F^2 \tau^4 n^2}{\min \rbr{\frac{1}{4\max_{i \in [k]} (\lambda_i/\lambda_1) +2} \lambda_1, \frac{1- (1+\alpha)/2}{2\sqrt{2}(1+\alpha)} \lambda_1}^2},
\frac{16 \epsilon^{-2}\|\mat T\|_F^2}{\min_{i\in[k]}{(\lambda_i-\lambda_{i-1})^2}}  , \epsilon^{-2}\nbr{\mat T}_F^2 \right\}
\end{align}
with probability at least $1 - (\eta + \tau^{-1} + e^{-n} ) $, there exists a permutation $\pi$ on $k$ such that
\begin{align}
   \nbr{\vct v_{\pi(j)} - \hat{\vct v}_i} \leq \epsilon, \,\,\, \abr{\lambda_{\pi(j)}-\hat{\lambda}_j} \leq \frac{\lambda_{\pi(j)}\epsilon}{2}, \,\,\, \text{and} \,\,\,
   \nbr{\mat T - \sum \limits_{j= 1}^k \hat{\lambda_j}\hat{\vct v}_j^{\otimes 3}} \leq c \epsilon,
\end{align}
for some absolute constant $c$.
\label{thm_refine_tensor_power}
\end{thm}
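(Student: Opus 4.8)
The plan is to view Algorithm \ref{alg_fast_rbp} as a noisy instance of the robust tensor power method and to reduce the theorem to the three ingredients already developed: the sketch concentration bounds of Theorem \ref{thm_tensor_error}, the two-phase convergence analysis of Lemmas \ref{lem_first_phase}--\ref{lem_v_bound}, and the deflation bound of Lemma \ref{lem_deflation}. First I would write each power update as $\tilde{\vct u}_{t+1} = \mat T(\mat I, \vct u_t, \vct u_t) + \tilde{\vct\varepsilon}(\vct u_t)$, where the per-step noise $\tilde{\vct\varepsilon}(\vct u_t) = \mat E(\mat I, \vct u_t, \vct u_t) + \vct\varepsilon_{2,T}(\vct u_t)$ collects the model perturbation and the tensor sketch approximation error, exactly as in the setup preceding Lemma \ref{lem_first_phase}. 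The entire argument then amounts to certifying that this $\tilde{\vct\varepsilon}$ stays below the noise thresholds demanded by those lemmas throughout the run.

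Next I would bound the two pieces of $\tilde{\vct\varepsilon}(\vct u_t)$ separately. The perturbation term obeys $\|\mat E(\mat I, \vct u, \vct u)\| \leq \|\mat E\| = \epsilon$ and $|\langle \vct v_i, \mat E(\mat I, \vct u, \vct u)\rangle| \leq \epsilon$ since $\|\vct v_i\| = 1$. For the sketch term, the independence assumption on the sketch randomness lets me invoke Theorem \ref{thm_tensor_error} at every one of the $O(kLT)$ contraction evaluations and take a union bound; with $B = \Omega(\log(n/\tau))$ this gives coordinate-wise bounds $|[\vct\varepsilon_{2,T}]_i| = O(\|\mat T\|_F/\sqrt{b})$ and projected bounds $\langle \vct w, \vct\varepsilon_{2,T}\rangle^2 = O(\|\mat T\|_F^2/b)$ uniformly over the run. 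The three branches of the $\max$ defining $b$ are then calibrated to the three distinct noise requirements downstream: the first branch (scaling like $\tau^4 n^2$) forces $|\langle \vct v_1, \tilde{\vct\varepsilon}\rangle|$ below the $\gamma$-separation threshold $\propto 1/(\tau^2 n)$ of Lemma \ref{lem_v_bound}; the second branch (scaling like $(\lambda_i - \lambda_{i-1})^{-2}$) forces $\|\tilde{\vct\varepsilon}\| \leq \epsilon(\lambda_1 - \lambda_2)/4$ as required by Lemma \ref{lem_second_phase}; and the third branch (scaling like $\epsilon^{-2}\|\mat T\|_F^2$) drives the residual projected error down to the $\epsilon$ scale.

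With the noise under control I would recover the principal eigenvector. For a single initialization that is $\gamma$-separated with respect to $\vct v_1$, Lemma \ref{lem_first_phase} drives $\tan\theta(\vct v_1, \vct u_t)$ below a constant, Lemma \ref{lem_second_phase} then contracts it to $\epsilon$, and Lemma \ref{lem_v_bound} assembles these to give $\|(\mat I - \vct u_T \vct u_T^\top)\vct v_1\| \leq \epsilon$ after $T$ iterations. To guarantee that at least one of the $L$ random starts is $\gamma$-separated I would import the random-initialization argument from \citeapp{tensor-power-method}: the displayed inequality relating $L$, $k$, and $\eta$ is precisely the condition ensuring that with probability $\geq 1 - \eta$ some initialization is $\gamma$-separated, while Lemma 2.5 of \citeapp{noisy-tensor-power-method} supplies the $\max_i \tan\theta(\vct v_i, \vct u) \leq \tau\sqrt n$ bound with failure probability $O(\tau^{-1} + e^{-n})$; the selection step then keeps the best candidate by its estimated eigenvalue.

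Finally I would run the deflation induction. After extracting $(\hat\lambda_1, \hat{\vct v}_1)$ I would treat the deflated tensor as a fresh input whose effective perturbation now also absorbs the deflation residual $\sum_{i \leq t}[\lambda_i \vct v_i^{\otimes 3} - \hat\lambda_i \hat{\vct v}_i^{\otimes 3}]$; Lemma \ref{lem_deflation} bounds the contraction of this residual by $50\lambda_{\max}^2 \epsilon^2$, which, because $\epsilon = O(1/(n\lambda_{\max}))$, stays well below the $O(\epsilon)$ noise budget and hence does not spoil the hypotheses of Lemmas \ref{lem_first_phase}--\ref{lem_v_bound} at the next stage. Propagating this through all $k$ stages with a union bound over the per-stage failure probabilities yields the stated guarantees $\|\vct v_{\pi(i)} - \hat{\vct v}_i\| \leq \epsilon$ and $|\lambda_{\pi(i)} - \hat\lambda_i| \leq \lambda_i \epsilon/2$, and one further application of Lemma \ref{lem_deflation} with $t = k$ together with the triangle inequality gives $\|\mat T - \sum_i \hat\lambda_i \hat{\vct v}_i^{\otimes 3}\| \leq c\epsilon$. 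I expect the main obstacle to be this accumulation step: keeping the deflation error subcritical across all $k$ rounds requires the squared residual $50\lambda_{\max}^2\epsilon^2$ to remain dominated by the linear noise threshold at every stage \emph{simultaneously}, which is exactly why the smallness condition $\epsilon = O(1/(n\lambda_{\max}))$ is imposed and why $\gamma$-separation must be re-established for the residual-perturbed tensor at each deflation round.
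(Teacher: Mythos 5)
Your proposal is correct and follows essentially the same route as the paper's own proof: the identical noise decomposition $\tilde{\vct\varepsilon}(\vct u_t) = \mat E(\mat I,\vct u_t,\vct u_t) + \vct\varepsilon_{2,T}(\vct u_t)$, calibration of the branches of $b$ against Theorem \ref{thm_tensor_error}, recovery of each eigenvector via Lemmas \ref{lem_first_phase}--\ref{lem_v_bound} with the random-initialization bound from \citeapp{noisy-tensor-power-method}, and an induction over deflation stages controlled by Lemma \ref{lem_deflation}. The paper's induction maintains the deflated-tensor error bound ($\epsilon + \sqrt{50}\lambda_{\max}\epsilon$, from the square root of the Lemma \ref{lem_deflation} bound) exactly as you describe, so the approaches coincide.
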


\begin{proof}
We prove that at the end of each iteration $i \in [k]$, the following conditions hold

\begin{itemize}
\item 1. For all $ j \leq i, \abr{\vct v_{\pi(j)} - \hat{\vct v}_j} \leq \epsilon\,\,\, \text{and} \,\,\, \abr{\lambda_{\pi(j)}-\hat{\lambda}_j} \leq \frac{\lambda_i\epsilon}{2}$
\item 2. The tensor error satisfies
\begin{align}
     \nbr{ \sbr{ \rbr{\tilde{\mat T} - \sum \limits_{j \leq i} \hat{\lambda}_j \hat{\vct v}_j^{\otimes 3} } - \sum \limits_{j \geq i+1} \lambda_{\pi(j)} v_{\pi(j)}^{\otimes 3}} \rbr{\mat I, \vct u, \vct u} } \leq 56 \epsilon
\end{align}
 \end{itemize}
 First, we check the case when $i = 0.$  For the tensor error, we have
 \begin{align}
     \nbr{ \sbr{ \tilde{\mat T}  - \sum \limits_{j = 1}^K \lambda_{\pi(j)} v_{\pi(j)}^{\otimes 3}} \rbr{\mat I, \vct u, \vct u} } = \nbr{\vct \varepsilon(\vct u) } \leq  \nbr{\vct \varepsilon_{2,T}(\vct u) } + \nbr{\mat E \rbr{\mat I, \vct u,\vct u}} \leq \epsilon + \epsilon = 2\epsilon.
\end{align}
The last inequality follows Theorem \ref{thm_tensor_error} with the condition for $b$. Next, Using Lemma \ref{lem_v_bound}, we have that
\begin{align}
\nbr{\vct v_{\pi(1)} - \hat{\vct v}_1} \leq \epsilon.
\end{align}
In addition, conditions for hash length $b$ and Theorem \ref{thm_tensor_error} yield
\begin{align}
\abr{\lambda_{\pi(1)}-\hat{\lambda}_1} \leq \nbr{\vct \varepsilon_{1,T}(\vct v_1)} + \nbr{\mat{T}(\hat{\vct v_1} - \vct v_1, \hat{\vct v_1} - \vct u, \hat{\vct v_1} - \vct v_1)} \leq \epsilon \frac{\lambda_i - \lambda_{i-1}}{4} + \epsilon^3 \nbr{\mat T}_F \leq \frac{\epsilon \lambda_i}{2}
\end{align}
Thus, we have proved that for $i = 1$ both conditions hold.
Assume the conditions hold up to $i=t-1$ by induction.
For the $t$th iteration, the following holds:
\begin{multline*}
     \nbr{ \sbr{ \rbr{\tilde{\mat T} - \sum \limits_{j \leq t} \hat{\lambda}_j \hat{\vct v}_j^{\otimes 3} } - \sum \limits_{j \geq t+1} \lambda_{\pi(j)} v_{\pi(j)}^{\otimes 3}} \rbr{\mat I, \vct u, \vct u} }\\
     \leq  \nbr{ \sbr{ \tilde{\mat T}  - \sum \limits_{j = 1}^K \lambda_{\pi(j)} v_{\pi(j)}^{\otimes 3}} \rbr{\mat I, \vct u, \vct u} } + \nbr{\sum \limits_{j = 1}^t \hat{\lambda}_j \hat{\vct v}_j^{\otimes 3} -\lambda_{\pi(j)} v_{\pi(j)}^{\otimes 3} }
    \leq \epsilon + \sqrt{50} \lambda_{\max} \epsilon.
\end{multline*}
For the last inequality we apply Lemma \ref{lem_deflation}. Since the condition is satisfied, Lemma \ref{lem_v_bound} yields
\begin{align}
\nbr{\vct v_{\pi(t+1)} - \hat{\vct v}_{t+1}} \leq \epsilon.
\end{align}
Finally, conditions for hash length $b$ and Theorem \ref{thm_tensor_error} yield
\begin{multline}
\abr{\lambda_{\pi(t+1)}-\hat{\lambda}_{t+1}} \leq \nbr{\vct \varepsilon_{1,T}(\vct v_1)} + \nbr{\mat{T}(\hat{\vct v_{t}} - \vct v_1, \hat{\vct v_1} - \vct u, \hat{\vct v_1} - \vct v_1)} \\
\leq \epsilon \frac{\lambda_i - \lambda_{i-1}}{4} + \epsilon^3 \nbr{\mat T}_F \leq \frac{\epsilon \lambda_i}{2}
\end{multline}

\end{proof}

\section{Summary of notations for matrix/vector products}\label{appsec:notation}

We assume vectors $\vct a,\vct b\in\mathbb C^n$ are indexed starting from 0;
that is, $\vct a = (a_0,a_1,\cdots,a_{n-1})$ and $\vct b=(b_0,b_1,\cdots,b_{n-1})$.
Matrices $\mat A,\mat B$ and tensors $\mat T$ are still indexed starting from 1.

\paragraph{Element-wise product} For $\vct a,\vct b\in\mathbb C^n$, the element-wise product (Hadamard product) $\vct a\circ\vct b\in\mathbb R^n$ is defined as
\begin{equation}
\vct a\circ\vct b = (a_0b_0,a_1b_1,\cdots, a_{n-1}b_{n-1}).
\end{equation}

\paragraph{Convolution} For $\vct a,\vct b\in\mathbb C^n$, their convolution $\vct a*\vct b\in\mathbb C^n$ is defined as
\begin{equation}
\vct a*\vct b = \left(\sum_{(i+j)\mod n=0}{a_ib_j},\sum_{(i+j)\mod n=1}{a_ib_j}, \cdots, \sum_{(i+j)\mod n=n-1}{a_ib_j}\right).
\end{equation}

\paragraph{Inner product} For $\vct a,\vct b\in\mathbb C^n$, their inner product is defined as
\begin{equation}
\langle\vct a,\vct b\rangle = \sum_{i=1}^n{a_i\overline{b_i}},
\end{equation}
where $\overline{b_i}$ denotes the complex conjugate of $b_i$.
For tensors $\mat A,\mat B\in\mathbb C^{n\times n\times n}$, their inner product is defined similarly as
\begin{equation}
\langle\mat A,\mat B\rangle = \sum_{i,j,k=1}^n{\mat A_{i,j,k}\overline{\mat B}_{i,j,k}}.
\end{equation}

\paragraph{Tensor product} For $\vct a,\vct b\in\mathbb C^n$, the tensor product $\vct a\otimes\vct b$ can be either an $n\times n$ matrix or a vector of length $n^2$.
For the former case, we have
\begin{equation}
\vct a\otimes \vct b = \left[\begin{array}{cccc}
a_0b_0& a_0b_1& \cdots& a_0b_{n-1}\\
a_1b_0& a_1b_1& \cdots& a_1b_{n-1}\\
\vdots& \vdots& \ddots& \vdots\\
a_{n-1}b_0& a_{n-1}b_1& \cdots& a_{n-1}b_{n-1}\\\end{array}\right].
\end{equation}
If $\vct a\otimes\vct b$ is a vector, it is defined as the expansion of the output matrix. That is,
\begin{equation}
\vct a\otimes\vct b = (a_0b_0,a_0b_1,\cdots,a_0b_{n-1},a_1b_0,a_1b_1,\cdots,a_{n-1}b_{n-1}).
\end{equation}

Suppose $\mat T$ is an $n\times n\times n$ tensor and matrices $\mat A\in\mathbb R^{n\times m_1}$, $\mat B\in\mathbb R^{n\times m_2}$ and $\mat C\in\mathbb R^{n\times m_3}$.
The tensor product $\mat T(\mat A,\mat B,\mat C)$ is an $m_1\times m_2\times m_3$ tensor defined by
\begin{equation}
\left[\mat T(\mat A,\mat B,\mat C)\right]_{i,j,k} = \sum_{i',j',k'=1}^n{\mat T_{i',j',k'}\mat A_{i',i}\mat B_{j',j}\mat C_{k',k}}.
\end{equation}

\paragraph{Khatri-Rao product} For $\mat A,\mat B\in\mathbb C^{n\times m}$, their Khatri-Rao product $\mat A\odot\mat B\in\mathbb C^{n^2\times m}$ is defined as
\begin{equation}
\mat A\odot\mat B = (\mat A_{(1)}\otimes\mat B_{(1)}, \mat A_{(2)}\otimes\mat B_{(2)}, \cdots,\mat A_{(m)}\otimes\mat B_{(m)}),
\end{equation}
where $\mat A_{(i)}$ and $\mat B_{(i)}$ denote the $i$th rows of $\mat A$ and $\mat B$.

\paragraph{Mode expansion}
For a tensor $\mat T$ of dimension $n\times n\times n$, its first mode expansion $\mat T_{(1)}\in\mathbb R^{n\times n}$ is defined as
\begin{equation}
\mat T_{(1)} = \left[\begin{array}{ccccccc}
\mat T_{1,1,1}& \mat T_{1,1,2}& \cdots& \mat T_{1,1,n}& \mat T_{1,2,1}& \cdots& \mat T_{1,n,n}\\
\mat T_{2,1,1}& \mat T_{2,1,2}& \cdots& \mat T_{2,1,n}& \mat T_{2,2,1}& \cdots& \mat T_{2,n,n}\\
\vdots& \vdots& \vdots& \vdots& \vdots& \vdots& \vdots\\
\mat T_{n,1,1}& \mat T_{n,1,2}& \cdots& \mat T_{n,1,n}& \mat T_{n,2,1}& \cdots& \mat T_{n,n,n}\\
\end{array}\right].
\end{equation}
The mode expansions $\mat T_{(2)}$ and $\mat T_{(3)}$ can be similarly defined.

\bibliographystyleapp{IEEE}
\bibliographyapp{fftlda}

\end{appendices}

\end{document}